\documentclass{article}

% if you need to pass options to natbib, use, e.g.:
%     \PassOptionsToPackage{numbers, compress}{natbib}
% before loading neurips_2021

% ready for submission
% \usepackage[preprint]{neurips_2021}

% to compile a preprint version, e.g., for submission to arXiv, add add the
% [preprint] option:
%     \usepackage[preprint]{neurips_2021}

% to compile a camera-ready version, add the [final] option, e.g.:
%     \usepackage[final]{neurips_2021}

% to avoid loading the natbib package, add option nonatbib:
\usepackage[nonatbib,preprint]{neurips_2021}

\usepackage[utf8]{inputenc} % allow utf-8 input
\usepackage[T1]{fontenc}    % use 8-bit T1 fonts
\usepackage{hyperref}       % hyperlinks
\usepackage{url}            % simple URL typesetting
\usepackage{booktabs}       % professional-quality tables
\usepackage{amsfonts}       % blackboard math symbols
\usepackage{nicefrac}       % compact symbols for 1/2, etc.
\usepackage{microtype}      % microtypography
\usepackage{xcolor}         % colors

\newcommand{\f}[1]{{ \tiny #1}}

%%% Begin preamble
\usepackage{graphicx}
\usepackage{latexsym}
\usepackage{amsmath}
\usepackage{amssymb}
\usepackage{enumitem}
\usepackage{url}

\usepackage{float}
\usepackage{tikz}
\usepackage{fancyvrb}
\usepackage{listings}
\usepackage{xcolor}
\usepackage{color}
\usepackage{soul}
\usepackage{multirow}
\usepackage{xspace}
\usepackage{multicol}
\usepackage{subfig}
\usepackage{booktabs}
\usepackage{caption}

\usepackage{mathtools}
\usepackage{algorithm}
\usepackage{algorithmicx}
\usepackage{algpseudocode}
\usepackage{subfig}

\algdef{SE}[DOWHILE]{Do}{doWhile}{\algorithmicdo}[1]{\algorithmicwhile\ #1}%
\DeclarePairedDelimiter\ceil{\lceil}{\rceil}

\newtheorem{theorem}{Theorem}

\newtheorem{lemma}[theorem]{Lemma}
\newtheorem{aplemma}{Lemma}[section]
\newtheorem{definition}{Definition}

\newenvironment{proof}{\medskip \par \noindent {\bf Proof}\ }{\hfill
	$\Box$
	\medskip \par}

\DeclareMathOperator*{\minimize}{minimize}

\DeclareMathOperator{\argmin}{argmin}

\renewcommand{\vec}[1]{\textbf{#1}}

\providecommand{\va}{\ensuremath{\vec{a}}}
\providecommand{\vb}{\ensuremath{\vec{b}}}

\providecommand{\vf}{\ensuremath{\vec{f}}}

\newcommand{\cut}{\textbf{cut}}

%% end preamble

\title{Approximate Decomposable Submodular Function Minimization for Cardinality-Based Components\thanks{This paper is an adaptation (for NeurIPS 2021) of an earlier manuscript titled \emph{Augmented Sparsifiers for Generalized Hypergraph Cuts}~\cite{benson2020augmented}, with a new emphasis and exposition, and updated proofs for several results.}}

% The \author macro works with any number of authors. There are two commands
% used to separate the names and addresses of multiple authors: \And and \AND.
%
% Using \And between authors leaves it to LaTeX to determine where to break the
% lines. Using \AND forces a line break at that point. So, if LaTeX puts 3 of 4
% authors names on the first line, and the last on the second line, try using
% \AND instead of \And before the third author name.

\author{Nate Veldt\thanks{This research was performed while the author was a postdoctoral associate at Cornell University.}\\
	%	Department of Computer Science and Engineering\\
	\texttt{nveldt@tamu.edu} \\
	Texas A\&M University
	\And
	Austin R. Benson\\
	\texttt{arb@cs.cornell.edu}\\
	Cornell University 
	\And
	Jon Kleinberg \\
	\texttt{kleinberg@cornell.edu}\\
	Cornell University 
}
\begin{document}

\maketitle

\begin{abstract}
	Minimizing a sum of simple submodular functions of limited support is a special case of general submodular function minimization that has seen numerous applications in machine learning. We develop fast techniques for instances where components in the sum are cardinality-based, meaning they depend only on the size of the input set. This variant is one of the most widely applied in practice, encompassing, e.g., common energy functions arising in image segmentation and recent generalized hypergraph cut functions. We develop the first approximation algorithms for this problem, where the approximations can be quickly computed via reduction to a sparse graph cut problem, with graph sparsity controlled by the desired approximation factor. Our method relies on a new connection between sparse graph reduction techniques and piecewise linear approximations to concave functions. Our sparse reduction technique leads to significant improvements in theoretical runtimes, as well as substantial practical gains in problems ranging from benchmark image segmentation tasks to hypergraph clustering problems. 
\end{abstract}

\section{Introduction}
Given a ground set $V$, a function $f \colon 2^V \rightarrow \mathbb{R}$ is {submodular} if for every $A,B \subseteq V$ it satisfies $f(A) + f(B) \geq f(A \cap B) + f(A \cup B)$. 
Submodular functions are ubiquitous in combinatorial optimization and machine learning, arising, e.g., in image segmentation~\cite{kohli2009robust}, hypergraph clustering~\cite{panli_submodular}, data subset selection~\cite{wei2015submodularity}, document summarization~\cite{lin-bilmes-2011-class}, and dense subgraph discovery~\cite{veldt2021generalized}.
Algorithms for minimizing submodular functions are well studied. Strongly-polynomial time algorithms for general submodular minimization exist~\cite{Orlin2009,Iwata:2001:CSP:502090.502096,Iwata:2009:SCA:1496770.1496903}, but their runtimes are impractical in most cases. The past decade has witnessed several advances in faster algorithms for \emph{decomposable submodular function minimization} (DSFM), i.e., minimizing a sum of simpler submodular functions~\cite{ene2017decomposable,kolmogorov2012minimizing,stobbe2010efficient,li2018revisiting,nishihara2014convergence,ene2015random,jegelka2011fast,jegelka2013reflection}. This problem is defined by identifying a set $E \subseteq 2^V$ of subsets of the ground set. The goal is then to solve %the following optimization problem: 
\begin{equation}
	\label{eq:dsfm}
	{\textstyle \minimize_{S \subseteq V} \, f(S) = \sum_{e \in {E}} f_e(S \cap e),}
\end{equation}
where for each $e \in {E}$, $f_e$ is a submodular function supported on a subset $e \subseteq V$. Functions of this form often arise as energy functions in computer vision~\cite{kohli2009robust,kolmogorov2004what,freedman2005energy}, and generalized cut functions for hypergraph clustering and learning~\cite{fountoulakis2021local,panli2017inhomogeneous,panli_submodular,Liu-2021-localhyper,veldt2020hypercuts,veldt2020hyperlocal}, among other applications.

This paper focuses on \emph{cardinality-based} DSFM (Card-DSFM), where every component $f_e$ in the sum is a concave cardinality function, i.e., $f_e(A) = g_e(|A|)$ for some concave function $g_e$.
A single concave cardinality function is effectively a function of one variable and is trivial to minimize. However, set functions obtained via sums of these functions are much more complex and have broad modeling power, making this one of the most widely studied and applied variants since the earliest work on DSFM~\cite{kolmogorov2012minimizing,stobbe2010efficient}.
In terms of theory, previous research has addressed specialized runtimes and solution techniques~\cite{kolmogorov2012minimizing, stobbe2010efficient,kohli2009robust,veldt2020hypercuts}.
In practice, cardinality-based decomposable submodular functions frequently arise as higher-order energy functions in computer vision~\cite{kohli2009robust} and set cover functions~\cite{stobbe2010efficient}. The most widely used generalized hypergraph cut functions are also cardinality based~\cite{panli_submodular,Liu-2021-localhyper, veldt2020hypercuts,veldt2020hyperlocal,Agarwal2006holearning}. Even previous research on algorithms for the more general DSFM problem tends to focus on cardinality-based examples in experimental results~\cite{ene2017decomposable,stobbe2010efficient,jegelka2013reflection,li2018revisiting}.

{We present the first approximation algorithms for Card-DSFM, using a purely combinatorial approach that relies on approximately reducing~\eqref{eq:dsfm} to a \emph{sparse} graph cut problem. The fact that concave cardinality functions are representable by graph cuts has previously been noted in different contexts~\cite{kohli2009robust,stobbe2010efficient,veldt2020hypercuts}---this can be accomplished by combining simple graph \emph{gadgets}, whose cut properties together model the function. However, previous techniques are limited in that they (i) focus only on {exact} reduction techniques, (ii) do not consider the density of the resulting graph, and (iii) do not address any questions regarding the optimality of such a reduction. Here we develop new {approximate} reduction methods leading to a {sparse} graph, which we show is {optimally} sparse in terms of the standard gadget reduction strategy. We show that representing a concave cardinality function with a sparse graph is equivalent to approximating a concave function with a piecewise linear curve, where the number of linear pieces determines the sparsity of the reduced graph. We develop a new algorithm for the resulting piecewise linear approximation problem, and prove new bounds on the number of edges needed to model different concave functions that arise in practice.}
Combining our reduction strategy with state of the art algorithms for maximum flow~\cite{goldberg1998beyond,lee2014path,brand2021minimum,gao2021fully} leads to fast runtime guarantees for finding approximate solutions to Card-DSFM. Our algorithm is also easy to implement and achieves substantial practical improvements on benchmark image segmentation experiments~\cite{jegelka2013reflection,ene2017decomposable,li2018revisiting} and algorithms for hypergraph clustering~\cite{veldt2020hyperlocal}. 

\section{Background on Graph Reductions}
A nonnegative set function $f\colon 2^V \rightarrow \mathbb{R}^+$ on a ground set $V$ is \emph{graph reducible} if there exists a directed graph $G = (V_f, E_f)$ on a larger node set $V_f = V\cup \mathcal{A} \cup \{s,t\}$, which includes an auxiliary node set $\mathcal{A}$ as well as source and sink nodes $s$ and $t$, whose $s$-$t$ cut structure models $f$. More precisely, $f$ is graph reducible if for every $S \subseteq V$ we have
\begin{equation}
	\label{eq:graphreduce}
	f(S) = \minimize_{T \subseteq \mathcal{A}}\,\, \cut_G(\{s\}\cup S \cup T) ,
\end{equation}
where $\cut_G$ is the cut function on the graph $G$. In words, this says that evaluating $f(S)$ is equivalent to finding the minimum $s$-$t$ cut penalty that could be obtained by placing $S$ on the $s$-side of the cut and $V\backslash S$ on the $t$-side. 
Given such a graph, $f$ can be minimized by finding the minimum $s$-$t$ cut set $U^* \subseteq V \cup \mathcal{A}$ in $G$ and taking $S^* = V \cap U^*$. When~\eqref{eq:graphreduce} holds, we say that $G$ \emph{models} $f$.

\paragraph{Graph reduction strategy for Card-DSFM.}
In general, the function $f(S) = \sum_{e \in E} f_e(S \cap e)$ is \emph{not} a concave cardinality function, even if individual components $f_e$ in the sum are. However, in order to show $f$ is graph reducible, it
suffices to find a small reduced graph $G_e$ for each $e \in E$ that models $f_e$ in the sense of~\eqref{eq:graphreduce}, and combine these together into a larger graph. {A number of related approaches for this task have been developed~\cite{stobbe2010efficient,kohli2009robust,veldt2020hypercuts}. We review our own previous reduction strategy~\cite{veldt2020hypercuts}, which we build on in this paper.}
Modeling $f_e$ can be accomplished by combining a set of \emph{cardinality-based (CB) gadgets}~\cite{veldt2020hypercuts}. For a set $e \subseteq V$ of size $k = |e|$, a CB-gadget is a small graph parameterized by positive weights $(a,b)$. Each $v \in e$ defines a node, and the gadget also involves a single auxiliary node $v_e$. For each $v \in e$, there is a directed edge $(v, v_e)$ with weight $a \cdot (k-b)$, and a directed edge $(v_e, v)$  with weight $a \cdot b$. The resulting graph gadget models the function 
\begin{equation}
	\label{acbfun}
	f_{a,b}(S) = a \cdot \min \{ |S| \cdot (k-b), (k-|S|) \cdot b \}.
\end{equation}
To see why, consider where we must place the auxiliary node $v_e$ when solving a minimum $s$-$t$ cut problem involving this small graph gadget. If we place $i = |S|$ nodes on the $s$-side, then placing $v_e$ on the $s$-side has a cut penalty of $ab(k-i)$, whereas placing $v_e$ on the $t$-side gives a penalty of $ai (k-b)$. To minimize the cut as in~\eqref{eq:graphreduce} for a fixed $S \subseteq e$, we choose the smaller of the two cut penalties. We previously showed that any cardinality-based submodular function $f_e$ on a ground set $e$ can be modeled by combining $|e|-1$ CB-gadgets~\cite{veldt2020hypercuts}.
{Analogously, Stobbe and Krause~\cite{stobbe2010efficient} showed that a concave cardinality function on a ground set $e$ can be decomposed into a sum of $|e|-1$ \emph{threshold potentials}~\cite{stobbe2010efficient} and can be represented by graph cuts, though they provided no explicit reduction strategy. Priori to this, Kohli et al.~\cite{kohli2009robust} highlighted a similar type of gadget for modeling a class of \emph{robust potential} functions for image segmentation, noting that multiple gadgets could be combined to model arbitrary concave functions. Using any of these techniques, the function $f_e$ can be modeled in the sense of~\eqref{eq:graphreduce} using a graph $G$ with $O(|e|)$ nodes and $O(|e|^2)$ edges. }

\section{Sparse Reductions via Piecewise Linear Approximation}
We now turn to a new and refined problem of finding \emph{sparse} and \emph{approximate} reduction strategies for Card-DSFM, which we prove can be cast as approximating a concave function with a piecewise linear curve.
All proofs are given in the appendix, where we also derive a similar graph reduction scheme that is slightly more efficient for modeling symmetric functions, i.e., $f_e(A) = f_e(e\backslash A)$.

\subsection{The sparse approximate reduction problem}
We first introduce a special parameterized function with broad modeling power.
\begin{definition}
	\label{def:kcgf}
	A $k$-node combined gadget function (CGF) of order $J$ is a function of the form:
	\begin{equation}
		\label{eq:kcg}
		\textstyle \ell(x) = z_0 \cdot (k-x) + z_k\cdot x + \sum_{j=1}^J a_j \min \{ x \cdot (k-b_j), (k-x) \cdot b_j \},
	\end{equation} 
	parameterized by scalars $z_0 \geq 0 $, $z_k \geq 0$, and vectors $\va = (a_j) \in \mathbb{R}^J_{>0}$ and $\vb = (b_j) \in \mathbb{R}^J_{>0}$, where $b_j < b_{j+1}$ for $j \in \{1,2, \hdots , J-1\}$ and $b_J < k$. 
\end{definition}
If we define a set function on a ground set $e$ of size $k$ by $f_e(S) = \ell(|S|)$, then $f_e$ is exactly the function that is modeled by combining $J$ CB-gadgets with parameters $(a_j, b_j)_{j = 1}^J$, 
and additionally placing a directed edge from a source node $s$ to each $v \in e$ of weight $z_0$, and an edge from $v \in e$ to a sink node $t$ with weight $z_k$. 
Previous reduction techniques amount to proving that any 
concave function $g$ on an interval $[0,k]$ (representing a submodular function $f_e(S) = g(|S|)$) matches some $k$-node CGF of order $k-1$ at integer inputs~\cite{veldt2020hypercuts,stobbe2010efficient}. We focus on a new sparse reduction problem.
\begin{definition}
	\label{def:sarp}
	Let $g \colon [0,k] \rightarrow \mathbb{R}^+$ be a nonnegative concave function and fix $\varepsilon \geq 0$. The sparsest approximate reduction (SpAR) problem seeks a $k$-node CGF $\ell$ with minimum order $J$ so that
	\begin{equation}
		\label{eq:approx}
		g(i) \leq \ell(i) \leq (1+\varepsilon)g(i) \,\text{ for all } i \in \{0,1,2, \hdots k\}.
	\end{equation}
\end{definition}
This is equivalent to finding the minimum number of CB-gadgets needed to approximately model a concave cardinality function. Thus, solving this problem provides the sparsest reduction in terms of a standard reduction strategy. Without loss of generality, Definition~\ref{def:sarp} is restricted to considering functions that upper bound $g$. Any other approximating function could be scaled to produce an upper bound leading to the same guarantees for Card-DSFM.
Importantly, we care about approximating the concave function $g$ only at integer inputs, since we are ultimately concerned with modeling the set function $f_e(S) = g(|S|)$. Satisfying $g(x) \leq \ell(x) \leq (1+\varepsilon)g(x)$ for all $x \in [0,k]$ is a much stronger requirement and may require more CB-gadgets than is actually necessary to approximate $f_e$. 

\paragraph{Connection to piecewise linear approximation}
Our first result establishes a precise one-to-one correspondence between a certain class of piecewise linear functions and combined gadget functions. 
\begin{lemma}
	\label{lem:equivalence}
	The $k$-node CGF in~\eqref{eq:kcg} is nonnegative, piecewise linear, concave, and has exactly $J+1$ linear pieces. Conversely, let $\ell' \colon [0,k] \rightarrow \mathbb{R}^+$ be concave and piecewise linear with $J+1$ linear pieces, and let $m_i$ be the slope of the $i$th linear piece and $B_i$ be the $i$th breakpoint. Then $\ell'$ is uniquely characterized as the $k$-node CGF parameterized by $a_i = \frac{1}{k}(m_{i}-m_{i+1})$ and $b_i = B_i$ for $i \in \{1,2, \hdots J\}$, $z_0 = \ell'(0)/k$, and $z_k = \ell'(k)/k$.
\end{lemma}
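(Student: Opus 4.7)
My plan is to treat the two directions of the lemma separately. For the forward direction, I would analyze each summand of~\eqref{eq:kcg}. Each ``tent'' $T_j(x):=\min\{x(k-b_j),(k-x)b_j\}$ is a pointwise minimum of two affine functions, hence concave; the two lines cross at $x=b_j$, so $T_j$ equals $x(k-b_j)$ on $[0,b_j]$ and $(k-x)b_j$ on $[b_j,k]$, and both expressions are nonnegative on $[0,k]$. The affine terms $z_0(k-x)$ and $z_k x$ are also nonnegative on $[0,k]$ since $z_0,z_k\geq 0$. Because $a_j>0$ and nonnegative combinations preserve concavity and nonnegativity, $\ell$ is a nonnegative, concave, piecewise linear function. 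Since the $b_j$'s are distinct and each $a_j>0$ produces a genuine downward slope jump of magnitude $a_j k$ at $b_j$, the breakpoint set of $\ell$ inside $(0,k)$ is exactly $\{b_1,\ldots,b_J\}$, giving exactly $J+1$ linear pieces.

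For the converse, let $\ell'$ be concave and piecewise linear on $[0,k]$ with $J+1$ pieces, breakpoints $B_1<\cdots<B_J$ (with $B_0:=0$, $B_{J+1}:=k$), and slopes $m_1>m_2>\cdots>m_{J+1}$ (strict by concavity together with the assumption of distinct pieces). Define the candidate CGF parameters as in the lemma: $a_i=(m_i-m_{i+1})/k>0$, $b_i=B_i$, $z_0=\ell'(0)/k\geq 0$, $z_k=\ell'(k)/k\geq 0$. These satisfy Definition~\ref{def:kcgf} (in particular $0<b_1<\cdots<b_J<k$ and $a_i>0$), so they define a $k$-node CGF $L$. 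Using $T_j(0)=T_j(k)=0$, I immediately obtain $L(0)=z_0 k=\ell'(0)$ and $L(k)=z_k k=\ell'(k)$. Differentiating $L$ on the interior of the $i$th piece, where $x>b_j$ for $j<i$ and $x<b_j$ for $j\geq i$, gives
\[ m_i^{L} \;=\; z_k-z_0+\sum_{j\geq i}a_j(k-b_j)-\sum_{j<i}a_j b_j, \]
and the consecutive difference telescopes to $m_i^{L}-m_{i+1}^{L}=a_i(k-b_i)+a_i b_i=a_i k=m_i-m_{i+1}$.

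To finish the converse, I would invoke the identity $\ell(k)-\ell(0)=m_{J+1}k+\sum_{j=1}^{J}(m_j-m_{j+1})B_j$, obtained by writing $\ell(k)-\ell(0)=\sum_i m_i(B_i-B_{i-1})$, substituting $m_i=m_{J+1}+\sum_{j\geq i}(m_j-m_{j+1})$, and swapping the order of summation. Applied to both $\ell'$ and $L$, the matching endpoint values, matching slope differences, and shared breakpoints force $m_{J+1}^{L}=m_{J+1}^{\ell'}$; telescoping back through the slope differences then yields $m_i^{L}=m_i^{\ell'}$ for all $i$. Combined with $L(0)=\ell'(0)$, this gives $L\equiv\ell'$ on $[0,k]$, and uniqueness of the representation follows because any other parameter choice would alter either a slope or a breakpoint of $L$. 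I expect the only mildly delicate step to be this last dimension-count argument: matching two endpoint values and $J$ slope differences amounts to $J+2$ conditions on the $J+2$-dimensional space of PL functions with these breakpoints, and one must verify they are consistent and independent, which is exactly what the endpoint identity above accomplishes.
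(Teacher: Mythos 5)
Your proof is correct, and while it follows the same two-direction skeleton as the paper's proof (forward direction first, then the converse by constructing the candidate CGF and matching it against $\ell'$ at breakpoints, endpoints, and slopes), the internal mechanisms differ in ways worth noting. In the forward direction the paper proceeds algebraically: it rewrites each tent via $\min\{x(k-b_j),(k-x)b_j\} = k\min\{x,b_j\} - xb_j$, derives an explicit formula for $\ell$ on each interval $[b_t,b_{t+1})$, and reads off that the per-interval slopes $z_k - z_0 - \beta + k\alpha_t$ are strictly decreasing; your structural argument (each tent is a concave min of two affine functions, nonnegative combinations preserve concavity, and each $a_j>0$ forces a slope drop of exactly $a_jk$ at $b_j$) reaches the same conclusions more conceptually and with less bookkeeping. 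The real divergence is in the converse. The paper computes the $i$th slope $\hat{m}_i$ of the constructed CGF in closed form and reduces it to $m_i$ through two explicit telescoping identities, namely $k\sum_{j\geq i} a_j = m_i - m_{J+1}$ and $k\sum_{j=1}^{J} a_jb_j = \ell_J - \ell_0 - m_{J+1}b_J$, where $\ell_j = \ell'(b_j)$. You instead observe that the slope \emph{differences} of $L$ match those of $\ell'$ trivially (the jump at $b_i$ is $a_ik = m_i - m_{i+1}$ by construction), and then anchor the absolute slope level with the single global identity $\ell(k)-\ell(0) = m_{J+1}k + \sum_{j=1}^{J}(m_j-m_{j+1})B_j$ applied to both functions, which forces $m_{J+1}^{L} = m_{J+1}$ and lets you telescope back up. This is a cleaner argument: it isolates what is really going on (two endpoint values plus $J$ slope gaps are $J+2$ independent conditions on the $(J+2)$-dimensional space of PL functions with fixed breakpoints), and your dimension-count remark correctly identifies the endpoint identity as the consistency check that makes the anchor step work. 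What the paper's direct computation buys in exchange is that it never needs this independence observation and exhibits each slope explicitly, but at the cost of heavier algebra. Your uniqueness argument is also adequate: the forward direction shows the parameters are recoverable from the function as its breakpoints, slope jumps divided by $k$, and scaled endpoint values, so the parameterization in Definition~\ref{def:kcgf} is injective.
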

This result tells us that solving SpAR (Def~\ref{def:sarp}) for $g$ is equivalent to finding a concave piecewise linear function with the smallest number of linear pieces approximating $g$ at integer points. Although some techniques for approximating a concave function with a piecewise linear curve already exist~\cite{magnanti2012separable} and could be used as heuristics, these are ultimately unable to find optimal solutions for SpAR. First of all, these methods focus on approximating concave functions over continuous intervals rather than just at integer inputs. More importantly, they do not provide instance optimal approximations, but only give upper bounds on the number of linear pieces needed.
We therefore turn to new methods that will allow us to exactly solve our sparse approximation problem.

\subsection{Optimal sparse approximate reduction}
Our goal is to find a piecewise linear function $\ell$ with a minimum number of linear pieces satisfying condition~\eqref{eq:approx}. This is equivalent to finding a minimum-sized collection $\mathcal{L}$ of linear functions that ``cover'' points $\{j,g(j)\}$ for integers $j$, in the sense that (i) each linear function $L \in \mathcal{L}$ satisfies $g(j) \leq L(j)$ for all $j \in \{0,1, \hdots, k\}$, and (ii) for every $j \in \{0,1, \hdots, k\}$ there exists some $L \in \mathcal{L}$ satisfying $L(j) \leq (1+\varepsilon)g(j)$. Then, $\ell(x) = \min_{L \in \mathcal{L}} L(x)$ is the desired piecewise linear solution.

We develop a simple method (Alg.~\ref{alg:GPLC}) to grow a collection $\mathcal{L}$ satisfying these conditions. 
Each iteration considers the integer $u$ that is not currently covered by a $(1+\varepsilon)$-approximating line, and uses a subroutine \textsc{NextLine} to find a line $L$ that covers $u$ and also satisfies $g(t) \leq L(t) \leq (1+\varepsilon)g(t)$ for the 
largest integer $t$. This is done by taking the line through the point $\{u, (1+\varepsilon)g(u)\}$ that has the minimum slope while still upper bounding every point $\{i, g(i)\}$. To provide intuition for this strategy, note that $\mathcal{L}$ must contain \emph{some} line that provides a $(1+\varepsilon)$-approximation at $\{0, g(0)\}$. It can only help us to choose a line that provides this guarantee while also providing a $(1+\varepsilon)$-approximation for the widest possible range $\{0, 1, \hdots, t\}$. The same logic applies to finding linear pieces to approximate the function at remaining integer inputs.
The appendix contains pseudocode for \textsc{NextLine} and a proof of the following result.
\begin{theorem}
	Algorithm~\ref{alg:GPLC} solves the sparsest approximate reduction problem in $O(k)$ operations.
\end{theorem}

\begin{algorithm}[t]
	\caption{\textsc{GreedyPLCover}$(g, \varepsilon)$}
	\label{alg:GPLC}
	\begin{algorithmic}
		\State \textbf{Input}: $\varepsilon \geq 0$, concave function $g$ 
		\State \textbf{Output}: piecewise linear $\ell$ with fewest linear pieces such that $g(i) \leq \ell(i) \leq (1+\varepsilon)g(i)$
		\State $\mathcal{L} \leftarrow \emptyset$, $u \leftarrow 0$ \hspace{.5cm} \verb|//u = smallest integer not covered by approximating line|
		\While{$u \leq k$}
		\State $u', L \leftarrow  \textsc{NextLine}(g,u,\varepsilon)$ \hspace{.1cm} \verb| //line covering widest range [u, u'-1]|
		\State $\mathcal{L} \leftarrow \mathcal{L} \cup \{L\}$, $u \leftarrow u'$ 
		%\hspace{1cm}\verb|//new smallest integer not covered by approximating line|
		\EndWhile
		\State Return $\ell(x) = \min_{L \in \mathcal{L}} L(x)$
	\end{algorithmic}
\end{algorithm}

\subsection{Bounds on optimal reduction size}
\label{sec:bounds}
For a concave function $g$, an existing method of Magnanti and Stratila~\cite{magnati2004ipco,magnanti2012separable} can find a piecewise linear function that approximates $g$ over a continuous interval $[a,b]$ with $O(\frac{1}{\varepsilon}\log \frac{b}{a})$ linear pieces. Although this method does not optimally solving SpAR, it can be used to show the following worst-case upper bound on the number of linear pieces found by our instance-optimal method. 
\begin{theorem}
	\label{thm:logk}
	Let $g \colon [0,k] \rightarrow \mathbb{R}^+$ be concave and let $\varepsilon \geq 0$. Algorithm~\ref{alg:GPLC} will return a piecewise linear function $\ell$ with at most $O(\min\{k, \frac{1}{\varepsilon} \log k\})$ linear pieces satisfying~\eqref{eq:approx}.
	%satisfying $g(i) \leq \ell(i) \leq (1+\varepsilon)g(i)$ for $i \in \{0,1,2, \hdots k\}$.
\end{theorem}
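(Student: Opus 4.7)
The plan is to exploit the instance-optimality of Algorithm~\ref{alg:GPLC}: by the preceding theorem the algorithm returns a cover with the minimum possible number of linear pieces, so it suffices to \emph{exhibit} any concave piecewise linear function $\ell$ satisfying~\eqref{eq:approx} with $O(\min\{k,\frac{1}{\varepsilon}\log k\})$ pieces, and the greedy output is automatically no larger. I would bound the two halves of the $\min$ separately and then combine.

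The $O(k)$ bound is immediate. The exact reduction reviewed in Section~2, composed with the correspondence in Lemma~\ref{lem:equivalence}, yields a concave piecewise linear function with exactly $k$ linear pieces that agrees with $g$ at every integer in $\{0,1,\ldots,k\}$ and therefore satisfies~\eqref{eq:approx} for every $\varepsilon\geq 0$.

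For the $O(\tfrac{1}{\varepsilon}\log k)$ bound I would apply the Magnanti--Stratila construction, already cited in the paper, to the positive interval $[1,k]$. This produces a concave piecewise linear upper envelope $\ell_{MS}$ of $g$ with $O(\tfrac{1}{\varepsilon}\log k)$ linear pieces that $(1+\varepsilon)$-approximates $g$ throughout $[1,k]$, and in particular at the integers $\{1,2,\ldots,k\}$. The only integer left uncovered is $i=0$, so I would glue on a single additional linear piece $L_0$ and take $\ell(x)=\min\{L_0(x),\ell_{MS}(x)\}$; this is concave, piecewise linear, upper bounds $g$, and has at most $1+O(\tfrac{1}{\varepsilon}\log k)$ pieces.

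The main obstacle is constructing $L_0$ when $g(0)=0$, since the multiplicative guarantee there collapses to the hard constraint $L_0(0)=0$ and Magnanti--Stratila's bound itself degrades as the left endpoint approaches zero. The fix I anticipate is the line $L_0(x)=g(1)\cdot x$: because $g$ is concave with $g(0)=0$, the ratio $g(i)/i$ is nonincreasing on $\{1,2,\ldots,k\}$, so $L_0(i)\geq g(i)$ at every integer while $L_0(0)=g(0)=0$. When $g(0)>0$, any supporting line of $g$ at $0$ works instead. Combining the two constructions then yields the claimed $O(\min\{k,\tfrac{1}{\varepsilon}\log k\})$ bound.
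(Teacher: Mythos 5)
Your proposal is correct in substance and shares the paper's overall architecture: both arguments lean on the instance-optimality of Algorithm~\ref{alg:GPLC} and then exhibit a witness cover, with the $O(k)$ half coming from linear interpolation at integer points and the $O(\varepsilon^{-1}\log k)$ half from a Magnanti--Stratila-style geometric covering of $[1,k]$. The difference is in execution. You invoke Magnanti--Stratila as a black box and patch the single uncovered integer $i=0$; the paper instead re-derives the logarithmic bound in a self-contained way, selecting chords of the piecewise linear interpolant $q$ of the points $\{i,g(i)\}$ (the line containing $q(y)$ gives a $z/y$-approximation at every $z>y$, hence a $(1+\varepsilon)$-approximation on all of $[y,(1+\varepsilon)y]$), and it handles non-monotone $g$ by splitting at the maximizer into an increasing part and a mirrored decreasing part, paying a factor of~$2$. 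Working with pieces of $q$ rather than tangent lines of $g$ also silently absorbs your endpoint issue, since the first piece of $q$ passes through $\{0,g(0)\}$ and $\{1,g(1)\}$ exactly. The paper additionally sharpens the trivial $k$-piece bound to $\lfloor k/2\rfloor+1$ by joining disjoint consecutive pairs of points, but this is immaterial at the level of $O(\cdot)$.

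Two small points in your write-up need repair, both easily fixed. First, the Magnanti--Stratila construction (and the ratio argument underlying it) applies to \emph{monotone} concave functions; a general concave $g$ on $[0,k]$ increases and then decreases, so you must split $[1,k]$ at the maximizer and cover each monotone piece separately, mirroring the decreasing one, exactly as the paper's appendix proof does --- this only doubles the count. Second, your fallback for $g(0)>0$, ``any supporting line of $g$ at $0$,'' need not exist: if $g$ has infinite right derivative at $0$ (e.g.\ $g(x)=1+\sqrt{x}$), no finite-slope line through $\{0,g(0)\}$ upper bounds $g$ near $0$. The uniform fix is the chord through $\{0,g(0)\}$ and $\{1,g(1)\}$, which is precisely your $L_0$ when $g(0)=0$: by concavity, for $x>1$ one has $g(1)\geq \frac{x-1}{x}g(0)+\frac{1}{x}g(x)$, so the chord upper bounds $g$ at every integer $x\geq 1$ while being exact (hence a $(1+\varepsilon)$-approximation) at $0$ and $1$. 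The chord may dip below $g$ on the open interval $(0,1)$, but this is harmless because condition~\eqref{eq:approx} only constrains integer inputs. With these two adjustments your argument is complete.
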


Previous lower bounds on the number of linear pieces needed to approximate a concave function do not apply to our problem, as these are focused on approximating functions over continuous intervals~\cite{magnanti2012separable}. 
Since $k$ points on a concave function can always be covered using $k$ linear pieces, proving meaningful lower bounds for our problem is more challenging. Nevertheless, using an existing lower-bound for approximating the square root function over a continuous interval as a black-box~\cite{magnanti2012separable}, we prove that the upper bound in Theorem~\ref{thm:logk} is nearly asymptotically tight.
\begin{theorem}
	\label{thm:tight}
	Let $g(x) = \sqrt{x}$ and $\varepsilon \geq k^{-\delta}$ for a constant $\delta \in (0,2)$. Every piecewise linear $\ell$ satisfying $g(i) \leq \ell(i) \leq (1+\varepsilon)g(i)$ for $i \in \{0,1,\hdots k\}$ contains $\Omega( \log_{\gamma(\varepsilon)} k)$ linear pieces where $\gamma(\varepsilon) = (1+2\varepsilon(2+\varepsilon)+2(1+\varepsilon)\sqrt{\varepsilon(2+\varepsilon)})^2$. This behaves as 
	$\Omega (\varepsilon^{-1/2} \log k)$ as $\varepsilon \rightarrow 0$.
\end{theorem}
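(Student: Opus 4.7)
The plan is to use the continuous single-line bound of Magnanti and Stratila~\cite{magnanti2012separable} as a black box: any line $L(x)=mx+c$ with $\sqrt x\le L(x)\le(1+\varepsilon)\sqrt x$ throughout a continuous interval $[a,b]$ satisfies $b/a\le\gamma(\varepsilon)$. I will adapt this into a discrete ``three-integer'' per-piece estimate and then telescope across the linear pieces of $\ell$.

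The heart of the proof is the following per-piece estimate. Suppose a linear piece $L$ of $\ell$ covers three or more consecutive integers $\{u,u+1,\dots,v\}$ with $u\ge 1$, and let $w$ be the integer nearest the geometric mean $\sqrt{uv}$. Using $L(u)\le(1+\varepsilon)\sqrt u$, $L(v)\le(1+\varepsilon)\sqrt v$, $L(w)\ge\sqrt w$, and linearity $L(w)=(1-\lambda_w)L(u)+\lambda_w L(v)$ with $\lambda_w=(w-u)/(v-u)$, one obtains $F(\lambda_w)\le 1+\varepsilon$, where $F(\lambda)=\sqrt{(1-\lambda)u+\lambda v}/((1-\lambda)\sqrt u+\lambda\sqrt v)$. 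A direct calculation shows $F$ is maximized at $\lambda^\ast=\sqrt u/(\sqrt u+\sqrt v)$, i.e.\ at $w^\ast=\sqrt{uv}$, with peak value $F(\lambda^\ast)=(1+t)/(2\sqrt t)$ for $t=\sqrt{v/u}$, and $F''(\lambda^\ast)=-(t-1)^2(1+t)^3/(8t^{5/2})$. Since $|w-w^\ast|\le\tfrac12$, Taylor expansion of $F$ around its maximum combined with $|F''(\lambda^\ast)|=O((t-1)^2)$ gives a discretization slack of $O(\xi/(v-u)^2)$ where $\xi=F(\lambda^\ast)-1\asymp(t-1)^2$. The resulting fixed-point inequality $\xi\le\varepsilon+O(\xi/(v-u)^2)$ solves to $\xi=O(\varepsilon)$ whenever $v-u\ge 2$, and the associated quadratic in $t$ is exactly the one defining $\gamma$, yielding $v/u\le\gamma(C\varepsilon)$ for an absolute constant $C$.

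Telescoping is then straightforward. The pieces partition $\{0,1,\dots,k\}$ into consecutive integer blocks $\{u_j,\dots,v_j\}$ with $u_{j+1}=v_j+1$. Pieces of size at least $3$ give $u_{j+1}\le\gamma(C\varepsilon)u_j+1=u_j(1+O(\sqrt\varepsilon))+1$; pieces of size at most $2$ give only $u_{j+1}\le u_j+2$. The two updates agree in growth rate at the threshold $u=\Theta(1/\sqrt\varepsilon)$, so reaching this threshold from $u_1=1$ takes $\Theta(1/\sqrt\varepsilon)$ arithmetic steps, and growing multiplicatively from the threshold up to $k$ takes $\log_{\gamma(C\varepsilon)}(k\sqrt\varepsilon)=\Omega(\log(k\sqrt\varepsilon)/\sqrt\varepsilon)$ further steps. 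Under $\varepsilon\ge k^{-\delta}$ with $\delta<2$ we have $k\sqrt\varepsilon\ge k^{1-\delta/2}$ and $1-\delta/2>0$, so $\log(k\sqrt\varepsilon)=\Omega(\log k)$ and $N=\Omega(\log k/\sqrt\varepsilon)=\Omega(\log_{\gamma(\varepsilon)}k)$, using $\log\gamma(\varepsilon)=\Theta(\sqrt\varepsilon)$.

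The main obstacle is making the Taylor argument in the per-piece estimate quantitatively self-consistent: the bound $|F''(\lambda^\ast)|=O((t-1)^2)$ is exactly the regime we are trying to establish. Coupling $\xi\asymp(t-1)^2$ with $\xi\le\varepsilon+O(\xi/(v-u)^2)$ closes the loop by a fixed-point argument that must be executed carefully so that the resulting constant matches $\gamma(\varepsilon)$ from the theorem statement up to lower-order corrections; secondary bookkeeping handles the boundary case $u_1=0$ (where any piece starting at $0$ must end by $(1+\varepsilon)^2$ because $L$ is forced through the origin) and the harmonic corrections from the unit gaps between consecutive blocks.
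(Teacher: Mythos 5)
Your proposal is correct in outline, and it takes a genuinely different route from the paper. The paper never analyzes individual linear pieces discretely: it takes an optimal cover $\mathcal{L}^*$, replaces each of its lines by at most three lines tangent to $\sqrt{x}$ (which, by concavity, can only improve the approximation at integers), then adds at most one extra tangent line for each unit interval $[i,i+1]$ with $i \ge k^{\delta/2} \ge \varepsilon^{-1/2}$ that falls between two covering lines, so that the enlarged family (at most a $6\times$ blowup) covers the \emph{continuous} interval $[k^{\delta/2},k]$; it then invokes the Magnanti--Stratila continuous lower bound as a black box, with $\varepsilon \ge k^{-\delta}$ entering to make single tangents suffice on unit gaps and to keep $k/k^{\delta/2}=k^{1-\delta/2}$ polynomially large. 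You instead stay entirely discrete: your three-point estimate (upper bounds at the block endpoints $u,v$, lower bound at the integer nearest $\sqrt{uv}$) re-derives the exact quadratic defining $\gamma$ --- your peak condition $(1+t)/(2\sqrt{t})\le 1+\varepsilon$ with $t=\sqrt{v/u}$ is precisely $v/u \le \gamma(\varepsilon)$ --- and your two-phase telescoping (additive advancement below $u=\Theta(\varepsilon^{-1/2})$, multiplicative above) replaces the continuous lemma; your use of $\varepsilon \ge k^{-\delta}$ via $k\sqrt{\varepsilon}\ge k^{1-\delta/2}$ plays the same role as in the paper. Your route buys a self-contained argument (the black box is effectively re-proved inside the per-piece computation, including the exact leading constant), explicit per-piece ratio information, and an extra $\Omega(\varepsilon^{-1/2})$ count from the additive phase; the paper's route buys brevity by outsourcing the hard inequality, at the cost of the tangent-replacement and gap-patching bookkeeping and its constant factors. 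Since the theorem only claims $\Omega(\log_{\gamma(\varepsilon)}k)$, your constant-factor loss $\gamma(C\varepsilon)$ is harmless, as $\log\gamma(C\varepsilon)=\Theta(\log\gamma(\varepsilon))=\Theta(\sqrt{\varepsilon})$.

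Two execution points you flagged do close, but should be written out. First, the Taylor step needs $|F''(\lambda)|=O\bigl((t-1)^2\bigr)$ \emph{uniformly} between $\lambda_w$ and $\lambda^*$, not just at the maximizer; this follows from scale invariance ($F''=(t^2-1)^2 R''_{1,t^2}$ with $R''_{1,t^2}$ bounded on $[1,t^2]$) for, say, $t\le 2$, and the case $t>2$ should be handled separately by a direct multiplicative perturbation at $w$ (since $|w-\sqrt{uv}|\le \tfrac12$ while $\sqrt{uv}\ge\sqrt{3}$, evaluating at $w$ costs only a fixed constant factor in $F$, which already forces $t=O(1+\sqrt{\varepsilon})$ up to constants). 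Your fixed-point closure then works precisely because the discretization slack is proportional to $\xi$ itself, so $v-u\ge 2$ (or a slightly larger constant threshold --- harmless, since shorter blocks advance additively) gives $\xi=O(\varepsilon)$. Second, one should verify $u<w<v$, which holds since $\sqrt{uv}-u=(v-u)\sqrt{u}/(\sqrt{u}+\sqrt{v})>\tfrac12$ and $v-\sqrt{uv}\ge (v-u)/2 \ge 1$ whenever $u\ge 1$ and $v-u\ge 2$. With these details supplied, your argument is a valid and somewhat more informative alternative proof of the theorem.
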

In many cases the number of linear pieces returned by Algorithm~\ref{alg:GPLC} will be far smaller than the bounds above and the number of pieces returned by previous approaches~\cite{magnanti2012separable}. One of our central contributions is the following guarantee for a concave function that arises extensively in applications. Its proof is somewhat involved, and relies on bounding the number of iterations it takes to provide an approximation for subintervals of the form $[k\varepsilon^{1/2^{j-1}},k\varepsilon^{1/2^{j}}] $ for $j =1$ to $j = \lceil \log_2 \log_2 \varepsilon^{-1} \rceil$.
\begin{theorem}
	\label{thm:clique}
	Let $g(x) = x \cdot (k-x)$ for a positive integer $k$.
	For $\varepsilon > 0$, the approximating function $\ell$ returned by Algorithm~\ref{alg:GPLC} will have $O(\varepsilon^{-1/2} \log \log {\varepsilon^{-1}})$ linear pieces.
\end{theorem}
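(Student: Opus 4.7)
The plan follows the hint in the statement: partition $[0,k]$ into geometrically nested subintervals and bound the number of lines Algorithm~\ref{alg:GPLC} produces on each. Because $g(x) = x(k-x)$ is symmetric about $x = k/2$, it suffices (up to a factor of $2$) to bound the lines produced while $u$ ranges over $[0, k/2]$. I set $x_0 = k\varepsilon$ and $x_j = k\varepsilon^{1/2^j}$ for $j \geq 1$, and take $J = \lceil \log_2 \log_2 \varepsilon^{-1}\rceil$, so that $\varepsilon^{1/2^J} \geq 1/2$ and hence $x_J \geq k/2$. Then $[0, k/2]$ is contained in $[0, x_0] \cup \bigcup_{j=1}^J [x_{j-1}, x_j]$, and the line-counting is done on each piece.

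The key algebraic fact is that for $g(x) = x(k-x)$ the tangent line $L_c(x) = (k-2c)x + c^2$ satisfies the crisp identity $L_c(x) - g(x) = (x - c)^2$, so $L_c$ provides a $(1+\varepsilon)$-approximation at $x$ if and only if $(x-c)^2 \leq \varepsilon\, x(k-x)$. Requiring $L_c$ to pass through $(u, (1+\varepsilon)g(u))$ forces $c = c_u := u + \sqrt{\varepsilon\, u(k-u)}$. This tangent is feasible for the minimization in \textsc{NextLine} (it upper-bounds $g$ everywhere, hence at every integer), so the slope the algorithm actually selects is at most the slope of $L_{c_u}$, and the algorithm's line therefore covers at least every integer that $L_{c_u}$ does. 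Solving $(x - c_u)^2 = \varepsilon\, x(k-x)$ via Vieta's formulas gives that the right endpoint of $L_{c_u}$'s approximation interval is $x_r = c_u^2/((1+\varepsilon)u)$, and a short expansion yields the per-step advance bound
\[
x_r - u \;=\; \frac{2\sqrt{\varepsilon\, u(k-u)} + \varepsilon(k - 2u)}{1+\varepsilon} \;\geq\; \sqrt{\varepsilon\, u(k-u)},
\]
valid for $u \in [0, k/2]$, with the additive $\varepsilon(k-2u)/(1+\varepsilon)$ summand providing an $\Omega(\varepsilon k)$ floor near $u = 0$.

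For $j \geq 1$, the function $u(k-u)$ is increasing on $[0, k/2]$, so on $[x_{j-1}, x_j]$ the per-step advance is at least $\sqrt{\varepsilon\, x_{j-1}(k - x_{j-1})} \geq \sqrt{\varepsilon\, x_{j-1}\, k/2} = \Omega(k\, \varepsilon^{1/2 + 1/2^j})$. The interval length is at most $x_j = k\, \varepsilon^{1/2^j}$, so the number of lines produced while $u$ stays in $[x_{j-1}, x_j]$ is at most
\[
O\!\left( \frac{k\,\varepsilon^{1/2^j}}{k\, \varepsilon^{1/2 + 1/2^j}} \right) + 1 \;=\; O(\varepsilon^{-1/2}),
\]
uniformly in $j$. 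The initial piece $[0, k\varepsilon]$ contributes only $O(1)$ lines because the additive $\Omega(\varepsilon k)$ floor dominates the advance throughout that range. Summing $O(\varepsilon^{-1/2})$ across the $J = O(\log \log \varepsilon^{-1})$ subintervals, and doubling for $[k/2, k]$ by symmetry, gives the claimed total of $O(\varepsilon^{-1/2}\, \log \log \varepsilon^{-1})$ lines.

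The hard part will be cleanly handling the integer-rounding step: the algorithm actually sets the next $u$ to $\lfloor x_r \rfloor + 1$, so the true per-step advance is only $\geq \sqrt{\varepsilon\, u(k-u)} - 1$, and one must argue that this $-1$ is absorbed by the square-root term throughout the relevant range. This amounts to showing $\varepsilon\, u(k-u) \gtrsim 1$ once $u$ is at least a constant multiple of $1/(\varepsilon k)$, with the $\Omega(\varepsilon k)$ additive floor doing the work of carrying $u$ past this threshold whenever $\varepsilon k \gtrsim 1$; the residual regime $\varepsilon k \lesssim 1$ must be dispatched separately by the trivial $O(k)$ bound. A secondary subtlety is the slope-comparison step when $c_u > k/2$: here $L_{c_u}$ has negative slope, but feasibility and the ``algorithm's slope is at most the tangent's'' inequality still go through verbatim, so only notational care is needed.
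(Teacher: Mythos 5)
Your proposal is, at bottom, the paper's own argument: the appendix proof (Theorem~\ref{thm:clique2}, via Lemma~\ref{lem:tzi} and Algorithm~\ref{alg:clique}) uses the same doubly-exponential round decomposition $[k\varepsilon^{1/2^{j-1}}, k\varepsilon^{1/2^{j}}]$ with $\lceil \log_2\log_2 \varepsilon^{-1}\rceil$ rounds, the same tangent parameter $t = z_i + \sqrt{\varepsilon z_i(k-z_i)}$, the same $\Omega(k\varepsilon)$ additive floor near $0$, the same $O(\varepsilon^{-1/2})$ count per round, and the same symmetry doubling. Your identity $L_c(x) - g(x) = (x-c)^2$ together with Vieta (one root is $u$, so the other is $x_r = c_u^2/((1+\varepsilon)u)$, with $x_r - u = (2\sqrt{\varepsilon u(k-u)} + \varepsilon(k-2u))/(1+\varepsilon)$) is a cleaner derivation than the paper's direct quadratic-formula computation in Lemma~\ref{lem:tzi}, so on the increasing half your version is, if anything, tidier. (One small omission: the bound $x_r - u \geq \sqrt{\varepsilon u(k-u)}$ uses $2/(1+\varepsilon) \geq 1$, i.e., $\varepsilon \leq 1$; the paper dispatches $\varepsilon \geq 1$ separately by noting a single step already covers $[1,k/2]$.)

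Two framing points, one of which is a genuine (though fixable) gap. First, the ``hard part'' you flag does not exist: the next anchor chosen by \textsc{GreedyPLCover} is the \emph{smallest uncovered integer}, which is $\lfloor x_r \rfloor + 1 > x_r$, so integer rounding strictly \emph{increases} the per-step advance rather than costing $1$; your worry about a $-1$ loss, the threshold $u \gtrsim 1/(\varepsilon k)$, and a separate $\varepsilon k \lesssim 1$ regime is vacuous. The paper sidesteps integers entirely by not tracking the algorithm's iterates at all: it constructs a standalone tangent cover of the \emph{continuous} interval $[1,k/2]$ and then invokes the instance-optimality of Algorithm~\ref{alg:GPLC} to transfer the count. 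Second---the real gap---your claim that the symmetry step goes through ``verbatim'' in your iterate-tracking framing is false near the right end. For $u > k/2$ the interval covered by the tangent $L_{c_u}$ with $c_u = u + \sqrt{\varepsilon u(k-u)}$ lies between the two roots of $(x-c_u)^2 = \varepsilon x(k-x)$, and once $4u(k-u) < \varepsilon(2u-k)^2$ (i.e., $u$ within roughly $\varepsilon k/4$ of $k$) the second root $x_r = c_u^2/((1+\varepsilon)u)$ drops \emph{below} $u$: the tangent covers leftward and is useless as a rightward competitor in \textsc{NextLine}, even though feasibility still holds. The fix is exactly the paper's device: count a cover rather than iterates---reflect your left-half tangent cover through $x = k/2$ (reflected lines still upper-bound $g$ by symmetry) and appeal to the optimality of Algorithm~\ref{alg:GPLC}---or, if you insist on competitors, patch the final stretch $u \geq k/(1+\varepsilon)$ with the single feasible line through $(u, (1+\varepsilon)g(u))$ and $(k, 0)$.
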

This result is significant in a number of ways. First of all, and somewhat surprisingly, the number of linear pieces needed to approximate $g$ is independent of $k$. Our instance optimal algorithm therefore leads to a significant improvement over the $O(\varepsilon^{-1}\log k)$ upper bound in Theorem~\ref{thm:logk} and the piecewise linear approximation that could be obtained using existing techniques~\cite{magnanti2012separable}. More importantly, this theorem has significant implications for approximately modeling the concave cardinality function $f_e(S) = c_e |S||e \backslash S|$ (where $c_e$ can be any positive constant) using graph cuts. Although previous exact graph reduction techniques~\cite{kohli2009robust,stobbe2010efficient,veldt2020hyperlocal} require $O(|e|)$ nodes and $O(|e|^2)$ edges to model this function, we can approximate it with only $O(|e|\varepsilon^{-1/2} \log \log {\varepsilon^{-1}})$ edges.
This savings is particularly significant given how extensively this function appears in practice. This function is a commonly used \emph{region potential} in image segmentation, and shows up frequently in work on DSFM~\cite{ene2017decomposable, jegelka2013reflection,stobbe2010efficient,li2018revisiting}. It also appears often in hypergraph clustering applications, where it results from a \emph{clique expansion} of a hypergraph~\cite{Agarwal2005beyond,Agarwal2006holearning,BensonGleichLeskovec2016,hadley1995,zien1999}, which replaces each hyperedge $e$ with a clique on nodes in $e$. The cut function of the resulting graph is a decomposable submodular function whose components have the form $f_e(S) = c_e|S||e\backslash S|$. This arises similarly as the cut function of a graph obtained from co-occurrence relationships, or a graph obtained from a projection of a bipartite graph~\cite{benson2020simple}. Our results provide a useful new type of sparsifier for all of these types of graphs.

Theorem~\ref{thm:clique} also has implications for sparsifying a complete graph, a problem of interest in the theoretical computer science literature. Existing sparsifiers model the cut properties of a complete graph on $n$ nodes with $O(n\varepsilon^{-2})$ edges~\cite{batson2014twice}. This is tight for spectral sparsifiers~\cite{batson2014twice}, as well as for degree-regular cut sparsifiers with uniform edge weights~\cite{alon1997expansion}. Our result implies that if we are willing to include a small number of additional nodes and use directed edges, the cut properties of the complete graph can be modeled using only  $O(n)$ nodes and $O(n \varepsilon^{-1/2}\log \log {\varepsilon^{-1}})$ edges.

\section{Theoretical Runtime Analysis and Comparison}
\begin{algorithm}[t]
	\caption{\textsc{SparseCard}$(f,\varepsilon)$}
	\label{alg:SparseCard}
	\begin{algorithmic}
		\State \textbf{Input}: $\varepsilon \geq 0$, function $f(S) = \sum_{e\in E} f_e(S\cap e) = \sum_{e \in E} g_e(|S\cap e|)$ on ground set $V$
		\State \textbf{Output}: Set ${S}' \subseteq V$ satisfying $f(S') \leq (1+\varepsilon)\min_{S\subseteq V} f(S)$.
		\State $\mathcal{A} \leftarrow \emptyset$, $\mathcal{E} \leftarrow \emptyset$ \hspace{.1cm}\verb|//initialize auxiliary node and edge set for reduced graph|
		\For{$e \in E$}
		%	\State \verb|//Step 1: solve piecewise linear function approximation problem|
		\State $\ell_e \leftarrow \textsc{GreedyPLCover}(g_e,\varepsilon)$ \hspace{1.6cm} \verb|//1: Solve SpAR (Algorithm 1)|
		%	\State \verb|//Step 2: Construct graph gadget for e with auxiliary nodes | $\mathcal{A}_e$
		\State $G_e = (e \cup \mathcal{A}_e, \mathcal{E}_e) \leftarrow \textsc{CGFtoGadget}(\ell_e) $  \verb|//2: Build combined gadget (Lemma 1)|
		%	\State \verb|//Step 3: Add new auxiliary nodes and edges for building graph G|
		\State $\mathcal{A} \leftarrow \mathcal{A} \cup \mathcal{A}_e$, $\mathcal{E} \leftarrow \mathcal{E} \cup \mathcal{E}_e$ \hspace{2.3cm} \verb|//3: Add gadget to graph|
		\EndFor
		\State $G =(V \cup \mathcal{A} \cup \{s,t\}, \mathcal{E})$ \hspace{3.1cm}   \verb|//Build graph G modeling f|
		\State $T = \textsc{MinSTcut}(G)$ \hspace{3.6cm} \verb|//Find  minimum s-t cut|
		\State Return $S' = T \cap V$\hspace{3.8cm}  \verb|//Ignore auxiliary nodes|
	\end{algorithmic}
\end{algorithm}
The ability to approximately model a single concave cardinality function makes it possible to quickly obtain an arbitrarily good approximate solution to an instance of Card-DSFM by reducing it to a minimum $s$-$t$ cut problem on a sparse graph. Define a function $f$ on a ground set $V$ by
\begin{equation}
\label{eq:carddsfm}
{\textstyle f(S) = \sum_{e \in E} f_e(S\cap e) = \sum_{e \in E} g_e(|S\cap e|), }
\end{equation}
where each $g_e$ is concave. We assume each $f_e$ is nonnegative; if not we can adjust the objective by a constant without affecting optimal solutions. Algorithm~\ref{alg:SparseCard} gives pseudocode for our new method \textsc{SparseCard} for minimizing~\eqref{eq:carddsfm}. The method finds a sparse approximate graph reduction for each concave function $g_e$ using Algorithm~\ref{alg:GPLC}, combines these into a larger graph whose cut properties approximate $f$, and then applies a minimum $s$-$t$ cut solver to that graph. Finding sparse reductions is fast, so the asymptotic runtime guarantee is just the time it takes to solve the cut problem, which can be accomplished using any algorithm for solving the dual maximum $s$-$t$ flow problem~\cite{goldberg1998beyond,gao2021fully,lee2014path,brand2021minimum}.
\begin{theorem}
	Let $n = |V|$ and $R = |E|$. When $\varepsilon = 0$, the graph constructed by \textsc{SparseCard} will have $O(\sum_{e \in E} |e|)$ nodes and $O(\sum_{e \in E} |e|^2)$ edges.
	% and the method will return a set $T$ satisfying $f(T) = \min_{S \subseteq V} f(S)$. 
	When $\varepsilon > 0$, the graph will have
	$O(n + \sum_{e \in E} \varepsilon^{-1} \log |e|)$ nodes and $O(\varepsilon^{-1} \sum_{e \in E} |e| \log |e|)$ edges. In either case, and the method will return a set $T$ satisfying $f(T) \leq (1+\varepsilon) \min_{S \subseteq V} f(S)$. 
\end{theorem}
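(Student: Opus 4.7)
The plan is to verify three things in sequence: (i) each per-edge gadget $G_e$ constructed by \textsc{CGFtoGadget} models the approximating function $\ell_e(|S\cap e|)$ in the sense of Eq.~(\ref{eq:graphreduce}); (ii) these gadgets assemble into a graph $G$ that models $\tilde{f}(S) = \sum_e \ell_e(|S\cap e|)$; and (iii) counting nodes and edges using the sparsity bounds already proved. The approximation guarantee then follows immediately from the sandwich $f(S) \leq \tilde{f}(S) \leq (1+\varepsilon)f(S)$.

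First I would invoke Lemma~\ref{lem:equivalence} in the reverse direction: the piecewise linear function $\ell_e$ returned by \textsc{GreedyPLCover}, with $J_e+1$ linear pieces, corresponds uniquely to a $|e|$-node CGF of order $J_e$, and by the construction reviewed in Section~2, such a CGF is modeled exactly by combining $J_e$ CB-gadgets (each contributing one auxiliary node $v_e^{(j)}$ and $2|e|$ directed edges) plus source/sink edges from/to each $v\in e$. This gives the per-edge totals of $J_e$ auxiliary nodes and $O(|e| \cdot J_e + |e|)$ edges. Next I would argue that the combined graph $G$ models $\tilde{f}$: because the auxiliary node sets $\mathcal{A}_e$ for distinct $e$ are disjoint and no edges cross between gadgets, the minimization over $T\subseteq \mathcal{A}$ in~(\ref{eq:graphreduce}) factors into independent minimizations over each $\mathcal{A}_e$, yielding $\min_T \cut_G(\{s\}\cup S \cup T) = \sum_e \ell_e(|S\cap e|) = \tilde{f}(S)$.

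For the node and edge counts, I would plug in the per-edge bounds on $J_e$. When $\varepsilon = 0$, any concave function on the $|e|+1$ integer points $\{0,1,\ldots,|e|\}$ is exactly interpolated by a piecewise linear curve with at most $|e|$ pieces, so $J_e \leq |e|-1$, giving $O(\sum_e |e|)$ auxiliary nodes and $O(\sum_e |e|^2)$ edges. When $\varepsilon > 0$, Theorem~\ref{thm:logk} gives $J_e = O(\min\{|e|,\varepsilon^{-1}\log |e|\})$, so the auxiliary node count is $O(\sum_e \varepsilon^{-1}\log|e|)$ (plus the $n$ original vertices) and the edge count is $O(\sum_e |e|\cdot \varepsilon^{-1}\log|e|) = O(\varepsilon^{-1}\sum_e |e|\log|e|)$. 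Finally, for the approximation, letting $S^\ast \in \argmin_S f(S)$ and $S' = T\cap V$ for $T$ an exact min $s$-$t$ cut, step (ii) gives that $S'$ minimizes $\tilde{f}$, so
\begin{equation*}
f(S') \;\leq\; \tilde{f}(S') \;\leq\; \tilde{f}(S^\ast) \;\leq\; (1+\varepsilon)\,f(S^\ast),
\end{equation*}
where the first and third inequalities use the per-edge sandwich $g_e(i) \leq \ell_e(i) \leq (1+\varepsilon)g_e(i)$ summed over $e$.

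The only nontrivial step is (ii), the separability argument; it is standard but needs to be stated carefully so that the reader sees why the disjointness of the auxiliary node sets is what lets the minimum cut decompose across gadgets. Everything else is bookkeeping built on results already established in the paper.
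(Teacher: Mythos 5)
Your proposal is correct and follows essentially the same route as the paper's proof: decompose the graph per component via the CGF-to-gadget correspondence of Lemma~\ref{lem:equivalence}, count auxiliary nodes and edges using the piecewise-linear bounds (the paper uses the sharper $\lfloor |e|/2\rfloor+1$ pieces for $\varepsilon=0$ where you use $|e|$, which is asymptotically identical), and conclude via the sandwich $g_e \leq \ell_e \leq (1+\varepsilon)g_e$. If anything, your final chain $f(S') \leq \tilde{f}(S') \leq \tilde{f}(S^\ast) \leq (1+\varepsilon)f(S^\ast)$ is stated more cleanly than the paper's corresponding display, and your explicit separability argument in step (ii) spells out what the paper leaves implicit when it asserts that the minimum $s$-$t$ cut of $G$ minimizes $\sum_e \ell_e(|e \cap S|)$.
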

The graph returned by \textsc{SparseCard} can also be asymptotically sparser for specialized concave functions, such as the popular clique expansion function in Theorem~\ref{thm:clique}. 

In the remainder of the section, we provide a careful runtime comparison between \textsc{SparseCard} and competing runtimes for Card-DSFM.  {We reiterate that most previous techniques are designed to solve more general DSFM problems. However, these typically achieve improved runtime guarantees in the case of cardinality-based components. Our goal is to understand the theoretical runtime improvements that can be achieved for Card-DSFM, especially in cases where approximate solutions suffice.}  We focus on each runtime's dependence on $n = |V|$, $ R = |E|$, and support sizes $|e|$, and use $\tilde{O}$ notation to hide logarithmic factors of $n$, $R$, and $1/{\varepsilon}$. To easily compare weakly polynomial runtimes, we assume that each $f_e$ has integer outputs, and assume that $\log (\max_S \, f(S))$ is small enough that it can also be absorbed by $\tilde{O}$ notation.
Among algorithms for DSFM, \textsc{SparseCard} is unique in its ability to quickly find solutions with a priori multiplicative approximation guarantees. Previous approaches for DSFM focus on either obtaining exact solutions, or solving a problem to within an \emph{additive} approximation error $\epsilon > 0$~\cite{axiotis2021decomposable,ene2017decomposable,li2018revisiting,jegelka2013reflection}. In the latter case, setting $\epsilon$ small enough will guarantee an optimal solution in the case of integer output functions. 
{We stress that this type of approximation is very different in nature and not directly comparable to the multiplicative approximations achieved by \textsc{SparseCard}. 
	In our comparisons, we assume that previous methods are run until optimality, since the runtime of competing methods only improves by logarithmic factors if we assume the method is run until an additive approximation error $\epsilon > 0$ is achieved.}

\textbf{Competing runtime guarantees.}
Table~\ref{tab:dfsm} lists runtimes for existing methods for DSFM. We also give the asymptotic runtime for \textsc{SparseCard} when applying the recent maximum flow algorithm of van den Brand et al.~\cite{brand2021minimum}. While this leads to the best theoretical guarantees for our method, asymptotic runtime improvements over competing methods can also be shown using alternative fast algorithms for maximum flow~\cite{gao2021fully,lee2014path,goldberg1998beyond}.
For the submodular flow algorithm of Kolmogorov~\cite{kolmogorov2012minimizing}, we have reported the runtime guarantee provided specifically for Card-DSFM. While other approaches have frequently been applied to Card-DSFM~\cite{ene2017decomposable,stobbe2010efficient,li2018revisiting,jegelka2013reflection}, runtimes guarantees for this case have not been presented explicitly and are more challenging to exactly pinpoint. Runtimes for most algorithms depend on certain oracles for solving smaller minimization problems at functions $f_e$ in an inner loop.
For $e \in E$, let $\mathcal{O}_e$ be a \emph{quadratic minimization oracle}, which for an arbitrary vector $w$ solves $\min_{y \in B(f_e)} \|y + w \|$ where $B(f_e)$ is the base polytope of the submodular function $f_e$ (see~\cite{ene2017decomposable,Bach:2013:LSF:2602000,jegelka2013reflection} for details). Let $\theta_e$ be the time it take to evaluate the oracle at $e \in E$, and define $\theta_\mathit{max} = \max_{e \in E} \theta_e$ and $\theta_\mathit{avg} = \frac{1}{R} \sum_{e \in E} \theta_e$. Although these oracles admit faster implementations in the case of concave cardinality functions, it is not immediately clear from previous work what is the best possible runtime. When $w =0$, solving $\min_{y \in B(f_e)} \|y + w \|$ takes $O(|e| \log |e|)$ time~\cite{jegelka2013reflection}, so this serves as a best case runtime we can expect for the more general oracle $\mathcal{O}_e$ based on previous results. We note also that in the case of the region potential function $f_e(A) = |A| |e\backslash A|$, Ene et al.~\cite{ene2017decomposable} highlight that an $O(|e| \log |e| + |e| \tau_e)$ algorithm can be used, where $\tau_e$ denotes the time it take to evaluate $f_e(S \cap e)$ for any $S \subseteq e$. In our runtime comparisons will use the bound $\theta_e = \Omega(|e|)$, as it is reasonable to expect that any meaningful submodular function we consider should take a least linear time to minimize. {We remark finally that for discrete DSFM algorithms, a slightly weaker oracle than the quadratic minimization oracle can be used~\cite{axiotis2021decomposable,ene2017decomposable}. However, this does not lead to improved runtime guarantees, as these are subject to the same lower bounds in the cardinality-based case.}

\begin{table}[t]
	\caption{Runtimes for Card-DFSM for various methods, where $R = |E|$, $n = |V|$, 
		$\mu = \sum_e |e|$, and $\mu_2 = \sum_e |e|^2$. 
		Oracle runtimes satisfy $\theta_\mathit{max} = \Omega(\max |e|)$, and $\theta_\mathit{avg} = \Omega(\frac{1}{R}\sum_{e \in E} |e|)$. 
		$T_{mf}(N,M)$ is the time to solve a max-flow problem with $N$ nodes and $M$ edges.}
	\label{tab:dfsm}
	\centering
	\begin{tabular}{l ll}
		\toprule 
		Method & Discrete/Cont & Runtime  \\
		\midrule
		Kolmogorov SF~\cite{kolmogorov2012minimizing} & Discrete & $\tilde{O}(\mu^2)$  \\
		IBFS~\cite{fix2013structured,ene2017decomposable} & Discrete &$\tilde{O}(n^2 \theta_\mathit{max} + n \sum_{e} |e|^4)$ \\
		AP~\cite{nishihara2014convergence,ene2017decomposable,li2018revisiting} & Continuous & $\tilde{O}(nR\theta_\mathit{avg}\mu)$ \\
		RCDM~\cite{ene2015random,ene2017decomposable} & Continuous & $\tilde{O}(n^2 R\theta_\mathit{avg})$ \\
		ACDM~\cite{ene2015random,ene2017decomposable} & Continuous & $\tilde{O}(n R\theta_\mathit{avg})$ \\
		Axiotis et al.~\cite{axiotis2021decomposable} & Discrete &  $\tilde{O}( \max_e |e|^2 \cdot \left(\sum_{e} |e|^2 \theta_e +  T_\mathit{mf}\left(n,n + \mu_2\right) \right)$  \\
		\textsc{SparseCard} $\varepsilon = 0$ & Discrete & $\tilde{O}(T_\mathit{mf}(\mu , \mu_2 )) = \tilde{O}\left(\mu_2 + \mu^{3/2} \right)$   \\ 
		\textsc{SparseCard} $\varepsilon > 0$ & Discrete & $\tilde{O}\left( T_\mathit{mf}(n+ \frac{R}{\varepsilon}, \frac{1}{\varepsilon} \mu )\right) = \tilde{O}\left( \frac{\mu}{\varepsilon} + (n + \frac{R}{\varepsilon})^{3/2} \right)$  \\
		\bottomrule
	\end{tabular}
\end{table}

\textbf{Fast approximate solutions ($\varepsilon > 0$).} Barring the regime where support sizes $|e|$ are all very small, the accelerated coordinate descent method (ACDM) of Ene et al.~\cite{ene2015random} provides the fastest previous runtime guarantee. For a simple parameterized runtime analysis, consider a DSFM problem where the average support size is $({1}/{R}) \sum_e |e| = \Theta(n^{\alpha})$ for $\alpha \in [0,1]$, and $R = \Theta(n^\beta)$, where $\beta \geq 1-\alpha$ must hold if we assume each $v \in V$ is in the support for at least one function $f_e$. 
%Assuming that each object in the ground set is contained in at least one function $f_e$, we have that $R = \Omega(n^{1-\alpha}) \implies \beta \geq 1-\alpha$. 
An exact runtime comparison between \textsc{SparseCard} and ACDM depends on the best runtime for the oracle $\mathcal{O}_e$ for concave cardinality functions. If an $O(|e| \log |e|)$ oracle is possible, the overall runtime guarantee for ACDM would be $\Omega(n^{1+\alpha+\beta})$. Meanwhile, for a small constant $\varepsilon > 0$, \textsc{SparseCard} provides a $(1+\varepsilon)$-approximate solution in time $\tilde{O}(n^{\alpha+\beta} + \max\{ n^{3/2}, n^{3\beta/2}\})$, which will faster by at least a factor $\tilde{O}(\sqrt{n})$ whenever $\beta \leq 1$. When $\beta > 1$, finding an approximation with \textsc{SparseCard} is guaranteed to be faster whenever $R = o(n^{2+2\alpha})$. 
%When average support size is large $(\alpha = 1)$ and the number of functions is on the order of the ground set $(\beta = 1)$, the best case guarantee for ACDM is $\tilde{O}(n^3)$, whereas we can find approximation solutions with \textsc{SparseCard} in $\tilde{O}(n^2)$ time. 
If the best case oracle  $\mathcal{O}_e$ for concave cardinality functions is $\omega(|e| \log |e|)$, the runtime improvement of our method is even more significant. 

\textbf{Guarantees for exact solutions ($\varepsilon = 0$).} As an added bonus, running \textsc{SparseCard} with $\varepsilon = 0$ leads to the fastest runtime for finding \emph{exact} solutions in many regimes. In this case, we can guarantee \textsc{SparseCard} will be faster than ACDM when the average support size is $\Theta(n^{\alpha})$ and $R = o(n^{2-\alpha})$. \textsc{SparseCard} can also find exact solutions faster than other discrete optimization methods~\cite{kolmogorov2012minimizing,axiotis2021decomposable,fix2013structured} in wide parameter regimes. Unlike \textsc{SparseCard}, these methods are designed for problems where all support sizes are small, but become impractical if even a single function has a large support size.

The runtime guarantee for \textsc{SparseCard} when $\varepsilon = 0$ can be matched asymptotically by combining existing exact reduction techniques~\cite{kohli2009robust,stobbe2010efficient,veldt2020hypercuts} with fast maximum flow algorithms. However, our method has the practical advantage of finding the \emph{sparsest} exact reduction in terms of CB-gadgets. This results in a reduced graph with roughly half the number of edges as our previous exact reduction technique~\cite{veldt2020hypercuts}. {Analogously, while Stobbe and Krause~\cite{stobbe2010efficient}) showed that a concave cardinality function can be decomposed as a sum of modular functions plus a combination of $|e|-1$ threshold potentials, our approximation technique will find a linear combination with $\lfloor |e|/2\rfloor$ threshold potentials. This amounts to the observation that any $k+1$ points $\{i,g(i)\}$ can be joined by $\lfloor k/2\rfloor + 1$  lines instead of using $k$. Overall though, the most significant advantage of \textsc{SparseCard} over existing reduction methods is its ability to find fast approximate solutions with sparse approximate reductions.}

\section{Experiments}
\label{sec:experiments}
In addition to its strong theoretical guarantees, \textsc{SparseCard} is very practical and leads to substantial improvements in benchmark image segmentation problems and hypergraph clustering tasks. 
We focus on DSFM problems that simultaneously include component functions of large and small support, which are common in computer vision and hypergraph clustering applications~\cite{shanu2016min,ene2017decomposable,veldt2020hypercuts,Liu-2021-localhyper,purkait2016clustering}. We ran experiments on a laptop with a 2.2 GHz Intel Core i7 processor and 8GB of RAM. We consider public datasets previously made available for academic research, and use existing open source software for competing methods.\footnote{Image datasets:~\url{http://people.csail.mit.edu/stefje/code.html}. Hypergraph clustering datasets:~\url{www.cs.cornell.edu/~arb/data/}. DSFM algorithms: from~\url{github.com/lipan00123/DSFM-with-incidence-relations} (MIT license); Hypergraph clustering algorithms:~\url{github.com/nveldt/HypergraphFlowClustering} (MIT license).}
Code for our algorithms and experimental results is available at~\url{https://github.com/nveldt/SparseCardDSFM}.

\textbf{Benchmark Image Segmentation Tasks.}
\textsc{SparseCard} provides faster approximate solutions for standard image segmentation tasks previously used as benchmarks for DSFM~\cite{jegelka2013reflection,li2018revisiting,ene2017decomposable}. We consider the \emph{smallplant} and \emph{octopus} segmentation tasks
from Jegelka et al.~\cite{jegelka2011submodularity,jegelka2013reflection}. These amount to minimizing a decomposable submodular function on a ground set of size $ |V| = 427 \cdot 640 = 273280$, where each $v \in V$ is a pixel from a $427 \times 640$ pixel image and there are three types of component functions.
The first type are unary potentials for each pixel/node, i.e., functions of support size 1 representing each node's bias to be in the output set. The second type are pairwise potentials from a 4-neighbor grid graph; pixels $i$ and $j$ share an edge if they are directly adjacent vertically or horizontally. The third type are region potentials of the form $f_e(A) =|A||e\backslash A|$ for $A \subseteq e$, where $e$ represents a superpixel region. The problem can be solved via maximum flow even without sophisticated reduction techniques for cardinality functions, as a regional potential function on $e$ can be modeled by placing a clique of edges on $e$. We compute an optimal solution using this reduction. 

Compared with the exact reduction method, running \textsc{SparseCard} with $\varepsilon > 0$ leads to much sparser graphs, much faster runtimes, and a posteriori approximation factors that are significantly better than $(1+\varepsilon)$. For example, on the \emph{smallplant} dataset, when $\varepsilon = 1.0$, the returned solution is within a factor $1.004$ of optimality, even though the a priori guarantee is a $2$-approximation. The sparse reduced graph has only $0.013$ times the number of edges in the exact reduced graph, and solving the sparse flow problem is over two orders of magnitude faster than solving the problem on the exact reduced graph (which takes roughly 20 minutes on a laptop). 
In Table~\ref{tab:sparsecard} we list the sparsity, runtime, and a posteriori guarantee obtained for a range of $\varepsilon$ values on the \emph{smallplant} dataset using the superpixel segmentation with 500 regions.
\begin{table}[t!]
	\caption{
		Results from \textsc{SparseCard} for different $\varepsilon>0$ on the \emph{smallplant} instance with 500 superpixels.
		Sparsity is the fraction of edges in the approximate graph reduction compared with the exact reduction. Finding the exact solution on the dense exact reduced graph took $\approx$20 minutes.
	}
	\label{tab:sparsecard}
	\centering
	\begin{tabular}{llllllll}
		\toprule
		$\varepsilon$ &1.0 &0.2336 &0.0546 &0.0127 &0.003 &0.0007 &0.0002  \\
		Approx.$-1$ &$4\cdot 10^{-3}$ &$2\cdot 10^{-3}$ &$6\cdot 10^{-4}$&$6\cdot 10^{-5}$ &$3\cdot 10^{-5}$ &$7\cdot 10^{-6}$ &$7\cdot 10^{-7}$ \\
		Sparsity &0.013 &0.017 &0.02 &0.035 &0.06 &0.108 &0.196  \\
		Runtime &4.1 &5.6 &6.7 &11.5 &24.3 &41.4 &74.3 \\
		\bottomrule
	\end{tabular}
\end{table} 

\begin{figure}[t]
	\includegraphics[width=.265\linewidth]{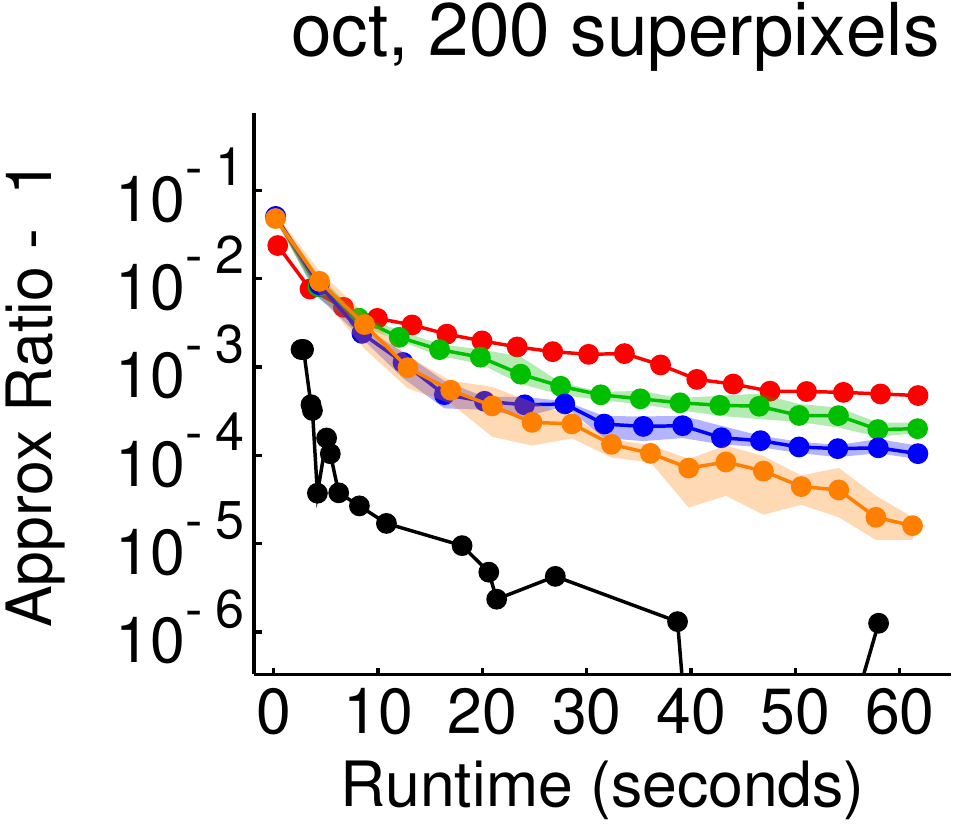} 
	\includegraphics[width=.235\linewidth]{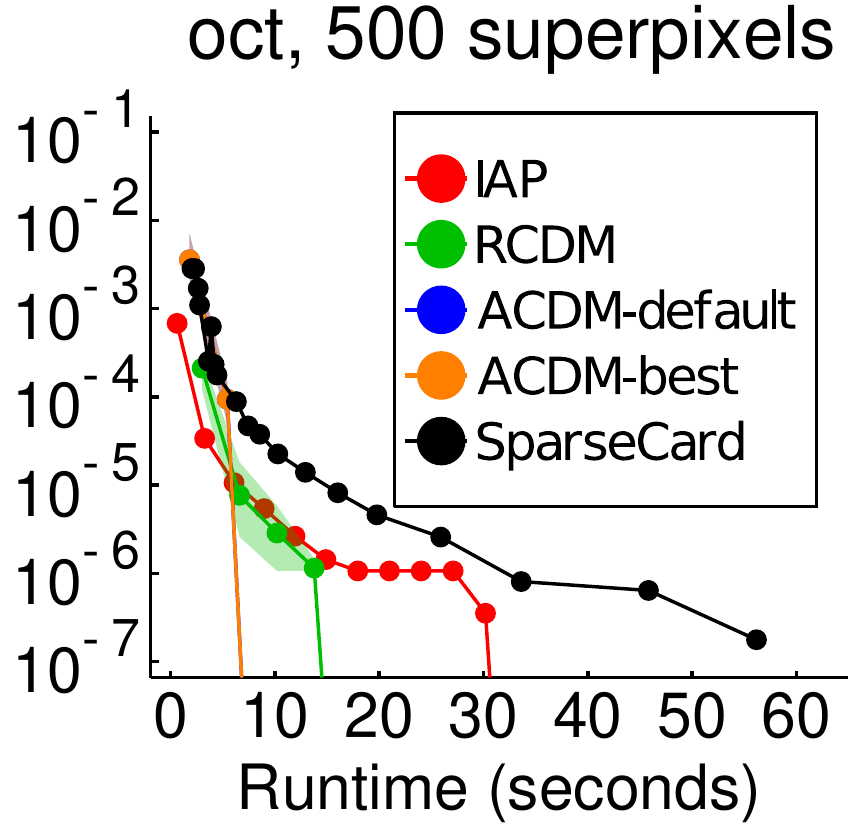} 
	\includegraphics[width=.235\linewidth]{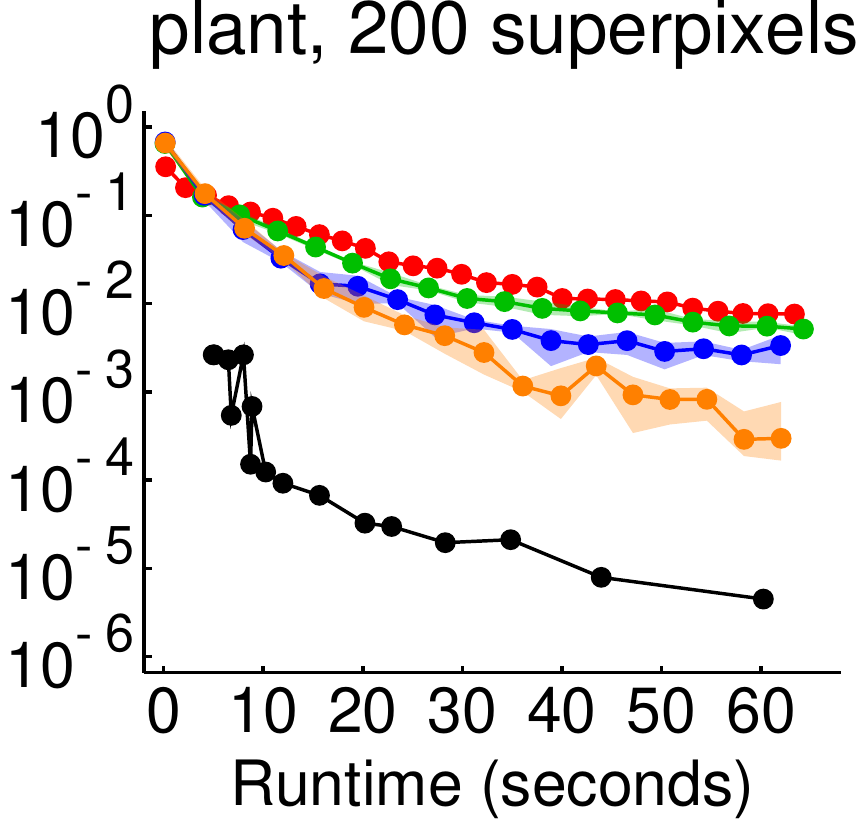} 
	\includegraphics[width=.235\linewidth]{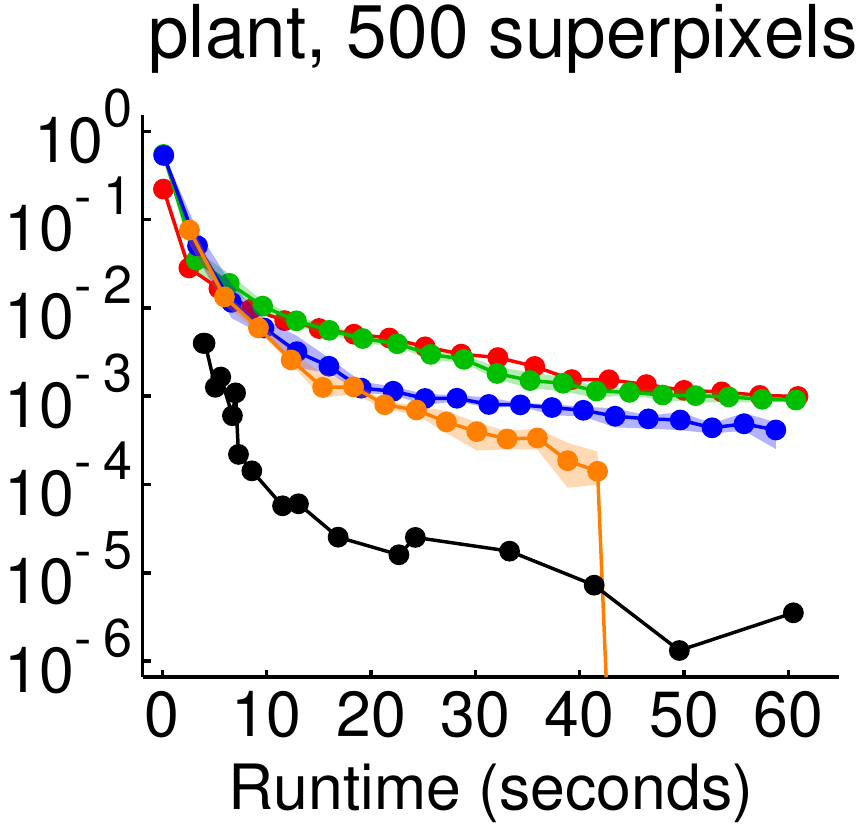}  
	\caption{Approximation factor minus 1 vs.\ runtime for solutions returned by \textsc{SparseCard} and competing methods on four image segmentation tasks. 
		%\textsc{SparseCard} obtains faster approximate solutions on all but the easiest instance, where all methods obtain good results within a few seconds.
		We display the average of 5 runs for competing methods, with lighter colored region showing upper and lower bounds from these runs. \textsc{SparseCard} is deterministic and was run once for each $\varepsilon$ on a decreasing logarithmic scale. Our method maintains an advantage even against post-hoc best case parameters for competing approaches: ACDM-best is the best result obtained by running ACDM for a range of empirical parameters $c$ for each dataset and reporting the best result. The default is $c = 10$ (blue curve). Best post-hoc results for the plots from left to right were $c = 25, 10, 50, 25$. It is unclear how to determine the best $c$ in advance.
	}
	\label{fig:images}
\end{figure}

We also compare against recent C++ implementations
of ACDM, RCDM, and Incidence Relation AP (an improved version of the standard AP method~\cite{nishihara2014convergence}) provided by Li and Milenkovic~\cite{li2018revisiting}. These use the divide-and-conquer method of Jegelka et al.~\cite{jegelka2013reflection}, implemented specifically for concave cardinality functions, to solve the quadratic minimization oracle $\mathcal{O}_e$ for region potential functions. Although these continuous optimization methods come with no a priori approximation guarantees, we can compare them against \textsc{SparseCard} by computing a posteriori approximations obtained using intermediate solutions returned after every few hundred iterations.
Figure~\ref{fig:images} displays approximation ratio versus runtime for four DSFM instances (two datasets $\times$ two superpixel segmentations). \textsc{SparseCard} was run for a range of $\varepsilon$ values on a decreasing logarithmic scale from $1$ to $10^{-4}$, and obtains significantly better results on all but the \emph{octopus} with 500 superpixels instance. This is the easiest instance; all methods obtain a solution within a factor $1.001$ of optimality within a few seconds. ACDM depends on a hyperparameter $c$ controlling the number of iterations in an outer loop. Even when we choose the best post-hoc $c$ value for each dataset, \textsc{SparseCard} maintains its overall advantage.
Note that we focus on comparisons with continuous optimization methods rather than other discrete optimization methods, as the former are better equipped for our goal of finding approximate solutions to DSFM problems involving functions of large support. To our knowledge, no implementations for the methods of Kolmogorov~\cite{kolmogorov2012minimizing} or Axiotis et al.~\cite{axiotis2021decomposable} exist. 
Meanwhile, IBFS~\cite{fix2013structured} is designed for finding exact solutions when all support sizes are small. Recent empirical results~\cite{ene2017decomposable}  confirm that this method is not designed to handle the large region potential functions we consider here.

\textbf{Hypergraph local clustering.}
\begin{table}[t]
	\caption{Average F1 score and standard deviation for detecting 45 local clusters in a stackoverflow question hypergraph using HyperLocal~\cite{veldt2020hypercuts} + \textsc{SpaseCard} with four hyperedge cut penalty functions. The $\delta$-linear penalty had the fastest runtime (26 seconds on average) as it has a sparse optimal $(\varepsilon = 0)$ reduction. For $\Delta$Time, we compute the ratio between the runtime of $\delta$-linear and the runtime of each method on all 45 clusters, then report the mean and standard deviation of these ratios. 
		The \emph{\# Best} row indicates the number of times a method obtains the highest F1 score out of the 45 clusters.}
	\label{tab:stack}
	\centering
	\begin{tabular}{l r  |rr |rr | rr}
		\toprule
		& \multicolumn{1}{c}{$\delta$-\emph{linear}} & \multicolumn{2}{c}{\emph{clique}} & \multicolumn{2}{c}{\emph{$x^{0.9}$}}& \multicolumn{2}{c}{\emph{sqrt}} \\
		\cmidrule{2-8}
		& $\varepsilon = 0$ & $\varepsilon = 1$ & $\varepsilon = 0.1$ & $\varepsilon = 1$ & $\varepsilon = 0.1$& $\varepsilon = 1$ & $\varepsilon = 0.1$ \\
		\midrule
		F1 &0.53\f{$\pm0.22$} & 0.56\f{$\pm0.19$} & 0.56\f{$\pm0.19$} & 0.54\f{$\pm0.20$} & 0.54\f{$\pm0.21$} & 0.42\f{$\pm0.18$} & 0.42\f{$\pm0.19$} \\ 
		$\Delta$Time & 1 &1.32\f{$\pm0.33$} & 1.81\f{$\pm0.43$} & 1.17\f{$\pm0.25$} & 2.04\f{$\pm0.42$} & 2.02\f{$\pm0.99$} & 3.19\f{$\pm1.4$} \\ 
		\# Best & 7 & 10 & 16 & 8 & 3 & 0 & 1 \\
		\bottomrule
	\end{tabular}
\end{table}
Graph reduction techniques have been frequently and successfully used as subroutines for hypergraph local clustering and semi-supervised learning methods~\cite{Liu-2021-localhyper,veldt2020hyperlocal,panli2017inhomogeneous,yin2017local}.
Replacing exact reductions with our approximate reductions can lead to significant runtime improvements without sacrificing on accuracy, and opens the door to running local clustering algorithms on problems where exact graph reduction would be infeasible. We illustrate this by using \textsc{SparseCard} as a subroutine for an existing method called~\textsc{HyperLocal}~\cite{veldt2020hyperlocal}. This algorithm finds local clusters in a hypergraph by repeatedly solving Card-DSFM problems corresponding to hypergraph minimum $s$-$t$ cuts. For these DSFM problems, $e \in E$ is a hyperedge and $f_e(A)$ is the penalty for cutting a hyperedge so that nodes in $A \subseteq e$ are on one side of the cut. \textsc{HyperLocal} was originally designed to handle only the \emph{$\delta$-linear} penalty $f_e(A) = \min  \{ |A|, |e \backslash A|, \delta\}$, for parameter $\delta \geq 1$, which can already be modeled sparsely with a single CB-gadget. \textsc{SparseCard} makes it possible to sparsely model any concave cardinality penalty. We specifically use approximate reductions for the weighted clique penalty $f_e(A) = (|e|-1)^{-1}|A||e \backslash A|$, 
%a logarithmic cut penalty $f_e(A) = \log_2( \min\{|A|, |e \backslash A|\})$, 
the square root penalty $f_e(A) = \sqrt{\min\{|A|, |e \backslash A|\}}$, 
and the sublinear power function penalty $f_e(A) = (\min\{|A|, |e \backslash A|\})^{0.9}$,
all of which require $O(|e|^2)$ edges to model exactly using previous reduction techniques. Weighted clique penalties in particular have been used extensively in hypergraph clustering~\cite{Agarwal2006holearning,yin2017local,kumar2020modularity,zien1999}, including by methods specifically designed for local clustering and semi-supervised learning~\cite{jianbo2018,yin2017local,Zhou2006learning}. 

We consider a hypergraph clustering problem where nodes are 15.2M questions on \verb|stackoverflow.com| and each of the 1.1M hyperedges defines a set of questions answered by the same user. The mean hyperedge size is 23.7, the maximum size is over 60k, and there are 2165 hyperedges with at least 1000 nodes.  Questions with the same topic tag (e.g., ``common-lisp'') constitute small labeled clusters in the dataset. We previously showed that \textsc{HyperLocal} can detect clusters quickly with the $\delta$-linear penalty by solving localized $s$-$t$ cut problems near a seed set. Applying exact graph reductions for other concave cut penalties is infeasible, due to the extremely large hyperedge sizes, and we found previously that using a clique expansion after simply removing large hyperedges performs poorly~\cite{veldt2020hyperlocal}. Using \textsc{SparseCard} as a subroutine opens up new possibilities.

Following an existing approach~\cite{veldt2020hyperlocal}, we seek to detect 45 labeled clusters using a random seed set of 5\% of each cluster. Table 1 reports average F1 scores and relative runtimes for four hyperedge cut penalties. 
Given the natural variation in cluster structure and size, standard deviations should not be viewed as error bars for each approach per se, but these provide a rough indication for how the performance of each method varies across clusters.
%Standard deviations provide a rough indication for how the performance of each method varies across clusters. 
Detailed results for each cluster are included in the appendix. Importantly, cut penalties that previously could not be used on this dataset (\emph{clique}, \emph{$x^{0.9}$}) obtain the best results for most clusters. The square root penalty does not perform particularly well on this dataset, but it is instructive to consider its runtime. Theorem~\ref{thm:tight} shows that asymptotically this function has a worst-case behavior in terms of the number of CB-gadgets needed to approximate it. We nevertheless obtain reasonably fast results for this penalty function, indicating that our techniques can provide effective sparse reductions for any concave cardinality function of interest. We also ran experiments with $\varepsilon = 0.01$, which led to noticeable increases in runtime but only very slight changes in F1 scores. This indicates why exact reductions are not possible in general, while also showing that our sparse approximate reductions serve as fast and very good proxies for exact reductions.

\section{Conclusion and Discussion}
\label{sec:conclusion}
We have introduced a sparse graph reduction technique leading to the first approximation algorithms for cardinality-based DSFM. Our method provides an optimal reduction strategy in terms of previously considered graph gadgets, comes with improved theoretical runtime guarantees over competing methods, and leads to significant improvements in benchmark image segmentation and hypergraph clustering experiments. An interesting direction for future research is to explore lower bounds or improved techniques for other possible graph reduction strategies. {For the very special case of clique splitting functions, a practical open question to explore is whether uniformly sampling edges in a clique works well in practice. On a more theoretical note, another open question to explore is whether the maximum flow techniques of Axiotis et al.~\cite{axiotis2021decomposable} could be combined with our sparsification methods to achieve even better theoretical runtimes for cardinality-based DSFM.}

Regarding potential limitations of our work, our method applies only to the cardinality-based variant of the problem, whereas most existing methods solve a more general problem. Nevertheless, Card-DSFM is one of the most widely applied variants in practice, which highlights the utility of developing better theory and algorithms for this special case. Another limitation is that our method is not as easy to parallelize as continuous optimization methods. An open question is whether better specialized (parallel or serial) runtimes can be obtained for continuous methods for Card-DSFM. Finally, while our research focuses on faster algorithms for a fundamental optimization task, there are ways in which tools for image segmentation and clustering (which are downstream applications of our work) can result in negative societal impacts depending on their use. For example, image segmentation could be used in illicit targeted video surveillance. Clustering methods could be used to de-anonymized private information in a social network, or to segment a population of voters for micro-targeted political campaigns that potentially lead to increased political polarization. Nevertheless, algorithms for decomposable submodular function minimization, as well as the more specific tasks of image segmentation and hypergraph clustering, remain very general and are also broadly useful for many positive applications. 

\section*{Funding Statement}
Austin Benson is supported by ARO Award W911NF-19-1-0057, ARO MURI, and NSF CAREER Award IIS-2045555. Jon Kleinberg is supported by a Vannevar Bush Faculty Fellowship, ARO MURI, AFOSR grant FA9550-19-1-0183, and a Simons Investigator grant. We declare no financial competing interests.

\bibliographystyle{plain}
\bibliography{main-arXiv}

\appendix
\section{Proofs of Main Theoretical Results}
This section includes detailed proofs of all of our main theoretical results for sparse approximate reductions via piecewise linear approximation. For convenience we include definitions and theorem statements from the main text, expanded in some cases to provide additional clarifying details. Recall that our goal is to sparsely (and approximately) model the submodular function 
\begin{equation}
f(S) = \sum_{e \in {E}} f_e(S \cap e) = \sum_{e \in {E}} g_e(|S \cap e|), \text{ for $S \subseteq V$}
\end{equation}
using a graph, which we will do by showing how to sparsely model each component function $f_e$ individually. 

\textbf{Terminology and notation} 
For our results it will be convenient to interpret the ground set $V$ as a set of nodes in a hypergraph $\mathcal{H} = (V,E)$, where each $e \in E$ is an individual hyperedge and $f_e$ is the function which determines how to penalize different ways of splitting the hyperedge $e$. The function $f$ is then a notion of a generalized hypergraph cut function~\cite{panli2017inhomogeneous,panli_submodular,veldt2020hypercuts}. This terminology is particularly convenient when talking about graph reduction strategies, since modeling a function $f_e$ will involve treating each element in $e$ as a node. We apply this terminology here, though note that all of the results we show will apply more generally to approximately modeling a decomposable submodular function using the cut properties of a graph. 

Throughout the supplement we use $[d]$ to denote the set $\{1,2, \cdots, d\}$ for any positive integer $d \in \mathbb{N}$.

\subsection{CB-gadgets and their combinations}
We use combinations of cardinality-based (CB) gadgets to model a function $f_e$. 
Let $k = |e|$. The cut properties of this gadget model the function $f_e(A) = g_e(|A|)$ where 
\begin{equation*}
g_e(x) = a \cdot \min \{ x \cdot (k-b) , (k-x) \cdot b\}.
\end{equation*}
The following parameterized concave function corresponds to a combination of multiple CB-gadgets.
\begin{definition}
	A $k$-node combined gadget function (CGF) of order $J$ is a function $\ell \colon [0,k] \rightarrow \mathbb{R}^+$ of the form.
	\begin{equation}
	\label{eq:kcgsup}
	\ell(x) = z_0 \cdot (k-x) + z_k\cdot x + \sum_{j=1}^J a_j \min \{ x \cdot (k-b_j), (k-x) \cdot b_j \}.
	\end{equation}
	where $z_0$ and $z_k$ are non-negative parameters, and the $J$-dimensional vectors $\va = (a_j)$ and $\vb = (b_j)$ satisfy the following constraints:
	\begin{align}
	\label{cgf1}
	b_j &> 0, a_j > 0 \text{ for all $j \in [J]$}  \\
	\label{cgf2}
	b_j &< b_{j+1} \text{ for $j \in [J-1]$}\\
	\label{cgf3}
	b_J & < k.
	\end{align}
\end{definition}
The conditions on the vectors $\va$ and $\vb$ come from natural observations about combining CB-gadgets. Conditions~\eqref{cgf1} and~\eqref{cgf3} ensure that we do not consider CB-gadgets where with edge weights that are zero. The ordering in condition~\eqref{cgf2} is for convenience, and the fact that $b_j$ values are all distinct implies that we cannot collapse two distinct CB-gadgets into a single CB-gadget with new weights. 

We now prove the connection between concave functions with few linear pieces and combined gadget functions of small order $J$.
\begin{lemma}
	The $k$-node CGF in~\eqref{eq:kcgsup} is nonnegative, piecewise linear, concave, and has exactly $J+1$ linear pieces. Conversely, let $\ell' \colon [0,k] \rightarrow \mathbb{R}^+$ be concave and piecewise linear with $J+1$ linear pieces, and let $m_i$ be the slope of the $i$th linear piece and $B_i$ be the $i$th breakpoint. Then $\ell'$ is uniquely characterized as the $k$-node CGF parameterized by $a_i = \frac{1}{k}(m_{i}-m_{i+1})$ and $b_i = B_i$ for $i \in \{1,2, \hdots J\}$, $z_0 = \ell'(0)/k$, and $z_k = \ell'(k)/k$.
\end{lemma}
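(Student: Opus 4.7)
The plan is to handle the forward direction by observing that each summand $\min\{x(k-b_j), (k-x)b_j\}$ is a concave tent function whose two linear pieces meet exactly at $x=b_j$ (solving $x(k-b_j)=(k-x)b_j$ yields $x=b_j$). The linear terms $z_0(k-x)$ and $z_k x$ contribute no breakpoints, and a sum of concave functions is concave. Since the $b_j$ are strictly increasing and lie in $(0,k)$ by conditions~\eqref{cgf1}--\eqref{cgf3}, each tent contributes a genuine, distinct breakpoint, so $\ell$ is piecewise linear with exactly $J+1$ pieces on $[0,k]$. Nonnegativity follows because $z_0, z_k, a_j \geq 0$ and each tent is nonnegative on $[0,k]$.

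For the converse, given a concave piecewise linear $\ell'\colon[0,k]\to\mathbb{R}^+$ with breakpoints $B_1<\cdots<B_J$ and successive slopes $m_1>m_2>\cdots>m_{J+1}$ (the strict inequalities follow because concavity plus a genuine breakpoint forces a strict slope decrease), the plan is to exhibit parameters making the proposed CGF $\ell$ agree with $\ell'$, then argue uniqueness. First I would evaluate $\ell$ at the endpoints: every tent vanishes at $x=0$, giving $\ell(0)=z_0 k$, and similarly $\ell(k)=z_k k$, which forces $z_0=\ell'(0)/k$ and $z_k=\ell'(k)/k$. Next I would compute, for each $i$, the jump in the derivative of $\ell$ at $x=b_i$: only the $i$-th tent changes slope there, switching from slope $a_i(k-b_i)$ to slope $-a_i b_i$, a total change of $-a_i k$. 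Setting $b_i=B_i$ and matching this jump to $m_{i+1}-m_i$ forces $a_i=(m_i-m_{i+1})/k$, which is strictly positive by concavity.

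Once all parameters are pinned down, I would close the argument by noting that $\ell$ and $\ell'$ are both continuous piecewise linear functions on $[0,k]$ with the same breakpoint locations, the same slope changes at each breakpoint, and the same value at $x=0$; a short induction across the $J+1$ pieces (or, equivalently, integrating the common derivative) gives $\ell\equiv\ell'$. Uniqueness follows from the same computation read backwards: any CGF representing $\ell'$ must place its $b_j$ at the breakpoints of $\ell'$ (that is where the CGF's slope changes), and the endpoint values and slope jumps then pin down $z_0, z_k,$ and each $a_j$ with no remaining freedom.

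The main obstacle I anticipate is simply careful bookkeeping of slopes across intervals, in particular verifying on the leftmost piece $[0,B_1]$ that the slope of $\ell$ equals $-z_0+z_k+\sum_j a_j(k-b_j)$ and matches $m_1$ under the proposed parameter assignment; this is a routine but error-prone algebraic check. The conceptual content is light, but the uniqueness claim depends on being precise about the one-to-one correspondence between slope-change data and the $(a_j,b_j)$ pairs, together with endpoint data determining $(z_0,z_k)$.
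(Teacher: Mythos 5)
Your proposal is correct. The forward direction matches the paper's in substance: the paper derives a closed form for the CGF on each interval $[b_t,b_{t+1})$ and shows the slopes $z_k - z_0 - \sum_j a_j b_j + k\sum_{j>t}a_j$ strictly decrease, which is exactly your tent-decomposition observation (each summand $a_j\min\{x(k-b_j),(k-x)b_j\}$ has a single breakpoint at $b_j$ with slope jump $-a_j k$) carried out globally rather than locally. Where you genuinely diverge is the converse. The paper verifies endpoint values and breakpoints and then checks that \emph{every} slope $\hat{m}_i$ of the constructed CGF equals $m_i$, via two telescoping identities ($k\sum_{j\geq i}a_j = m_i - m_{J+1}$ and $k\sum_j a_j b_j = \ell'(B_J) - \ell'(0) - m_{J+1}B_J$). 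You instead match the derivative jumps at the breakpoints (automatic from $a_i = (m_i - m_{i+1})/k$) plus the endpoint data, reducing the whole verification to the single anchor check on the leftmost piece. That check, which you rightly flag as the residual work, does close: with your parameters the first-piece slope is $z_k - z_0 + \sum_j a_j(k-b_j) = \frac{\ell'(k)-\ell'(0)}{k} + (m_1 - m_{J+1}) - \frac{1}{k}\bigl(\ell'(B_J) - \ell'(0) - m_{J+1}B_J\bigr) = m_1$, using $\ell'(k) = \ell'(B_J) + m_{J+1}(k-B_J)$ --- precisely the paper's telescoping computation in miniature. In fact, since you pin down both $\ell(0)$ and $\ell(k)$, the anchor slope is forced automatically: a piecewise linear function with prescribed breakpoints, prescribed slope jumps, and both endpoint values is determined, so your induction across pieces can be replaced by that one-line observation. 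What your route buys is a shorter slope verification and a more explicit treatment of the ``uniquely characterized'' claim than the paper gives: you observe directly that any representing CGF must place its $b_j$ at the breakpoints of $\ell'$, after which jumps and endpoint values pin down $z_0$, $z_k$, and each $a_j$ with no freedom, whereas the paper leaves this recoverability implicit in its forward direction.
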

\begin{proof}
	We break up the proof into its two directions.
	
	\emph{First direction: the CGF is a special type of piecewise linear function.}
	Nonnegativity follows quickly from the positivity of $z_0$, $z_k$, and $(a_i, b_i)$ for $i \in [J]$, and $b_J < k$. For other properties, we begin by defining $b_0 = 0$, $b_{J +1} = k$, $a_0 = a_{J+1} = 0$ for notational convenience. This allows us to re-write the function as
	\begin{align}
	\ell(x) &= z_0 \cdot (k-x) + z_k\cdot x + \sum_{j=1}^J a_j \min \{ x \cdot (k-b_j), (k-x) \cdot b_j \}\\
	\ell(x) &= z_0 \cdot (k-x) + z_k\cdot x + \sum_{j=0}^{J+1} a_j \min \{ x \cdot (k-b_j), (k-x) \cdot b_j \}\\
	&= k z_0 + x( z_k - z_0)+ k \cdot \sum_{j=0}^{J+1} a_j \min \{ x, b_j \}  - x \cdot \sum_{j=1}^{J} a_j b_j\\
	\label{eq:hatflast}
	&= k z_0 + x( z_k - z_0)+ kx \cdot \sum_{j: x < b_j } a_j + k \cdot \sum_{j: x \geq b_j} a_j b_j  - x \cdot \sum_{j =1}^J a_j b_j.
	\end{align}
	Now for $t \in \{0\} \cup [J]$ we define
	\begin{align*}
	\beta = \sum_{j = 1}^J a_j b_j, \hspace{1cm} \beta_t = \sum_{j = 1}^t a_j b_j, \hspace{1cm} \alpha_t = \sum_{j = t+1}^{J+1} a_j,
	\end{align*}
	and observe that $\beta_t$ is strictly increasing with $t$, and $\alpha_t$ is strictly \emph{decreasing} with $t$. For any $t \in \{0\} \cup [J]$, the function is linear over the interval $[b_{t}, b_{t+1})$, since for $x \in [b_t, b_{t+1})$, we have
	\begin{align*}
	\ell(x) &= k z_0 + x( z_k - z_0)+ kx \cdot \sum_{j: x < b_j } a_j + k \cdot \sum_{j: x \geq b_j} a_j b_j  - x \sum_{j =1}^J a_j b_j \\
	&= k z_0 + x( z_k - z_0)+ kx \sum_{j = t+1}^{J+1} a_j + k \cdot \sum_{j = 1}^t a_j b_j - x \sum_{j = 1}^J a_j b_j\\
	&= k z_0 + x( z_k - z_0)+ kx \alpha_t + k \beta_t - x \beta.
	\end{align*}
	Thus, $\ell$ is piecewise linear. Furthermore, the slope of the line over the interval $[b_t, b_{t+1})$ is $(z_k - z_0 - \beta  + k \alpha_t )$, which strictly decreases as $t$ increases. The fact that all slopes are distinct means that there are exactly $J+1$ linear pieces, and the fact that these slopes are decreasing means that the function is concave over the interval $[0,k]$.
	
	\emph{Second direction: $\ell'$ is a CGF.} 
	We are now assuming that $\ell' \colon [0,k] \rightarrow \mathbb{R}^+$ is some concave and piecewise linear function on $[0,k]$ with $J+1$ linear pieces, whose $i$th slope is $m_i$ and whose $i$th breakpoint is $B_i$. Let $\hat{\ell}$ be the CGF whose parameters are given in the lemma statement. By the proof of the other direction, we know that $\hat{\ell}$, like $\ell'$, is a nonnegative piecewise linear concave function on the interval $[0,k]$, with exactly $J+1$ linear pieces. Since a piecewise linear function on $[0,k]$ is uniquely determined by its breakpoints, and endpoints, and slopes, we simply need to check that $\ell'$ and $\hat{\ell}$ coincide at all of these.
	
	The parameter choice $z_0 = \ell'(0)/k$ and $z_k = \ell'(k)/k$ guarantees that these functions match at inputs $0$ and $k$. From the proof of the first direction we know that the $i$th breakpoint of $\hat{\ell}$ will be $b_i = B_i$, so the functions match at breakpoints. It remains to check that their linear pieces have exactly the same slope. 
	
	Let $\ell_j = \ell'(b_j)$ for $j \in \{0\} \cup [J+1]$, where we again have used $b_0 = 0$ and $b_{J+1} = k$ for notational convenience. The $i$th linear piece of $\ell'$ has the slope
	\begin{equation*}
	m_i = \frac{\ell_i - \ell_{i-1} }{b_i - b_{i-1}}.
	\end{equation*}
	%Recall that $a_j = \frac{1}{k}(m_j-m_{j+1})$ for $j \in [J]$, and 
	From the proof of the first direction we know that the $i$th linear piece of $\hat{\ell}$, which is the linear piece corresponding to the interval $[b_{i-1},b_i]$, has the slope
	\begin{align*}
	\hat{m}_i &= z_k - z_0 - \sum_{j = 1}^J a_j b_j + k \sum_{j = i}^J a_j = \frac{\ell_k}{k} - \frac{\ell_0}{k} - \sum_{j = 1}^J a_j b_j + k \sum_{j = i}^J a_j.
	\end{align*}
	We can simplify several terms using the fact that $a_j = \frac{1}{k}(m_{j} - m_{j+1})$. First of all,
	\begin{align*}
	k \sum_{j = i}^J a_j &= \sum_{j = i}^J [m_{j} - m_{j+1}] = m_{i} - m_{J+1}.
	\end{align*}
	Secondly,
	\begin{align*}
	k \sum_{j=1}^J a_j b_j &=  \sum_{j = 1}^J (m_{j} - m_{j+1})b_j = m_1 b_1 - m_{J+1} b_J  + \sum_{j = 2}^{J} m_j (b_{j} - b_{j-1}) \\
	&=(\ell_1 - \ell_0) - m_{J+1} b_J + \sum_{j = 2}^{J} [\ell_j - \ell_{j-1}] \\
	&= (\ell_1 - \ell_0) - m_{J+1} b_J + \ell_J - \ell_{1}\\
	&= \ell_J - \ell_0 - m_{J+1} b_J.
	\end{align*}
	Therefore, the slope of the $i$th linear piece of $\hat{\ell}$ is
	\begin{align*}
	\hat{m}_i &= \frac{\ell_{J+1}}{k} - \frac{\ell_0}{k} - \sum_{j = 1}^J a_j b_j + k \sum_{j = i}^J a_j \\
	& =\frac{\ell_{J+1}}{k} - \frac{\ell_0}{k} -  \frac{\ell_J}{k} + \frac{\ell_0}{k} +\frac{ m_{J+1} b_J}{k} + m_i - m_{J+1} \\
	&= \frac{\ell_{J+1} - \ell_J}{k} +\frac{ m_{J+1} b_J}{k} - m_{J+1} + m_i \\
	&= \frac{m_{J+1}(b_{J+1}-b_J)}{k} +\frac{ m_{J+1} b_J}{k} - m_{J+1} + m_i \\
	&= \frac{m_{J+1}(k -b_J)}{k} +\frac{ m_{J+1} b_J}{k} - m_{J+1} + m_i \\
	&= m_i.
	\end{align*}
	Thus, the functions coincide.
\end{proof}

\subsection{Finding the Optimal Piecewise Linear Approximation}
Lemma~\ref{lem:equivalence} tells us that solving the sparse approximate reduction problem for a concave function $g$ is equivalent to finding a concave piecewise linear curve $\ell$ satisfying
\begin{equation}
\label{eq:symapprox}
g(i) \leq \ell(i) \leq (1+\varepsilon)g(i) \,\text{ for all } i \in \{0,1,2, \hdots k\},
\end{equation}
such that $\ell$ has a minimum number of linear pieces. This problem can be solved using Algorithm~\ref{alg:GPLC}, which uses the function \textsc{NextLine} (Algorithm~\ref{alg:FNP}) as a subroutine. For this pseudocode, $\textsc{LineThrough}$ is a conceptual subroutine that returns a line $L$ when given two points that define $L$. In practice this is implemented by storing those two points, or by storing one point and the line's slope. 
\begin{algorithm}[t]
	\caption{\textsc{GreedyPLCover}$(g, \varepsilon)$}
	\label{alg:GPLC2}
	\begin{algorithmic}
		\State \textbf{Input}: $\varepsilon \geq 0$, concave function $g$ 
		\State \textbf{Output}: piecewise linear $\ell$ with fewest linear pieces such that $g(i) \leq \ell(i) \leq (1+\varepsilon)g(i)$
		\State $\mathcal{L} \leftarrow \emptyset$, $u \leftarrow 0$ \hspace{.5cm} \verb|//u = smallest integer not covered by approximating line|
		\While{$u \leq k$}
		\State $u', L \leftarrow  \textsc{NextLine}(g,u,\varepsilon)$ \hspace{.1cm} \verb| //line covering widest range [u, u'-1]|
		\State $\mathcal{L} \leftarrow \mathcal{L} \cup \{L\}$, $u \leftarrow u'$ 
		%\hspace{1cm}\verb|//new smallest integer not covered by approximating line|
		\EndWhile
		\State Return $\ell(x) = \min_{L \in \mathcal{L}} L(x)$
	\end{algorithmic}
\end{algorithm}
\begin{algorithm}[t]
	\caption{$\textsc{NextLine}(g,u,\varepsilon)$}
	\label{alg:FNP}
	\begin{algorithmic}
		\State \textbf{Input}: $\varepsilon \geq 0$, concave function $g$, integer $u \in \{0,1,\hdots k\}$
		\State \textbf{Output}: line $L$ satisfying $g(i) \leq L(i) \leq (1+\varepsilon)g(i)$ for $i \in \{u,u+1, \hdots , t\}$ for max integer $t$.
		\If{$u \in \{k-1,k\}$}
		\State \verb|//If only 1 or 2 points to cover, this can be done with one line.|
		\State Return $k+1$, $L = \textsc{LineThrough}(\{k-1,g(k-1)\}, \{k,g(k)\})$
		\EndIf
		\State $L \leftarrow \textsc{LineThrough}(\{u, (1+\varepsilon)g(u)\}, \{u+1, g({u+1})\})$ \; \verb| //First candidate line|
		\State $u' = u+2$ 
		\While{$u' \leq k$ and $L(u') \leq (1+\varepsilon) g(u')$}
		\If{ $L(u') < g({u'})$}
		\State \verb|//New candidate line, to ensure we return an upper bounding line|
		\State $L \leftarrow \textsc{LineThrough}(\{u, (1+\varepsilon)g(u)\}, \{u', g({u'})\})$
		\EndIf
		\State $u' = u' +1$
		\EndWhile
		\State Return $u'$, $L$
	\end{algorithmic}
\end{algorithm}

\begin{theorem}
	Algorithm~\ref{alg:GPLC} solves the sparsest approximate reduction problem in $O(k)$ operations.
\end{theorem}
\begin{proof}
	To prove Algorithm~\ref{alg:GPLC} finds the optimal result, we must confirm three things. First, the linear pieces in $\mathcal{L}$ all upper bound the points $\{i,g(i)\}$. Second, for each $i \in \{0,1, \hdots, k\}$, one of the lines satisfies $L$ satisfying $L(i) \leq (1+\varepsilon) g(i)$. Third, there is no collection of lines $\mathcal{L}'$ of smaller cardinality satisfying these conditions. 
	
	\textit{Greedy guarantee for Algorithm~\ref{alg:FNP}.} Each linear piece $L$ is found using Algorithm~\ref{alg:FNP}. For an integer $u$, this subroutine finds the line $L$ which satisfies
	\begin{equation}
	\label{Lcondition}
	g(i) \leq L(i) \leq (1+\varepsilon)g(i)  \text{ for $i \in \{u, u+1, \hdots, t$\}}.
	\end{equation}
	for a maximum value of $t \leq k$. To see why, note that Algorithm~\ref{alg:FNP} finds the line that goes through the point $\{u, (1+\varepsilon)g(u)\}$ and the point $\{v, g(v)\}$ where $v > u$ is the smallest integer such that $L(v+1) \geq g({v+1})$. If does so by sequentially considering lines through the point $\{u, (1+\varepsilon)g(u)\}$ and points $\{u', g(u')\}$ for increasing values of $u'$, updating the line at each step until it satisfies $L(u') = g(u')$ and $L(u'+1) \geq g(u'+1)$. By the concavity of $g$, this line will upper bound all remaining points $\{i,g(i)\}$. Also by concavity, any upper bounding line with a smaller slope would provide a worse approximation at $\{u, g(u)\}$, and therefore not provide the desired approximation at $u$. Meanwhile, any upper bounding line with a larger slope would have a worse approximation at every point $\{j, g(j)\}$ when $j > v$, so it may not satisfy~\eqref{Lcondition} for the largest value of $t$. Thus, the line returned by Algorithm~\ref{alg:FNP} provides an approximating line at $u$ with the farthest reach.
	
	\textit{Proof that greedy strategy is optimal.} Algorithm~\ref{alg:GPLC} uses Algorithm~\ref{alg:FNP} to greedily grow a collection $\mathcal{L}$ of approximating lines. To see why this greedy strategy is optimal, consider any other collection $\hat{\mathcal{L}}$ that provides a $(1+\varepsilon)$ approximation everywhere, and we will show inductively that $|\mathcal{L}| \leq |\hat{\mathcal{L}}|$. At the beginning we assume we have not seen any of the lines from either collection. At each step, we require ${\mathcal{L}}$ and $\hat{\mathcal{L}}$ to produce a line that covers the smallest integer for which they previously did not provide a cover. 
	In the first step, ${\mathcal{L}}$ must provide a line that covers $u_1 = 0$ and $\hat{\mathcal{L}}$ must also provide a line that covers $\hat{u}_1 = 0$. After the first step, ${\mathcal{L}}$ produces a line covering $\{u_1 = 0, 1, \hdots, t\}$ where $t$, and $\hat{\mathcal{L}}$ produces a line covering $\{\hat{u}_1 = 0, 1, \hdots, \hat{t}\}$. Because Algorithm~\ref{alg:FNP} is guaranteed to find a line with a maximum reach, we have $\hat{t} \leq t$, and we begin the next step with $u_2 = t+1$ and $\hat{u}_2 = \hat{t} + 1 < t+1$. Assume inductively that at the $i$th step, $\mathcal{L}$ must provide a line covering an integer $u_i$ and $\hat{\mathcal{L}}$ must provide a line covering $\hat{u}_i$, where $u_i \geq \hat{u}_i$. After this step, it is impossible for $\hat{\mathcal{L}}$ to surpass ${\mathcal{L}}$ in such a way that $\hat{u}_{i+1} > u_{i+1}$. This would imply that $\hat{\mathcal{L}}$ found a line that provides a $(1+\varepsilon)$-approximation for the entire range $\{ \hat{u}_i, \hat{u}_{i} + 1, \hdots, \hat{u}_{i+1}\}$ for $\hat{u}_i \leq u_i$ and $\hat{u}_{i+1} > u_{i+1}$, contradicting the fact that $\mathcal{L}$ finds the line that covers the set $\{u_i, u_{i}+1, \hdots, u_{i+1}\}$ for the largest value of $u_{i+1}$. We see therefore that  $u_i \geq \hat{u}_i$ at every step, so it is impossible for $\hat{\mathcal{L}}$ to produce a set of lines providing the desired approximation for the integers $\{0,1,2, \hdots k\}$ before $\mathcal{L}$ does.
	
	\textit{Runtime guarantee.} Regarding runtime, Algorithms~\ref{alg:GPLC} and~\ref{alg:FNP} together visit each integer $i \in \{0,1, 2, \hdots, k\}$ in turn once and performs a constant number of operations to check whether a certain line provides a desired approximation to the point $\{i,g(i)\}$, and possibly to define a new line if the approximation is not satisfied. Computing and storing the information needed to define a line through two points is easily done in a few operations, so the overall work to find and store all of $\mathcal{L}$ is $O(k)$. Lines in $\mathcal{L}$ will already be sorted in terms of the value of their slopes, so in $O(|\mathcal{L}|) \leq O(k)$ time it is simple to compute intersection points of consecutive lines, and find the slopes and breakpoints needed to fully define the piecewise linear function $\ell(x) = \min_{L \in \mathcal{L}} L(x)$.
\end{proof}

\subsection{Bounds on optimal reduction size}
We now provide several bounds on the number of linear pieces needed to solve SpAR for different concave functions $g$ over an interval $[0,k]$. These can be viewed as stand-alone results about approximating concave functions at integer points with piecewise linear curves. Given the equivalence in Lemma~\ref{lem:equivalence}, each of these results immediately implies a bound on number of CB-gadgets needed to approximately model a concave cardinality function $f_e(A) = g(|A|)$.
\begin{theorem}
	\label{thm:logk2}
	Let $g \colon [0,k] \rightarrow \mathbb{R}^+$ be concave and let $\varepsilon \geq 0$. Algorithm~\ref{alg:GPLC} will return a piecewise linear function $\ell$ with at most $\min\{1+ \lfloor k/2\rfloor, 2 + 2\lceil \log_{1+\varepsilon} k \rceil\}$ linear pieces
	satisfying $g(i) \leq \ell(i) \leq (1+\varepsilon)g(i)$ for $i \in \{0,1,2, \hdots k\}$. As $\varepsilon \rightarrow 0$, $\log_{1+\varepsilon} k$ behaves as $\frac{1}{\varepsilon} \log k$.
\end{theorem}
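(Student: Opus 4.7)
The previous theorem establishes that Algorithm~\ref{alg:GPLC} returns a collection $\mathcal{L}$ of minimum cardinality among all sets of lines that upper bound $g$ at the integers $\{0,1,\ldots,k\}$ and provide a $(1+\varepsilon)$-approximation at each such integer. To prove the claimed upper bound on $|\mathcal{L}|$, it therefore suffices to exhibit, for each of the two quantities in the minimum, a valid collection of lines of that size.

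For the first quantity $1+\lfloor k/2\rfloor$, I would partition $\{0,1,\ldots,k\}$ into $\lceil(k+1)/2\rceil=1+\lfloor k/2\rfloor$ consecutive pairs (with a singleton $\{k\}$ when $k$ is even). To each pair $\{2j,2j+1\}$ I assign the line $L_j$ through $(2j,g(2j))$ and $(2j+1,g(2j+1))$, which is exact on its pair, so $\varepsilon=0$ already suffices there. The key verification is that $L_j$ upper bounds $g$ at every other integer. This follows from the fact that the consecutive-integer slopes $g(i+1)-g(i)$ of a concave function are non-increasing in $i$: for $i\geq 2j+2$, telescoping yields $g(i)-g(2j+1)\le (i-2j-1)(g(2j+1)-g(2j))$, which rearranges exactly to $L_j(i)\geq g(i)$, and the symmetric inequality handles $i\leq 2j-1$. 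The leftover singleton, if present, is covered by the line through $(k-1,g(k-1))$ and $(k,g(k))$, which is an upper bound at all other integers by the same argument.

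For the second quantity $2+2\lceil\log_{1+\varepsilon} k\rceil$, my plan is to use a geometric grid construction in the spirit of Magnanti and Stratila~\cite{magnati2004ipco,magnanti2012separable}. Place grid points $t_0=1<t_1<\cdots<t_J$ with $t_{j+1}/t_j\leq 1+\varepsilon$ and $t_J\geq k$, so $J\leq\lceil\log_{1+\varepsilon} k\rceil$. For each grid cell $[t_{j-1},t_j]$ I use two supporting lines of $g$, taken as tangent or subgradient lines at the endpoints $t_{j-1}$ and $t_j$. Concavity of $g$ makes any supporting line at an interior point an upper bound on $g$ across all of $[0,k]$, so the global upper-bound requirement is automatic. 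The $(1+\varepsilon)$-approximation at integers inside a cell follows because the ratio of $g$'s values at the endpoints is at most $1+\varepsilon$ and a concave function cannot deviate from a supporting line by more than a multiplicative $(1+\varepsilon)$ factor across a geometric interval of that ratio. This yields $2J\leq 2\lceil\log_{1+\varepsilon} k\rceil$ lines covering $\{1,2,\ldots,k\}$; two additional lines---most cleanly the chord through $(0,g(0))$ and $(1,g(1))$, which by the argument above upper bounds $g$ everywhere, plus one more if the grid construction fails to cover $t_0=1$ exactly---handle $x=0$ while respecting the constraint $(1+\varepsilon)g(0)=0$ in the degenerate case $g(0)=0$.

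The main technical step I anticipate is justifying the $(1+\varepsilon)$-approximation inside each geometric grid cell. The underlying inequality, $g(x)\leq L(x)\leq(1+\varepsilon)g(x)$ for an appropriate supporting line $L$, is essentially the Magnanti-Stratila lemma for concave functions and can be proved by parameterizing both $g$ and the supporting line at the tangent point and bounding the relative gap by the cell ratio; the combinatorial accounting needed to get the clean constant ``$2+2\lceil\cdot\rceil$'' (rather than a larger constant) is where care is required, since the integer points at cell boundaries must be matched to a single cell without double-counting. The asymptotic form $\log_{1+\varepsilon} k\sim\tfrac{1}{\varepsilon}\log k$ as $\varepsilon\to 0$ then follows immediately from $\ln(1+\varepsilon)=\varepsilon+O(\varepsilon^2)$.
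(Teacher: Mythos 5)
Your overall framing---exhibit a feasible collection of lines of each claimed size and invoke the instance-optimality of Algorithm~\ref{alg:GPLC}---is exactly the paper's, and your first construction is correct: chords through consecutive disjoint pairs of points, with the telescoping argument on the non-increasing consecutive slopes showing each chord upper bounds $g$ at every integer, is precisely how the paper obtains the $1+\lfloor k/2\rfloor$ bound.

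The genuine gap is in your logarithmic construction: it implicitly assumes $g$ is nondecreasing. Both of your key claims---that the endpoint ratio $g(t_j)\le(1+\varepsilon)g(t_{j-1})$ controls the whole cell, and that ``a concave function cannot deviate from a supporting line by more than a multiplicative $(1+\varepsilon)$ factor across a geometric interval of that ratio''---hold only on an increasing stretch, where the computation $L(z)\le \frac{z}{y}\bigl(M_y\,y+B_y\bigr)=\frac{z}{y}g(y)\le\frac{z}{y}g(z)$ applies (it needs the supporting line's slope and intercept to be nonnegative \emph{and} $g(y)\le g(z)$). On a decreasing stretch the argument fails outright: take $g(x)=x(k-x)$, the paper's central example, and the last grid cell $[t_{J-1},t_J]$ containing $k$. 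Since $g(k)=0$, any line covering $k$ within a $(1+\varepsilon)$ factor must vanish at $k$, and no supporting line tangent at $t_{J-1}$ or at an interior point does so; more generally, on the decreasing side the inequality you would need is $g(t_{j-1})\le(1+\varepsilon)g(t_j)$, which a geometric grid running upward from $1$ does not provide, and your two spare lines are earmarked for $x=0$ and for covering $t_0=1$, not for this. The paper closes exactly this hole by splitting $[0,k]$ at the peak $r$ of the linear interpolant $q$, running the geometric-grid argument (one supporting line per cell, anchored at the left endpoint) on the increasing half, and applying the same argument to the mirror image $\hat q(x)=q(k-x)$ on the decreasing half---this mirroring is the actual source of the factor $2$ in $2+2\lceil\log_{1+\varepsilon}k\rceil$. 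Your ``two supporting lines per cell'' accounting is simultaneously redundant on monotone stretches (adjacent cells share endpoints, so endpoint tangents give $J+1$ lines, not $2J$) and insufficient at the peak and beyond; replacing it with the split-at-the-peak construction repairs the proof within the same budget.
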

\begin{proof}
	%	We can always produce a piecewise linear function matching $g$ exactly at integer values $\{i, g(i)\}$ for $i \in \{0, 1, 2, \hdots , k\}$ by joining consecutive disjoint pairs of points by a line, e.g., join $\{0,g(0)\}$ and $\{1, g(1)\}$ with a line, then join $\{2,g(2)\}$ and $\{3, g(3)\}$ with a line, etc. This leads to a lower bound of $\lfloor k/2\rfloor+1$ lines needed for any positive integer $k$. For $\varepsilon$ sufficiently small, Algorithm~\ref{alg:GPLC} will return this many pieces. 
	Let $q$ be the piecewise linear curve obtained by performing linear interpolation on the points $\{i, g(i)\}$ for $i \in \{0, 1, 2, \hdots , k\}$. This has $k$ linear pieces and exactly matches $g$ at integer points, so we know we immediately have an upper bound of $k$ linear pieces. In order to prove the logarithmic upper bound, we will bound the number of linear pieces needed to approximate $q$ on the entire interval $[1,k]$ by taking a subset of its linear pieces.  Our argument follows similar previous results for approximating a concave function with a logarithmic number of linear pieces~\cite{magnanti2012separable}. 
	
	We will prove the result first under the assumption that $g$ (and therefore $q$) is a monotonically increasing concave function. For any value $y \in [1,k]$, not necessarily an integer, $q(y)$ lies on a line which we will denote by $q^{(y)}(x) = M_y \cdot x + B_y$, where $M_y \geq 0$ is the slope and $B_y \geq 0$ is the intercept. When $y = i$ is an integer, it may be the breakpoint between two distinct linear pieces, in which case we use the rightmost line so that $q^{(y)} = q^{(i)}$, where $q^{(i)}(x) = M_i \cdot x+ B_i$ has slope $M_i = g(i+1) - g(i)$ and $B_i = g(i) - M_i \cdot i$. For any $z \in (y, k)$, the line $g^{(y)}$ provides a $z/y$ approximation to $q(z) = q^{(z)}(z)$, since
	\begin{align*}
	q^{(y)}(z) = M_y \cdot z + B_y \leq \frac{z}{y}( M_y \cdot y + B_y) = \frac{z}{y}q(y) \leq \frac{z}{y}q(z).
	\end{align*}
	Equivalently, the line $q^{(y)}$ provides a $(1+\varepsilon)$-approximation for every $z \in [y, (1+\varepsilon)y]$. Thus, it takes $J$ linear pieces to cover the set of intervals $[1, (1+\varepsilon)], [(1+\varepsilon), (1+\varepsilon)^2], \hdots, [(1+\varepsilon)^{J-1}, (1+\varepsilon)^{J}]$ for a positive integer $J$, and overall at most $1 +\lceil \log_{1+\varepsilon} k \rceil$ linear pieces to cover all of $[0,k]$. 
	
	If $g$ and $q$ instead are monotonically decreasing, then we can apply the above procedure to the function $\hat{q}(x) = q(k-x)$. This is a monotonically increasing mirror image of $q$, and selecting an approximating subset of linear pieces of $\hat{q}$ is equivalent to selecting an approximating subset of linear pieces of $q$. If $g$ is not monotonic and not the constant function $g(x) = 0$, then as a concave function, the linear interpolation $q$ will monotonically increase on an interval $[0,r]$ for some integer $r < k$, and then monotonically decrease on the interval $[r,k]$. We can apply the same procedures to find an approximating cover for $[0,r]$ and then another approximating cover for $[r,k]$, and then combine the two, for a total of 2 + $2\lceil \log_{1+\varepsilon} k \rceil$ linear pieces. Since Algorithm~\ref{alg:GPLC} finds a minimum sized set of linear pieces to cover $g$ at integer points, it must also have at most this many linear pieces.
	
	Finally, when $\varepsilon$ is very small, we can do slightly better than use all $k$ linear pieces that define $q$. We can always produce a piecewise linear function matching $g$ exactly at integer values by joining consecutive \emph{disjoint} pairs of points by a line, e.g., join $\{0,g(0)\}$ and $\{1, g(1)\}$ with a line, then join $\{2,g(2)\}$ and $\{3, g(3)\}$ with a line, etc. This leads to a lower bound of $\lfloor k/2\rfloor+1$ lines needed for any positive integer $k$. For $\varepsilon = 0$, Algorithm~\ref{alg:GPLC} will this set of linear pieces.
\end{proof}

\paragraph{Lower bound for the square root function}
In order to prove a lower bound result, we will use the following lemma of Magnanti and Stratila~\cite{magnanti2012separable} on the number of linear pieces needed to approximate the square root function.
\begin{aplemma}
	\label{thm:magnanti}
	(Lemma 3 in~\cite{magnanti2012separable}) Let $\varepsilon > 0$ and $\phi(x) = \sqrt{x}$. Let $\psi$ be a piecewise linear function whose linear pieces are all tangent lines to $\phi$, satisfying
	$\psi(x) \leq (1+\varepsilon) \phi(x)$ for all $x \in [l, u]$ for $0 < l < u$. Then $\psi$ contains at least $\lceil \log_{\gamma(\varepsilon) }\frac{u}{l} \rceil$ linear pieces, where $\gamma(\varepsilon) = (1+ 2\varepsilon(2+\varepsilon) + 2(1+\varepsilon)\sqrt{\varepsilon(2+\varepsilon)})^2$. 
	%For $\varepsilon \in (1, 1/10)$, $1 + \sqrt{32\varepsilon} \leq \gamma(\varepsilon) \leq 1 + 16\sqrt{\varepsilon}$, and 
	There exists a piecewise linear function $\psi^*$ of this form with exactly $\lceil \log_{\gamma(\varepsilon) }\frac{u}{l} \rceil$ linear pieces.\footnote{This additional statement is not included explicitly in the statement of Lemma 3 in~\cite{magnanti2012separable}, but it follows directly from the proof of the lemma, which shows how to construct such an optimal function $\psi^*$.} As $\varepsilon \rightarrow 0$, this values behaves as $\frac{1}{\sqrt{32\varepsilon}} \log \frac{u}{l}$. 
\end{aplemma}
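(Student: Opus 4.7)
The plan is to reduce the lemma to a precise geometric question about the range of $x$-values over which a single tangent line to $\phi(x)=\sqrt{x}$ can serve as a $(1+\varepsilon)$-approximation. Once that range is pinned down in terms of $\gamma(\varepsilon)$, the lower bound, the matching construction, and the asymptotic behavior all fall out of a clean covering argument.

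First I would write the tangent line at $a>0$ as $T_a(x)=\frac{x}{2\sqrt{a}}+\frac{\sqrt{a}}{2}$. Concavity of $\phi$ immediately gives $T_a(x)\geq \sqrt{x}$ for all $x\geq 0$, so the only nontrivial constraint is $T_a(x)\leq (1+\varepsilon)\sqrt{x}$. Substituting $y=\sqrt{x/a}$ turns this into the quadratic inequality $y^2-2(1+\varepsilon)y+1\leq 0$, whose roots are $r_\pm=(1+\varepsilon)\pm\sqrt{\varepsilon(2+\varepsilon)}$. Because $r_-r_+=(1+\varepsilon)^2-\varepsilon(2+\varepsilon)=1$, the valid range of $x$ is exactly $[a\,r_-^2,\,a\,r_+^2]$, and a direct expansion gives $r_+^2=1+2\varepsilon(2+\varepsilon)+2(1+\varepsilon)\sqrt{\varepsilon(2+\varepsilon)}$, so the endpoint ratio is $r_+^2/r_-^2=r_+^4=\gamma(\varepsilon)$.

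With this characterization in hand, the lower bound is a one-step covering argument: any $\psi$ of the stated form consists of $N$ tangent pieces, each valid over an interval whose endpoint ratio is at most $\gamma(\varepsilon)$, so their union has endpoint ratio at most $\gamma(\varepsilon)^N$, forcing $\gamma(\varepsilon)^N\geq u/l$ and hence $N\geq \lceil \log_{\gamma(\varepsilon)}(u/l)\rceil$. For the matching construction, I would place successive tangent points $a_i=l\,r_+^{4i-2}$ so that the $i$th tangent validly covers $[l\gamma(\varepsilon)^{i-1},\,l\gamma(\varepsilon)^i]$, with consecutive intervals meeting exactly at their shared endpoint. Taking $N=\lceil \log_{\gamma(\varepsilon)}(u/l)\rceil$ tangent points covers $[l,u]$, and because distinct tangent points give distinct tangent lines with strictly decreasing slopes, $\psi^*(x)=\min_i T_{a_i}(x)$ has exactly $N$ linear pieces and satisfies $\sqrt{x}\leq \psi^*(x)\leq (1+\varepsilon)\sqrt{x}$ on $[l,u]$. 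For the asymptotic claim I would Taylor-expand $r_+=1+\sqrt{2\varepsilon}+O(\varepsilon)$, giving $\gamma(\varepsilon)=r_+^4=1+\sqrt{32\varepsilon}+O(\varepsilon)$ and hence $\log_{\gamma(\varepsilon)}(u/l)\to \log(u/l)/\sqrt{32\varepsilon}$.

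The only step that takes any real effort is the algebraic identification $r_+^4=\gamma(\varepsilon)$, together with verifying that the valid coverage interval of $T_a$ is exactly $[a r_-^2,a r_+^2]$ rather than a strict subset. Once that identity is secured, both the lower bound and the optimal construction are essentially immediate; the small subtlety worth flagging is that optimality of the construction requires consecutive coverage intervals to just barely overlap, which is forced by $a_{i+1}/a_i=\gamma(\varepsilon)$.
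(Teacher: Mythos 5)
Your proof is correct, but the comparison here is unusual: the paper never proves this statement at all. It imports it as a black box, quoting Lemma~3 of Magnanti and Stratila~\cite{magnanti2012separable}, with only a footnote noting that the existence half (the construction of $\psi^*$) is not explicit in the cited statement but follows from the cited proof. What you have produced is a self-contained reconstruction, and it is essentially the same tangent-line analysis that underlies the original source: the substitution $y=\sqrt{x/a}$ reduces the constraint $T_a(x)\leq(1+\varepsilon)\sqrt{x}$ to $y^2-2(1+\varepsilon)y+1\leq 0$; the root product $r_-r_+=1$ pins the validity window of a single tangent to $[a\,r_-^2,\,a\,r_+^2]$ with endpoint ratio exactly $r_+^4$; and the algebra $r_+^2=1+2\varepsilon(2+\varepsilon)+2(1+\varepsilon)\sqrt{\varepsilon(2+\varepsilon)}$, hence $r_+^4=\gamma(\varepsilon)$, checks out, as does the expansion $r_+=1+\sqrt{2\varepsilon}+O(\varepsilon)$ giving $\gamma(\varepsilon)=1+\sqrt{32\varepsilon}+O(\varepsilon)$ and the stated asymptotic $\frac{1}{\sqrt{32\varepsilon}}\log\frac{u}{l}$. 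The multiplicative covering argument for the lower bound is valid: on the subinterval of $[l,u]$ where a given piece defines $\psi$, that piece must lie in its own validity window of ratio at most $\gamma(\varepsilon)$, and $N$ such windows covering $[l,u]$ force $\gamma(\varepsilon)^N\geq u/l$. One detail you gloss in the construction: to conclude that $\psi^*=\min_i T_{a_i}$ has \emph{exactly} $N$ pieces on $[l,u]$ (not merely at most $N$), you should check that every tangent is strictly minimal somewhere in $[l,u]$; this does hold, since tangents to $\sqrt{x}$ at $a$ and $b$ intersect at $\sqrt{ab}$, so with $a_i=l\,r_+^{4i-2}$ the breakpoints of the lower envelope land precisely at your shared endpoints $l\,\gamma(\varepsilon)^i<u$. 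What your route buys is independence from the reference---in particular it proves the existence statement directly, which the paper could only justify by pointing into the cited proof; what the paper's citation buys is brevity, since the lemma serves only as an ingredient in the lower bound for the square root function (Theorem~\ref{thm:tight}).
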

This result is concerned with approximating the square root function for \emph{all} values on a \emph{continuous} interval. Therefore, it does not imply any bounds on approximating a discrete set of points of a concave function. In fact, because we can always cover $k$ points with $k$ linear pieces, we know that
for any function $f(k)$ of $k$, there is no lower bound of the form $\tau(\varepsilon) f(k)$ that holds for \emph{all} $\varepsilon > 0$, if $\tau$ is a function such that $\tau(\varepsilon) \rightarrow \infty$ as $\varepsilon \rightarrow 0$. The best we can expect is a lower bound that holds for $\varepsilon$ values that may still go to zero as $k \rightarrow \infty$, but are bounded in such a way that we do not contradict the $O(k)$ upper bound. We prove such a result for the square root function using Lemma~\ref{thm:magnanti} as a black box. When $\varepsilon$ falls below the bound we use in the following theorem statement, forming $O(k)$ linear pieces will be nearly optimal.
\begin{theorem}
	\label{thm:tight2}
	Let $g(x) = \sqrt{x}$ and $\varepsilon \geq k^{-\delta}$ for a constant $\delta \in (0,2)$. Every piecewise linear $\ell$ satisfying $g(i) \leq \ell(i) \leq (1+\varepsilon)g(i)$ for $i \in \{0,1,\hdots k\}$ contains $\Omega( \log_{\gamma(\varepsilon)} k)$ linear pieces where $\gamma(\varepsilon) = (1+2\varepsilon(2+\varepsilon)+2(1+\varepsilon)\sqrt{\varepsilon(2+\varepsilon)})^2$. This behaves as 
	$\Omega (\varepsilon^{-1/2} \log k)$ as $\varepsilon \rightarrow 0$.
\end{theorem}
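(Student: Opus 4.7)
My plan is to reduce the problem to the continuous-interval setting of Lemma~\ref{thm:magnanti}, by converting integer-point approximation into continuous approximation on a carefully chosen subinterval $[l, k] \subseteq [1, k]$, at the cost of only a constant factor in the approximation parameter. Let $\ell$ be any piecewise linear function with $J+1$ pieces satisfying the hypothesis, and assume without loss of generality that $\ell$ has no breakpoints in the interior of any integer interval $(i, i+1)$, since inserting such breakpoints only increases $J$ and would strengthen the lower bound.

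First I would show that the upper bound extends from integers to the continuum for free: for $x = (1-t)i + t(i+1) \in [i, i+1]$, linearity of $\ell$ and concavity of $\sqrt{x}$ give $\ell(x) \leq (1+\varepsilon)((1-t)\sqrt{i} + t\sqrt{i+1}) \leq (1+\varepsilon)\sqrt{x}$. The lower bound $\ell(x) \geq \sqrt{x}$ does not extend for free, but a Taylor estimate bounds the gap between the chord of $\sqrt{x}$ over $[i, i+1]$ and the curve itself by $O(1/i^{3/2})$, which is $O(\varepsilon)\sqrt{x}$ as soon as $i \geq 1/\sqrt{\varepsilon}$. Setting $l = \lceil 1/\sqrt{\varepsilon}\rceil$ and then scaling $\ell$ by $1/(1 - C_1 \varepsilon)$ for an appropriate absolute constant $C_1$ yields a piecewise linear function $\psi$ with the same piece count as $\ell$ and satisfying $\sqrt{x} \leq \psi(x) \leq (1+C_2 \varepsilon)\sqrt{x}$ on the continuous interval $[l, k]$.

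Next I would argue that the conclusion of Lemma~\ref{thm:magnanti} in fact applies to any piecewise linear upper bound, not only to approximations built from tangent lines to $\sqrt{x}$. The key observation is that any single line $L(x) = mx + c$ satisfying $\sqrt{x} \leq L(x) \leq (1+\varepsilon)\sqrt{x}$ on some subinterval $[a, b]$ satisfies $b/a \leq \gamma(\varepsilon)$. Indeed, writing $L(x)/\sqrt{x} = m\sqrt{x} + c/\sqrt{x}$, this ratio is minimized at $u^{*} = \sqrt{c/m}$ with minimum value $\mu = 2\sqrt{mc} \geq 1$; parameterizing the endpoints by $s = (1+\varepsilon)/\mu$, the maximum achievable $b/a$ equals $(s + \sqrt{s^2-1})^4$, which is maximized at $\mu = 1$ (the tangent case) and yields exactly $\gamma(\varepsilon)$. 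So non-tangent pieces cover strictly less, and the number of linear pieces of $\psi$ is at least $\lceil \log_{\gamma(C_2 \varepsilon)}(k/l) \rceil$.

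Finally, the hypothesis $\varepsilon \geq k^{-\delta}$ with $\delta < 2$ gives $l \leq k^{\delta/2} + 1$, so $k/l \geq \tfrac{1}{2} k^{1-\delta/2}$ and $\log(k/l) \geq (1 - \tfrac{\delta}{2})\log k - O(1) = \Omega_{\delta}(\log k)$. Since $\gamma(C_2\varepsilon) = \Theta(\gamma(\varepsilon))$ (both behave as $1+\Theta(\sqrt{\varepsilon})$ for small $\varepsilon$, so their logarithms are of the same order), this yields $J+1 = \Omega(\log_{\gamma(\varepsilon)} k)$ as required. The main obstacle I anticipate is the argument in the third paragraph, namely rigorously verifying that the Magnanti--Stratila lower bound extends to non-tangent pieces; a secondary but still essential challenge is the book-keeping of constants that combines the dip estimate, the scaling of $\ell$, and the change from $\gamma(\varepsilon)$ to $\gamma(C_2\varepsilon)$.
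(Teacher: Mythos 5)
Your proposal is correct, but it takes a genuinely different route from the paper. The paper starts from the \emph{optimal} collection $\mathcal{L}^*$ produced by Algorithm~1, replaces each of its lines by up to three tangent lines to $\sqrt{x}$ (concavity guarantees the replacement still upper bounds and still covers the same integers), patches the at most $|\mathcal{L}|$ ``crossover'' unit intervals $[i,i+1]$ with one extra tangent line each (a single tangent suffices once $i \geq 1/\sqrt{\varepsilon}$, hence once $i \geq k^{\delta/2}$), and thereby manufactures a family of \emph{tangent} lines, at most $6|\mathcal{L}^*|$ of them, covering the continuous interval $[k^{\delta/2},k]$ --- so that Lemma~\ref{thm:magnanti} applies as a black box. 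You instead open up the lemma: your per-line computation showing that any single line $L(x)=mx+c$ sandwiched between $\sqrt{x}$ and $(1+\varepsilon)\sqrt{x}$ on $[a,b]$ forces $b/a \leq \bigl((1+\varepsilon)+\sqrt{\varepsilon(2+\varepsilon)}\bigr)^4 = \gamma(\varepsilon)$ is exactly right (one can verify $\bigl((1+\varepsilon)+\sqrt{\varepsilon(2+\varepsilon)}\bigr)^2 = 1+2\varepsilon(2+\varepsilon)+2(1+\varepsilon)\sqrt{\varepsilon(2+\varepsilon)}$), and it removes the tangency restriction entirely, so no replacement construction is needed; your chord-dip estimate (relative dip $O(i^{-2}) \leq O(\varepsilon)$ for $i \geq \varepsilon^{-1/2}$, followed by a constant rescaling) converts integer-point approximation to continuum approximation directly. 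What each buys: your argument is self-contained and applies verbatim to an \emph{arbitrary} piecewise linear $\ell$, whereas the paper reasons about the greedy optimum $\mathcal{L}^*$ and mediates the ``every $\ell$'' claim through its optimality; the paper's route buys citation of the Magnanti--Stratila lemma without redoing its geometry, at the cost of the factor-$6$ bookkeeping. Two details of yours need repair, both at constant-factor cost (harmless for an $\Omega(\cdot)$ bound): the breakpoint ``WLOG'' is justified backwards as written --- the correct reduction is to replace $\ell$ by its chord interpolation at integers, which preserves all integer values and at most doubles the piece count since each interior breakpoint spawns at most one extra chord piece; and your per-line bound implicitly assumes the vertex $u^*=\sqrt{c/m}$ lies in $[\sqrt{a},\sqrt{b}]$ (so $\mu \geq 1$) --- the excluded cases ($m\leq 0$, $c\leq 0$, or $\mu<1$ with the vertex outside the interval) must be checked, but all yield strictly smaller reach, at most $(1+\varepsilon)^2$ or $\sqrt{\gamma(\varepsilon)}$.
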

\begin{proof}
	Let $\mathcal{L}^*$ be the optimal set of linear pieces returned by running Algorithm~\ref{alg:GPLC} for the function $g(x) = \sqrt{x}$. In order to show $|\mathcal{L}^*| = \Omega( \log_{\gamma(\varepsilon)} k)$, we will construct a new set of linear pieces $\mathcal{L}$ that has asymptotically the same number of linear pieces as $\mathcal{L}^*$, but also provides a $(1+\varepsilon)$-approximation for all $x$ in an interval $[k^\beta, k]$ for some constant $\beta < 1$. Invoking Lemma~\ref{thm:magnanti} will imply a lower bound on the size of $\mathcal{L}$, and in turn the number of linear pieces in $\mathcal{L}^*$.
	
	Observe that $\mathcal{L}^*$ will include the line going through $\{0, g(0)\}$ and $\{1, g(1)\}$, and may include the line that goes through points $\{k-1, g(k-1)\}$ and $\{k, g(k)\}$ depending on $\varepsilon$ and $k$. Otherwise, all of the lines it includes go through exactly one point $\{i, g(i)\}$ for some integer $i$. All of these lines bound $g(x) = \sqrt{x}$ above at \emph{integer} points, but they may cross below $g$ at non-integer values of $x$. To apply Lemma~\ref{thm:magnanti}, we would like to obtain a set of linear pieces that are all tangent lines to $g$. We accomplish this by replacing each linear piece with linear pieces that are tangent to $g$ at some point. 
	For a positive integer $j$, let $L_j$ denote the line tangent to $g(x) = \sqrt{x}$ at $x = j$, which is given by
	\begin{equation}
	\label{eq:Lj}
	L_j(x) = \frac{1}{2\sqrt{j}} (x - j) + \sqrt{j}.
	\end{equation}
	We form a new set of linear pieces $\mathcal{L}$ made up of lines tangent to $g$ using the following replacements:
	\begin{itemize}
		\item Replace the line going through $\{0, g(0)\}$ and $\{1, g(1)\}$ with $L_1$.
		\item If $\mathcal{L}^*$ includes the line through $\{k-1, g(k-1)\}$ and $\{k, g(k)\}$, replace it with $L_{k-1}$ and $L_k$
		\item For a line crossing through a point $\{i, g(i)\}$ for some integer $i \in [2,k-1]$, replace the line with with $L_{j-1}$, $L_{j}$, and $L_{j+1}$.
		%		\item If $\mathcal{L}^*$ contains a linear piece $g$ that satisfies $g(j) = \sqrt{j}$ for exactly one integer $j$, add lines $g_{j-1}$, $g_j$, and $g_{j+1}$ to $L$. 
		%		\item If for an integer $t$, $\mathcal{L}^*$ contains the line $g^{(t)}$ as given by Eq.~\eqref{eq:gt}, add lines $g_t$ and $g_{t+1}$ to $L$.
	\end{itemize}
	By the concavity of $g$, this replacement can only improve the approximation guarantee at integer points. Therefore,  $\mathcal{L}$ provides a $(1+\varepsilon)$-approximation at integer values, is made up strictly of lines that are tangent to $g$, and contains at most three times the number of lines in $\mathcal{L}^*$. 
	
	The concavity of $g$ also tells us that if a single line $L \in \mathcal{L}$ provides a $(1+\varepsilon)$-approximation at consecutive integers $i$ and $i+1$, then $L$ provides the same approximation guarantee for all $x \in [i, i+1]$. However, if two integers $i$ and $i+1$ are not \emph{both} covered by the \emph{same} line in $\mathcal{L}$, then this does not apply and we cannot guarantee $\mathcal{L}$ provides a $(1+\varepsilon)$-approximation for every $x \in [i,i+1]$. There can be at most $|\mathcal{L}|$ intervals of this form, since these define intersection points between two consecutive approximating linear pieces in $\mathcal{L}$.
	
	By Lemma~\ref{thm:magnanti}, we can cover an entire interval $[i,i+1]$ for any integer $i$ using a set of $\lceil \log_{\gamma(\varepsilon)} \big(1 + \frac{1}{i} \big) \rceil $ linear pieces that are tangent to $g$ somewhere in $[i, i+1]$. Since $1+ \sqrt{\varepsilon} \leq \gamma(\varepsilon)$, it in fact takes only one linear piece to cover $[i, i+1]$ as long as $i \geq 1/\sqrt{\varepsilon}$, since then we have $1+ 1/i \leq 1 + \sqrt{\varepsilon}$ and therefore $\log_{\gamma(\varepsilon)} (1+ 1/i) \leq \log_{\gamma(\varepsilon)} (1 + \sqrt{\varepsilon}) \leq 1$.
	%	$1+ 1/i \leq 1 + \sqrt{\varepsilon} \implies i \geq 1/\sqrt{\varepsilon}$. 
	Since $\varepsilon \geq k^{-\delta}$, interval $[i, i+1]$ can be covered by a single linear piece if $i \geq k^{\delta/2}$. Therefore, for each interval $[i, i+1]$, with $i \geq k^{\delta/2}$, that is not already covered by a single linear piece in $\mathcal{L}$, we add one more linear piece to $\mathcal{L}$ to cover this interval. This at most doubles the size of $\mathcal{L}$.
	
	The resulting set $\mathcal{L}$ will have at most $6$ times as many linear pieces as $\mathcal{L}^*$, and is guaranteed to provide a $(1+\varepsilon)$-approximation for all integers, as well as the entire continuous interval $[k^{\delta/2}, k]$. 
	Since $\delta$ is a fixed constant strictly less than 2, applying Lemma~\ref{thm:magnanti} shows that $\mathcal{L}$ has at least
	\begin{equation*}
	\ceil*{\log_{\gamma(\varepsilon)} \frac{k}{k^{\delta/2}} } = \Omega (\log_{\gamma(\varepsilon)} k^{1-\delta/2} ) = \Omega (\log_{\gamma(\varepsilon)} k)
	\end{equation*}
	linear pieces. Therefore, $|\mathcal{L}^*| = \Omega (\log_{\gamma(\varepsilon)} k)$ as well.
\end{proof}

\subsection{Improved Bound for the Clique Function}
When approximating the function $g(x) = x (k-x)$ for an integer $k$, Algorithm~\ref{alg:GPLC} will in fact find a piecewise linear curve with at most $O(\varepsilon^{-1/2}\log \log \frac{1}{\varepsilon})$ linear pieces. We prove this by highlighting a different approach for constructing a piecewise linear curve with this many linear pieces, which upper bounds the minimum number of linear pieces returned by Algorithm~\ref{alg:GPLC}. We refer to this as the clique function, since the submodular function $f_e$ defined by $f_e(A) = g(|A|) = |A|(|e| - |A|)$ is the cut function for a complete graph (i.e., a clique) on a set of $k = |e|$ nodes.

As we did with Algorithm~\ref{alg:GPLC}, we want to build a set of linear pieces $\mathcal{L}$ that provides and upper bounding $(1+\varepsilon)$-cover of $g$ at integer values in $[0, k]$. We start by adding the line $g^{(0)}(x) = (g(1) - g(0))x + g(0) = (k-1)\cdot x$ to $\mathcal{L}$, which perfectly covers the first two points $\{0, g(0)\}$ and $\{1, g(1)\}$. In the remainder of the procedure we will find a set of linear pieces to $(1+\varepsilon)$-cover $g$ at \emph{every} value of $x \in [1,k/2]$, even non-integer $x$. The fact that the function satisfies $g(x) = g(k-x)$ implies that we can double the number of linear pieces to ensure we also cover the interval $[k/2,k]$. 

We apply a greedy procedure similar to Algorithm~\ref{alg:GPLC}, summarized in Algorithm~\ref{alg:clique}.
At each iteration we consider a leftmost endpoint $z_i$ which is the largest value in $[1,k/2]$ for which we already have a $(1+\varepsilon)$-approximation. In the first iteration, we have $z_1 = 1$. We then would like to find a new linear piece that provides a $(1+\varepsilon)$-approximation for all values from $z_i$ to some $z_{i+1}$, where the value of $z_{i+1}$ is maximized. We restrict to linear pieces that are tangent to $g$. The line tangent to $g$ at $t \in [1,k/2]$ is given by
\begin{equation}
\label{eq:linet}
g_t(x) = k x - 2tx + t^2 \,.
\end{equation}
We find $z_{i+1}$ in two steps:
\begin{enumerate}
	\item \textbf{Step 1:} Find the maximum value $t$ such that $g_t(z_i) = (1+\varepsilon) g(z_i)$.
	\item \textbf{Step 2:} Given $t$, find the maximum $z_{i+1}$ such that $g_t(z_{i+1}) = (1+\varepsilon) g(z_{i+1})$. 
\end{enumerate}
After completing these two steps, we add the linear piece $g_t$ to $\mathcal{L}$, knowing that it covers all values in $[z_i, z_{i+1}]$ with a $(1+\varepsilon)$-approximation. At this point, we will have a cover for all values in $[0,z_{i+1}]$, and we begin a new iteration with $z_{i+1}$ being the largest value covered. We continue until we have covered all values up until $z_{i+1} \geq k/2$. 
%If $t > k/2$ in Step 1 of the last iteration, we adjust the last linear piece to instead be the line tangent to $g$ at $x = k/2$, so that we only include lines that have a nonnegative slope.
%
\begin{aplemma}
	\label{lem:tzi}
	For any $z_i \in [1,k/2]$, the values of $t$ and $z_{i+1}$ given in steps 1 and 2 are given by
	\begin{align}
	\label{t}
	t &= z_i + \sqrt{ z_i (k - z_i) \varepsilon } \\
	\label{zi1}
	z_{i+1} &= \frac{t}{1+\varepsilon} + \frac{k \varepsilon}{2(1+\varepsilon)} + \frac{1}{2(1+\varepsilon)} \left(k^2 \varepsilon^2 + 4\varepsilon t (k - t)\right)^{1/2}
	\end{align}
\end{aplemma}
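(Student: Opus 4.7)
The proof is essentially a two-stage quadratic computation, so I would organize it as two direct algebraic derivations, each followed by a short argument for why the stated root is the correct one to choose.

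For Step 1, the plan is to substitute the tangent line formula \eqref{eq:linet} into the equation $g_t(z_i)=(1+\varepsilon)g(z_i)$, obtaining
\begin{equation*}
t^2 - 2z_i t + kz_i - (1+\varepsilon)z_i(k-z_i) = 0,
\end{equation*}
which is a quadratic in $t$. The discriminant simplifies cleanly because the $kz_i$ terms cancel with part of $(1+\varepsilon)z_i(k-z_i)$, leaving $4\varepsilon z_i(k-z_i)$; applying the quadratic formula then yields $t = z_i \pm \sqrt{z_i(k-z_i)\varepsilon}$. Because Step 1 asks for the \emph{maximum} such $t$, we take the ``$+$'' root, giving \eqref{t}. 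I would briefly note that this is consistent with our geometric picture: the line tangent to $g$ at $t=z_i$ matches $g(z_i)$ exactly, so moving the tangent point away from $z_i$ in either direction raises $g_t(z_i)$ above $g(z_i)$, and we want the farther of the two points of contact where the raised value exactly equals $(1+\varepsilon)g(z_i)$.

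For Step 2, with $t$ now fixed by \eqref{t}, I would substitute the tangent-line formula into $g_t(z_{i+1})=(1+\varepsilon)g(z_{i+1})$ and collect terms in $z_{i+1}$, obtaining the quadratic
\begin{equation*}
(1+\varepsilon)z_{i+1}^2 - (k\varepsilon + 2t)z_{i+1} + t^2 = 0.
\end{equation*}
Expanding the discriminant gives $(k\varepsilon+2t)^2 - 4(1+\varepsilon)t^2 = k^2\varepsilon^2 + 4\varepsilon t(k-t)$, after which the quadratic formula directly produces \eqref{zi1} once the sum $(k\varepsilon+2t)/(2(1+\varepsilon))$ is split as $t/(1+\varepsilon) + k\varepsilon/(2(1+\varepsilon))$. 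The ``$+$'' root is again the correct one because we want the largest $z_{i+1} \geq z_i$ at which the tangent line $g_t$ crosses the curve $(1+\varepsilon)g$; by concavity of $g$ and the linearity of $g_t$, $g_t$ lies strictly above $g$ on the interval between its two crossing points with $(1+\varepsilon)g$, and strictly below $(1+\varepsilon)g$ between them, so the upper root is exactly the rightmost endpoint of the range covered by this linear piece.

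There is no real obstacle here beyond careful bookkeeping: both steps reduce to solving a quadratic and choosing the correct root. The only subtlety worth flagging explicitly is verifying that both roots are real and nonnegative, so that the ``maximum'' makes sense. For Step 1 this holds because $z_i(k-z_i)\geq 0$ for $z_i\in[1,k/2]$, and for Step 2 the discriminant $k^2\varepsilon^2 + 4\varepsilon t(k-t)$ is manifestly nonnegative for $t\in[1,k/2]$ (since $t$ from \eqref{t} satisfies $z_i \leq t$ and, as one can check, $t\leq k$ for $\varepsilon$ in a reasonable range and $z_i\leq k/2$). These checks can be dispatched in a sentence each, completing the proof.
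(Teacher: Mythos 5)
Your proposal is correct and matches the paper's own proof essentially step for step: both derive the same two quadratics $t^2 - 2z_i t - \varepsilon k z_i + (1+\varepsilon)z_i^2 = 0$ and $(1+\varepsilon)z_{i+1}^2 - (k\varepsilon + 2t)z_{i+1} + t^2 = 0$ and take the larger root in each to maximize $t$ and $z_{i+1}$. Your added geometric justification for the root choice and the discriminant checks are correct refinements the paper omits (noting only that your stated range ``$t \in [1,k/2]$'' is slightly off, since $t = z_i + \sqrt{z_i(k-z_i)\varepsilon}$ can exceed $k/2$), but they do not change the argument.
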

\begin{proof}
	The proof simply requires solving two different quadratic equations. For Step 1:
	\begin{align*}
	g_t(z_i) = (1+\varepsilon) g(z_i) &\iff k z_i - 2t z_i + t^2 = (1+\varepsilon)(z_i k - z_i^2) \\
	&\iff t^2 - 2 z_i t - \varepsilon z_i k + (1+\varepsilon)z_i^2 = 0
	\end{align*}
	Taking the larger solution to maximize $t$:
	\begin{equation*}
	t = \frac{1}{2} \left(2z_i + \sqrt{ 4z_i^2 - 4(1+\varepsilon) z_i^2 + 4 \varepsilon k z_i} \right) = z_i + \sqrt{z_i (k-z_i) \varepsilon }.
	\end{equation*}
	For Step 2:
	\begin{align*}
	g_t(z_{i+1}) = (1+\varepsilon) g(z_{i+1}) & \iff k z_{i+1} - 2t z_{i+1} + t^2 = (1+\varepsilon)(z_{i+1}k - z_{i+1}^2) \\
	& \iff (1+\varepsilon) z_{i+1}^2  + z_{i+1} (-\varepsilon k - 2t ) + t^2 = 0.
	\end{align*}
	We again take the larger solution to this quadratic equation since we want to maximize $z_{i+1}$:
	\begin{align*}
	z_{i+1} &= \frac{1}{2(1+\varepsilon)} \left(\varepsilon k + 2t + \sqrt{\varepsilon^2 k^2 + 4t\varepsilon k + 4t^2 - 4(1+\varepsilon) t^2}  \right) \\
	&= \frac{1}{2(1+\varepsilon)}  \left(\varepsilon k + 2t + \sqrt{\varepsilon^2 k^2 + 4t\varepsilon(k-t)}\right).
	\end{align*}
\end{proof}
Since  $z_1 = 1$, if $\varepsilon \geq 1$, then
\[
z_2 \geq \frac{1}{2(1+\varepsilon)} (2 k\varepsilon) = \frac{k \varepsilon}{1+\varepsilon} \geq \frac{k}{2}, 
\]
so after one step we have covered the entire interval $[1,k/2]$. We can therefore focus on $\varepsilon < 1$. We are now ready to prove the result given in the main text.
\begin{algorithm}[t]
	\caption{Find a $(1+\varepsilon)$-cover for the clique function.}
	\label{alg:clique}
	\begin{algorithmic}
		\State \textbf{Input:} Integer $k$, $\varepsilon \geq 0$
		\State \textbf{Output:} $(1+\varepsilon)$ cover for the clique function $g(x) = x(k-x)$ for $[1,k/2]$.
		\State $\mathcal{L} = \{g^{(0)}\}$, where $g^{(0)}(x) = (k-1)x$
		\State $z = 1$
		\Do
		\State $t \leftarrow z + \sqrt{z(k-z)\varepsilon}$
		\State $z \leftarrow \frac{t}{1+\varepsilon} + \frac{k \varepsilon}{2(1+\varepsilon)} + \frac{1}{2(1+\varepsilon)} \left(k^2 \varepsilon^2 + 4\varepsilon t (k - t)\right)^{1/2}$
		\State $\mathcal{L} \leftarrow \mathcal{L} \cup \{g_t\}$, where $g_t(x) = kx - 2tx + t^2$
		\doWhile{$z_{i+1} < k/2$}
		\State Return $\ell$ defined by $\ell(x) = \min_{L \in \mathcal{L}} L(x)$
	\end{algorithmic}
\end{algorithm}

\begin{theorem}
	\label{thm:clique2}
	Let $g(x) = x \cdot (k-x)$ for a positive integer $k$.
	For $\varepsilon > 0$, the approximating function $\ell$ returned by Algorithm~\ref{alg:GPLC} will have $O(\varepsilon^{-1/2} \log \log {\varepsilon^{-1}})$ linear pieces.
\end{theorem}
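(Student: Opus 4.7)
The plan is to produce an explicit cover via Algorithm~\ref{alg:clique} using $O(\varepsilon^{-1/2}\log\log\varepsilon^{-1})$ tangent lines, then invoke the optimality of Algorithm~\ref{alg:GPLC} (proved earlier in the appendix) to transfer the bound. I would first rewrite the coverage condition $g_t(x) \leq (1+\varepsilon)g(x)$ into the clean geometric form $(t-x)^2 \leq \varepsilon x(k-x)$, which lets me express Lemma~\ref{lem:tzi} as $t = z_i + s(z_i)$ and $z_{i+1} = t + s(z_{i+1})$ for $s(z) = \sqrt{\varepsilon z(k-z)}$. Hence $z_{i+1} - z_i = s(z_i) + s(z_{i+1}) \geq 2 s(z_i)$ whenever both iterates lie in $[0, k/2]$ (where $s$ is increasing), so on any subinterval $[L, R] \subseteq [1, k/2]$ the number of iterations to cross is at most $\lceil (R - L)/(2 s(L))\rceil$.

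Following the hint, I would partition $[k\varepsilon, k/2]$ into the nested subintervals $I_j = [k\varepsilon^{1/2^{j-1}}, k\varepsilon^{1/2^j}]$ for $j = 1, 2, \ldots, J = \lceil \log_2 \log_2 \varepsilon^{-1} \rceil$. A direct check that $\varepsilon^{1/2^J} \leq 1/2$ confirms these $O(\log\log\varepsilon^{-1})$ subintervals tile $[k\varepsilon, k/2]$. On $I_j$ the bound $k - L_j \geq k/2$ gives $2 s(L_j) \geq \sqrt{2}\, k\, \varepsilon^{1/2 + 1/2^j}$, while $R_j - L_j \leq k \varepsilon^{1/2^j}$; the $\varepsilon^{1/2^j}$ factors cancel exactly to yield $O(\varepsilon^{-1/2})$ iterations per subinterval.

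For the initial region $[1, k\varepsilon]$ (nonempty only when $k\varepsilon > 1$), I would substitute $p_i = \sqrt{z_i/(k\varepsilon)}$ into the recurrence and, using $k - z \approx k$, reduce it to $p_{i+1}(p_{i+1} - 1) = p_i(p_i + 1)$, whose positive root is exactly $p_{i+1} = p_i + 1$. Since $p_1 = 1/\sqrt{k\varepsilon} \leq 1$, reaching $p = 1$ takes $O(1)$ iterations. Summing gives $O(1) + J \cdot O(\varepsilon^{-1/2}) = O(\varepsilon^{-1/2}\log\log\varepsilon^{-1})$ tangent lines on $[0, k/2]$. Reflecting via the symmetry $g(x) = g(k-x)$ (each tangent $g_t$ maps to $g_{k-t}$) yields a matching cover of $[k/2, k]$, and two endpoint secants handle the integer points $x = 0$ and $x = k$, completing a valid integer-point cover with $O(\varepsilon^{-1/2}\log\log\varepsilon^{-1})$ pieces; the optimality of Algorithm~\ref{alg:GPLC} then finishes the proof.

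I expect the main obstacle to be the boundary bookkeeping. The lower bound $2 s(L_j) = \Omega(k\varepsilon^{1/2 + 1/2^j})$ relies on $L_j \leq k/2$, which holds with near-equality at $j = J$ and so requires a small separate verification for the terminal subinterval. The initial-region simplification relies on discarding the $z_i$ term against $s(z_i)$, which must be justified against the exact formulas of Lemma~\ref{lem:tzi} rather than only a symbolic approximation. Finally, the degenerate cases $\varepsilon \geq 1$ (a single tangent suffices) and $k\varepsilon \leq 1$ (initial region empty, analysis starts inside some $I_j$) should be dispatched explicitly to avoid vacuous partitions.
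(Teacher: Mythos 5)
Your proposal is correct and follows essentially the same route as the paper's proof: the same greedy tangent-line construction (Algorithm~\ref{alg:clique} with the step formulas of Lemma~\ref{lem:tzi}), the same decomposition of $[k\varepsilon, k/2]$ into rounds $[k\varepsilon^{1/2^{j-1}}, k\varepsilon^{1/2^{j}}]$ with $O(\varepsilon^{-1/2})$ iterations each over $\lceil \log_2\log_2 \varepsilon^{-1}\rceil$ rounds, followed by symmetry doubling and transfer of the bound to Algorithm~\ref{alg:GPLC} via its instance optimality. Your identity $(t-x)^2 \leq \varepsilon x(k-x)$ is a clean repackaging of the paper's two quadratic solves, and the boundary issues you flag (the terminal subinterval, the initial region $[1,k\varepsilon]$, and $\varepsilon \geq 1$) are all dispatched in the paper by the cruder additive bound $z_{i+1}-z_i > k\varepsilon/(1+\varepsilon)$, which reaches $k\varepsilon$ from $z_1 = 1$ in $O(1)$ steps without needing your $p_i$ substitution.
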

\begin{proof}
	The result holds if we can show that Algorithm~\ref{alg:clique} outputs a collection $\mathcal{L}$ with at most $O(\varepsilon^{-1/2} \log \log \frac{1}{\varepsilon})$ lines for any $\varepsilon < 1$. We get a loose bound for the value of $t$ in Lemma~\ref{lem:tzi} by noting that $(k - z_i) \geq k/2 \geq z_i$:
	\begin{equation}
	\label{tt1}
	t = z_i + \sqrt{z_i \varepsilon (k - z_i)} \geq z_i + \sqrt{z_i^2 \varepsilon} = z_i (1+ \sqrt{\varepsilon}).
	\end{equation}
	Since we assumed $\varepsilon < 1$, we know that 
	\begin{equation}
	\frac{t}{1+\varepsilon} \geq \frac{ z_i (1 + \sqrt{\varepsilon})}{1+\varepsilon} > z_i.
	\end{equation}
	Therefore, from~\eqref{zi1} we see that
	\begin{align}
	\label{boundzi}
	z_{i + 1} &> z_i + \frac{k \varepsilon}{2(1+\varepsilon)} + \frac{1}{2(1+\varepsilon)} \left(k^2 \varepsilon^2 + 4\varepsilon t (k - t)\right)^{1/2} \\
	&> z_i + \frac{k \varepsilon}{2(1+\varepsilon)}  + \frac{1}{2(1+\varepsilon)} \left(k^2 \varepsilon^2\right)^{1/2} = z_i + \frac{k\varepsilon}{1+\varepsilon}.
	\end{align}
	From this we see that at each iteration, we cover an additional interval of length $z_{i+1} - z_i > \frac{k\varepsilon}{1+\varepsilon}$, and therefore we know it will take at most $O(1/\varepsilon)$ iterations to cover all of $[1,k/2]$. This upper bound is loose, however. The value of $z_{i+1} - z_i$ in fact increases significantly with each iteration, allowing the algorithm to cover larger and larger intervals as it progresses. 
	
	Since $z_1 = 1$ and $z_{i+1} - z_i \geq \frac{k\varepsilon}{1+\varepsilon}$, we see that $z_j \geq k\varepsilon$ for all $j \geq 3$. For the remainder of the proof, we focus on bounding the number of iterations it takes to cover the interval $[k\varepsilon, k/2]$. 
	%In other words, we want to find an upper bound on a value of $i$ for which $z_i \geq k/2$. 
	We separate the progress made by Algorithm~\ref{alg:clique} into different rounds. Round $j$ refers to the set of iterations that the algorithm spends to cover the interval 
	\begin{equation}
	\label{roundj}
	R_j = \left[k \varepsilon^{\left(\frac{1}{2}\right)^{j-1}}, k \varepsilon^{\left(\frac{1}{2}\right)^{j}}\right] ,
	\end{equation}
	For example, Round 1 starts with the iteration $i$ such that $z_i \geq k\varepsilon$, and terminates when the algorithm reaches an iteration $i'$ where $z_{i'} \geq k\varepsilon^{1/2}$. A key observation is that it takes less than $4/\sqrt{\varepsilon}$ iterations for the algorithm to finish Round $j$ for any value of $j$. To see why, observe that from the bound in~\eqref{boundzi} we have
	\begin{align*}
	z_{i + 1} - z_i &>  \frac{k \varepsilon}{2(1+\varepsilon)} + \frac{1}{2(1+\varepsilon)} \left(k^2 \varepsilon^2 + 4\varepsilon t (k - t)\right)^{1/2} \\
	&>\frac{1}{2(1+\varepsilon)} \left(4\varepsilon t (k - t)\right)^{1/2} \\
	&\geq   \frac{1}{2(1+\varepsilon)} \left(4\varepsilon z_i \frac{k}{2}\right)^{1/2} \\
	&> \frac{\sqrt{2}}{2} \frac{\sqrt{k\varepsilon}}{(1+\varepsilon)} \sqrt{z_i}.
	\end{align*}
	For each iteration $i$ in Round $j$, we know that $z_i \geq k \varepsilon^{\left(\frac{1}{2}\right)^{j-1}}$, so that
	\begin{equation}
	z_{i+1} - z_i > \frac{\sqrt{2}}{2} \frac{\sqrt{k\varepsilon}}{(1+\varepsilon)}  \sqrt{k \varepsilon^{\left(\frac{1}{2}\right)^{j-1}}} \geq \frac{\sqrt{2}}{2} \frac{k \varepsilon^{\frac{1}{2} + \left(\frac{1}{2}\right)^{j} }}{1+\varepsilon} 
	= C \cdot k \cdot \varepsilon^{\frac{1}{2} + \left(\frac{1}{2}\right)^{j} },
	%> \frac{1}{4} k \cdot \varepsilon^{\frac{1}{2} + \left(\frac{1}{2}\right)^{j} }.
	\end{equation}
	where $C = \sqrt{2}/(2(1+\varepsilon))$ is a constant larger than $1/4$.
	%We use the loose bound $\sqrt{2}/(2(1+\varepsilon)) > 1/4$ to simplify analysis visually, without meaningfully changing the main result. 
	Since each iteration of Round $j$ covers an interval of length at least $C\cdot k \cdot \varepsilon^{\frac{1}{2} + \left(\frac{1}{2}\right)^{j} }$, and the right endpoint for Round $j$ is $k \varepsilon^{\left(\frac{1}{2}\right)^{j}}$, the maximum number of iterations needed to complete Round $j$ is
	\begin{equation}
	\frac{ k \varepsilon^{\left(\frac{1}{2}\right)^{j}}} {C \cdot k \cdot \varepsilon^{\frac{1}{2} + \left(\frac{1}{2}\right)^{j} }} = \frac{1}{C \sqrt{\varepsilon}}.
	\end{equation}
	Therefore, after $p$ rounds, the algorithm will have performed $O(p \cdot \varepsilon^{-1/2})$ iterations, to cover the interval $[1,k \varepsilon^{\left(\frac{1}{2}\right)^{p}}]$. Since we set out to cover the interval $[1, k/2]$, this will be accomplished as soon as $p$ satisfies $\varepsilon^{\left(\frac{1}{2}\right)^{p}} \geq 1/2$, which holds as long as $p \geq \log_2 \log_2 \frac{1}{\varepsilon}$:
	\begin{align*}
	\varepsilon^{\left(\frac{1}{2}\right)^{p}} \geq 1/2 
	& \iff \left(\frac{1}{2}\right)^{p} \log_2 \varepsilon \geq -1 \\
	& \iff \log_2 \varepsilon \geq -2^p \\
	& \iff \log_2 \frac{1}{\varepsilon} \leq 2^p \\
	& \iff \log_2 \log_2\frac{1}{\varepsilon} \leq p .
	\end{align*}
	This means that the number of iteration of Algorithm~\ref{alg:clique}, and therefore the number of linear pieces in $\mathcal{L}$, is bounded above by $O(\varepsilon^{-1/2} \log \log \frac{1}{\varepsilon})$.
\end{proof}

\subsection{Theoretical Guarantees for \textsc{SparseCard}}
\begin{algorithm}[t]
	\caption{\textsc{SparseCard}$(f,\varepsilon)$}
	\label{alg:SparseCard2}
	\begin{algorithmic}
		\State \textbf{Input}: $\varepsilon \geq 0$, function $f(S) = \sum_{e\in E} f_e(S\cap e) = \sum_{e \in E} g_e(|S\cap e|)$ on ground set $V$
		\State \textbf{Output}: Set ${S}' \subseteq V$ satisfying $f(S') \leq (1+\varepsilon)\min_{S\subseteq V} f(S)$.
		\State $\mathcal{A} \leftarrow \emptyset$, $\mathcal{E} \leftarrow \emptyset$ \hspace{.1cm}\verb|//initialize auxiliary node and edge set for reduced graph| \\
		\For{$e \in E$}
		\State \verb|//Step 1: solve piecewise linear function approximation problem|
		\State $\ell_e \leftarrow \textsc{GreedyPLCover}(g_e,\varepsilon)$  \\
		\State \verb|//Step 2: Construct graph gadget for e with auxiliary nodes | $\mathcal{A}_e$
		\State $G_e = (e \cup \mathcal{A}_e \cup \{s,t\}, \mathcal{E}_e) \leftarrow \textsc{CGFtoGadget}(\ell_e) $  \\
		\State \verb|//Step 3: Add new auxiliary nodes and edges for building graph G|
		\State $\mathcal{A} \leftarrow \mathcal{A} \cup \mathcal{A}_e$, $\mathcal{E} \leftarrow \mathcal{E} \cup \mathcal{E}_e$ \\
		\EndFor
		\State $G =(V \cup \mathcal{A} \cup \{s,t\}, \mathcal{E})$ \hspace{1.1cm}   \verb|//Build graph G modeling f|
		\State $T = \textsc{MinSTcut}(G)$ \hspace{1.6cm} \verb|//Find  minimum s-t cut|
		\State Return $S' = T \cap V$\hspace{1.8cm}  \verb|//Ignore auxiliary nodes|
	\end{algorithmic}
\end{algorithm}
\begin{algorithm}[t]
	\caption{\textsc{CGFtoGadget}$(\ell_e)$}
	\label{alg:gadgets}
	\begin{algorithmic}
		\State \textbf{Input}: Piecewise linear function $\ell_e$ on $[0,k]$ where $k = |e|$ for a set $e$
		\State \textbf{Output}: $G_e = (e \cup \mathcal{A}_e \cup \{s,t\}, \mathcal{E}_e)$: combination of CB-gadgets modeling $\hat{f}_e(A) = \ell_e(|A|)$.\\
		\State \verb|// Extract information about linear pieces|
		\State $J = (\text{ \# of linear pieces of $\ell_e$}) - 1$
		\State $\{m_1, m_2,\hdots, m_{J+1} \} = \text{slopes of linear pieces of $\ell_e$}$ 
		\State $\{b_1, b_2, \hdots, b_J\} = \text{breakpoints of $\ell_e$}$
		\State $z_0 = \ell_e(0)/k$
		\State $z_k = \ell_e(k)/k$ \\
		\State \verb|// Build collection of CB-gadgets|
		\State $\mathcal{A}_e \leftarrow \emptyset$, $\mathcal{E}_e \leftarrow \emptyset$ %\hspace{.1cm} \verb|//initialize auxiliary node and edge sets|
		\State $s \leftarrow $ source node
		\State $t \leftarrow $ sink node
		\For{$v \in e$}
		\State Add directed edge $(s, v)$ of weight $z_0$ to $\mathcal{E}_e$
		\State Add directed edge $(v, t)$ of weight $z_k$ to $\mathcal{E}_e$
		\For{$i = 1$ to $J$}
		\State \verb|// Edges for the i-th CB-gadget gadget|
		\State $a_i \leftarrow \frac{1}{k}(m_i - m_{i+1})$
		\State $\mathcal{A}_e \leftarrow \mathcal{A}_e \cup \{v_{e,i}\}$ 
		\State Add directed edge $(v,  v_{e,i})$ of weight $a_i(k- b_i)$ to $\mathcal{E}_e$
		\State Add directed edge $(v_{e,i},v)$ of weight $a_i b_i$ to $\mathcal{E}_e$
		\EndFor
		\EndFor
		\State Return $G_e = (e \cup \mathcal{A}_e \cup \{s,t\}, \mathcal{E}_e)$ 
	\end{algorithmic}
\end{algorithm}

Algorithm~\ref{alg:SparseCard} gives pseudocode for \textsc{SparseCard}, which relies on Algorithm~\ref{alg:GPLC} for finding a piecewise linear approximation $\ell_e$ for each function $g_e$, and Algorithm~\ref{alg:gadgets} for converting the piecewise linear function into a combination of CB-gadgets that approximately models $f_e(A) = g_e(|A|)$.
\begin{theorem}
	Let $n = |V|$ and $R = |E|$. When $\varepsilon = 0$, the graph constructed by \textsc{SparseCard} will have $O(\sum_{e \in E} |e|)$ nodes and $O(\sum_{e \in E} |e|^2)$ edges.
	When $\varepsilon > 0$, the graph will have
	$O(n + \sum_{e \in E} \varepsilon^{-1} \log |e|)$ nodes and $O(\varepsilon^{-1} \sum_{e \in E} |e| \log |e|)$ edges. In either case, and the method will return a set $T$ satisfying $f(T) \leq (1+\varepsilon) \min_{S \subseteq V} f(S)$. 
\end{theorem}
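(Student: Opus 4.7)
The plan is to decompose the theorem into three separate claims and verify each one, leaning heavily on results already established. First I would handle the approximation guarantee, then the edge/node counts in each regime.

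For the approximation guarantee, I would define the surrogate function $\hat{f}(S) = \sum_{e\in E} \ell_e(|S \cap e|)$ and show that the constructed graph $G$ exactly models $\hat{f}$ in the sense of~\eqref{eq:graphreduce}. By Lemma~\ref{lem:equivalence}, each $\ell_e$ is a $k$-node CGF of order $J_e = (\text{\# pieces of }\ell_e) - 1$, with parameters recovered from the slopes, breakpoints, and endpoint values of $\ell_e$. Then the subgraph $G_e$ assembled by \textsc{CGFtoGadget} is precisely a sum of CB-gadgets plus source/sink edges whose combined $s$-$t$ cut structure models $\hat{f}_e(A) = \ell_e(|A|)$ (the auxiliary nodes for different hyperedges $e$ are disjoint, so the cut functions decompose additively). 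By the SpAR guarantee from Algorithm~\ref{alg:GPLC}, we have $g_e(i) \leq \ell_e(i) \leq (1+\varepsilon)g_e(i)$ at every integer $i \in \{0,\ldots,|e|\}$, and hence $f(S) \leq \hat{f}(S) \leq (1+\varepsilon)f(S)$ for every $S \subseteq V$. Letting $T$ be the minimum $s$-$t$ cut (intersected with $V$) and $S^* = \operatorname{argmin}_S f(S)$, a sandwich argument gives $f(T) \leq \hat{f}(T) \leq \hat{f}(S^*) \leq (1+\varepsilon)f(S^*)$.

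For the $\varepsilon = 0$ size bound, I would observe that any concave $g_e$ can be matched exactly at $|e|+1$ integer points by interpolating with at most $|e|$ linear pieces, so $\ell_e$ has at most $|e|$ pieces (hence order $J_e \leq |e|-1$). By inspection of \textsc{CGFtoGadget}, each CB-gadget for $e$ introduces one auxiliary node and $2|e|$ directed edges between $e$ and that node, and the source/sink terms add another $2|e|$ edges. Summing over $e$ yields $O(\sum_e |e|)$ auxiliary nodes plus $|V|$ original nodes, and $O(\sum_e |e|^2)$ edges.

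For the $\varepsilon > 0$ size bound, I would replace the bound on $J_e$ using Theorem~\ref{thm:logk}: Algorithm~\ref{alg:GPLC} yields $\ell_e$ with $O(\min\{|e|, \varepsilon^{-1}\log |e|\})$ linear pieces, i.e., $O(\varepsilon^{-1}\log|e|)$ CB-gadgets per hyperedge. The same per-gadget accounting as above (1 auxiliary node, $2|e|$ edges per gadget, plus $2|e|$ source/sink edges) then gives $O(n + \sum_e \varepsilon^{-1}\log |e|)$ total nodes and $O(\sum_e \varepsilon^{-1}|e|\log|e|)$ total edges.

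None of the steps pose a serious obstacle; the only subtle point is confirming that the gadgets combine additively, which follows because the auxiliary node sets $\mathcal{A}_e$ are disjoint across $e$, so the minimization in~\eqref{eq:graphreduce} decouples over hyperedges. Once that observation is in place, the remainder is careful bookkeeping against Lemma~\ref{lem:equivalence} and Theorem~\ref{thm:logk}.
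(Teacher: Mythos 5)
Your proposal is correct and follows essentially the same route as the paper's proof: bound the number of CB-gadgets per component via the piecewise-linear piece count (Theorem~\ref{thm:logk} for $\varepsilon>0$, exact interpolation for $\varepsilon=0$; the paper uses the slightly sharper $\lfloor |e|/2\rfloor+1$ pairing bound, but this is asymptotically identical to your $|e|$-piece bound), tally one auxiliary node and $2|e|$ edges per gadget plus source/sink edges, and establish the guarantee by observing that $G$ exactly models the surrogate $\hat f(S)=\sum_e \ell_e(|S\cap e|)$ with the minimization in~\eqref{eq:graphreduce} decoupling over the disjoint sets $\mathcal{A}_e$. If anything, your sandwich chain $f(T)\leq \hat f(T)\leq \hat f(S^*)\leq (1+\varepsilon)f(S^*)$ is stated more cleanly than the corresponding inequality chain in the paper's proof.
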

\begin{proof}
	Each auxiliary node in the graph construction for \textsc{SparseCard} is unique to its own CB-gadget, but the source node $s$ and sink node $t$ are shared across all combined gadgets, and each node $v \in V$ will show up in multiple CB-gadgets. When $\varepsilon = 0$, we will need $\lfloor |e|/2 \rfloor + 1$ CB-gadgets to exactly model the function $f_e(A) = g_e(|A|)$ for a node set $e$. This comes from Theorem~\ref{thm:logk} and the fact that we need $\lfloor k/2 \rfloor + 1$ linear pieces to exactly match the function $g_e$ at integer points $\{0,1,2, \hdots, k = |e|\}$. Overall then, the total number of auxiliary nodes is
	\begin{equation*}
	|\mathcal{A}| = \sum_{e \in E} (\lfloor |e|/2 \rfloor + 1),
	\end{equation*}
	and when we include $\{s,t\}$ and $V$, the total number of nodes is asymptotically still $O(\sum_{e \in E} |e|)$. 
	%For the number of edges, we first count the $2n$ terminal edges, i.e., edges between $V$ and $\{s,t\}$. 
	Since each CB-gadget for $e$ requires $2|e|$ directed edges, the number of edges due to CB-gadgets is
	\begin{equation*}
	\sum_{e \in E} 2|e| \cdot (\lfloor |e|/2 \rfloor + 1) = O(\sum_{e \in E} |e|^2).
	\end{equation*}
	Adding in the edges between $V$ and $\{s,t\}$ increases the total number of edges by $O(n)$. The bound on the number of nodes and edges when $\varepsilon > 0$ follows the same steps, except we apply that fact that each $f_e$ is approximately modeled using a set of $O(\frac{1}{\varepsilon} \log |e|)$ CB-gadgets.
	
	To prove the approximation result, let $G_e = (\{s,t\} \cup e \cup \mathcal{A}_e)$ be the small graph constructed in Algorithm~\ref{alg:gadgets} via a combination of CB-gadgets to approximately model $f_e(A) = g_e(|A|)$. By design, for every $A \subseteq e$ the cut properties of this small graph satisfy
	\begin{equation}
	\min_{B \subseteq \mathcal{A}_e} \cut_{G_e} (\{s\} \cup B \cup A) = \ell_e(|A|).
	\end{equation}
	where $\ell_e$ is the piecewise linear function constructed by Algorithm~\ref{alg:GPLC} to approximate $g_e$, and $\cut_{G_e}$ is the cut function of the graph $G_e$. In other words, if you take any subset of the nodes in $e$ and rearrange auxiliary nodes $\mathcal{A}_e$ so that you get the minimum cut penalty subject to $A$ being on the source side, that penalty is $\ell_e(|A|) \leq (1+\varepsilon) g_e(|A|) = (1+\varepsilon)f_e(A)$. The minimum $s$-$t$ cut set in the overall graph $G$ is therefore
	\begin{equation*}
	T= \argmin_{S \subseteq V} \left \{ \sum_{e \in E} \min_{B \subseteq \mathcal{A}_e}  \cut_{G_e} (\{s\} \cup B \cup (e \cap S)) \right \} = \argmin_{S \subseteq V} \left \{ \sum_{e \in E} \ell_e(|e \cap S|) \right\}.
	\end{equation*}
	If $S^* = \argmin_{S \subseteq V} f(S) = \sum_{e \in E} f_e(S \cap e)$ is the minimizer of the original decomposable submodular function, then we have
	\begin{equation*}
	f(S^*) \leq f(T) = \sum_{e \in E}  \ell_e(|e \cap T|) \leq (1+\varepsilon) \sum_{e \in E}  g_e(|e \cap T|)  = (1+\varepsilon) \sum_{e \in E}  f_e(e \cap T) = (1+\varepsilon)f(T).  
	\end{equation*}
\end{proof}

\section{Improved Approach for Symmetric Functions}
In many applications of decomposable submodular function minimization, some or all of the component functions $f_e$ are symmetric, meaning that for every $A \subseteq e$ they satisfy $f_e(A) = f_e(e \backslash A)$. Another common assumption is that $f_e(e) = f_e(\emptyset) = 0$. These two conditions are typically assumed in applications where $f_e$ represents a generalized cut penalty for a hyperedge $e$ in some hypergraph~\cite{veldt2020hypercuts,panli2017inhomogeneous,panli_submodular,veldt2020hyperlocal,fountoulakis2021local}. 
Symmetry implies that if a hyperedge is partitioned into sets $A$ and $e \backslash A$, it does not matter which side of the cut each set is on. The condition $f_e(e) = f_e(\emptyset) = 0$ reflects the fact that there should be no cut penalty if a hyperedge $e$ is completely contained on one side of the cut other the other. 

Assuming these two conditions leads simply to a special case of the more general functions we considered above, which can therefore be handled using the reduction technique outlined previously. However, there is a slightly more efficient graph reduction technique for symmetric functions that requires roughly half the number of edges. While this makes no different for our asymptotic results, cutting down the number of edges in the reduced graph by a factor of two can be very worthwhile in practice. We therefore provide details for this more efficient reduction strategy. In addition to the symmetric property, we will assume that all functions $f_e$ we consider satisfy $f_e(e) = f_e(\emptyset) = 0$. Slight adjustments can be made to also handle the case where the function is not symmetric but still satisfies $f_e(e) = f_e(\emptyset) > 0$. Note that the reduction here is what we use when modeling the clique submodular function $f_e(A) = |A||e \backslash A|$ that arises frequently in hypergraph clustering applications and image segmentation.

\subsection{The symmetric cardinality-based gadget}
The more efficient graph reduction strategy relies on a different type of CB-gadget designed specifically for symmetric functions. 
This gadget is parameterized by positive scalars $a$ and $b$, and includes two auxiliary nodes $e'$ and $e''$. For each node $v \in e$, there is a directed edge from $v$ to $e'$ and a directed edge from $e''$ to $v$, both of weight $a$. Lastly, there is a directed edge from $e'$ to $e''$ of weight $a\cdot b$. This CB-gadget models the following symmetric submodular function:
\begin{equation}
\label{abcbgadget}
f_{a,b}(A) = a \cdot \min \{ |A|, |e \backslash A|, b \}.
\end{equation}
We previously showed that any submodular cardinality-based function $f_e$ on a ground set $e$ can be modeled with a combination of $\lfloor k/2 \rfloor$ symmetric CB-gadgets where $k = |e|$~\cite{veldt2020hypercuts}. Recall that these authors showed how to use $k-1$ asymmetric CB-gadgets to model more general submodular cardinality-based functions. Although each symmetric CB-gadget has two auxiliary nodes and asymmetric CB-gadgets has one auxiliary node, note that symmetric CB-gadgets have only one extra edge. Therefore, modeling a symmetric function with $\lfloor k/2 \rfloor$ symmetric CB-gadgets is more efficient (i.e., requires fewer edges overall) than using $k-1$ asymmetric CB-gadgets. The same type of savings is possible when approximately modeling symmetric submodular functions with symmetric CB-gadgets. 

\subsection{Sparse reduction for symmetric concave cardinality functions}
Previously we defined a function $f_e$ to be a concave cardinality function if $f_e(A) = g_e(|A|)$ for some concave function $g_e$. If $f_e$ is additionally symmetric, then we will use the fact that $f_e(A) = h_e(\min \{|A|, |e\backslash A|\})$ for some concave \emph{and monotonically increasing} function $h_e$. If $k = |e|$, then $f_e$ has only $r = \lfloor k/2 \rfloor$ different output penalties. The symmetric function in~\eqref{abcbgadget}, which is modeled by the symmetric CB-gadget, can be defined by $f_{a,b}(A) = h_{a,b}( \min\{|A|, |e\backslash A|\})$ where
\begin{equation}
\label{hab}
h_{a,b}(x) = a \cdot \min \{ x, b\}\,.
\end{equation}
If we combine multiple functions of the form~\eqref{hab} for different parameters $(a,b)$, then we reach the symmetric analog of the combined gadget function from Definition~\ref{def:kcgf}.
\begin{definition}
	\label{def:rcgf}
	For an integer $k$ and $r = \lfloor k/2 \rfloor$, a type-$r$ symmetric combined gadget function ($r$-Sym-CGF) of order $J$ is a function $\ell \colon [0,r] \rightarrow \mathbb{R}^+$ of the form:
	\begin{equation}
	\label{eq:skcg}
	\ell(x) = \sum_{j=1}^J a_j \min \{ x,  b_j \},
	\end{equation} 
	where the parameters $(a_j)$ and $(b_j)$ satisfy
	\begin{align}
	\label{ab1}
	b_j &> 0, a_j > 0 \text{ for all $j \in [J]$}  \\
	\label{ab2}
	b_j &< b_{j+1} \text{ for $j \in [J-1]$}\\
	\label{ab3}
	b_J & \leq r.
	%	\label{ab4}
	%	b_1  &\geq 1.
	\end{align}
	%	We denote the set of $r$-CCB functions of order $J$ by $\mathcal{C}_r^J$.
\end{definition}
We similarly obtain have a symmetric analog of the sparse approximate reduction problem.
\begin{definition}
	\label{def:ssarp}
	Let $h \colon [0,r] \rightarrow \mathbb{R}^+$ be a nonnegative concave increasing function and fix $\varepsilon \geq 0$. The symmetric sparsest approximate reduction (Sym-SpAR) problem seeks a type-$r$ symmetric CGF $\ell$ with minimum order $J$ so that
	\begin{equation}
	\label{eq:happrox}
	h(i) \leq \ell(i) \leq (1+\varepsilon)h(i) \,\text{ for all } i \in \{0,1,2, \hdots r\}.
	\end{equation}
\end{definition}

\subsection{Connection to piecewise linear approximation}
Just as we did for asymmetric functions, we can relate the symmetric sparse approximate reduction (Sym-SpAR) problem to piecewise linear function approximation. The equivalence result and resulting piecewise linear approximation problem is very similar in spirit, though a few subtle and important changes are required due to differences in the symmetric and asymmetric CB-gadgets. For the symmetric case we are approximating a \emph{monotonic} concave function on an interval $[0, \lfloor k/2 \rfloor]$, rather than an arbitrary concave function on an interval $[0,k]$. When approximating such a curve, it ends up being important for the last linear piece to have a slope of zero, otherwise the resulting graph reduction for $f_e$ will have one more CB-gadget that is strictly necessary, and therefore will not quite be optimally sparse. Asymptotically this makes little difference, but in practice, including one more CB-gadget than necessary for each function $f_e$ will lead to an unnecessary increase in runtime. In order to ensure we in fact optimally solve Sym-SpAR and implement the most efficient approach, we carefully outline the necessary changes we need to make for symmetric functions.

The class of piecewise linear functions we consider for the symmetric problem is slightly more specific than the functions we used for the general case, so we include a precise definition. 
\begin{definition}
	\label{def:plfun}
	For $r \in \mathbb{N}$, $\mathcal{F}_r$ is the class of functions $\vf \colon [0,\infty] \longrightarrow \mathbb{R}_+$ such that:
	\begin{enumerate}[itemsep=0mm]
		\item $\vf(0) = 0$ 
		\item $\vf$ is a constant for all $x \geq r$
		\item $\vf$ is increasing: $x_1 \leq x_2 \implies \vf(x_1) \leq \vf(x_2)$ 
		\item $\vf$ is piecewise linear
		\item $\vf$ is concave (and hence, continuous).
	\end{enumerate}
\end{definition}
Lemma~\ref{lem:equivalence} in the main text provided an exact relationship between asymmetric combined gadget functions (Definition~\ref{def:kcgf}) and a certain class of piecewise linear functions. The following results are the analog for symmetric combined gadget functions.
\begin{lemma}
	\label{lem:ccb2pl}
	The function $\ell$ in~\eqref{eq:skcg} is in the class $\mathcal{F}_r$, and has exactly $J$ positive sloped linear pieces, and one linear piece of slope zero.
\end{lemma}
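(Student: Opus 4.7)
The plan is to verify directly the five defining properties of the class $\mathcal{F}_r$ (Definition~\ref{def:plfun}) for the function $\ell(x)=\sum_{j=1}^J a_j\min\{x,b_j\}$, and then read off the slopes of its linear pieces interval by interval. Since each summand $a_j\min\{x,b_j\}$ is individually well-behaved, most of the properties follow from the fact that $\ell$ is a nonnegative linear combination (with $a_j>0$) of such summands.

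First I would dispatch the easy properties. Property~(1), $\ell(0)=0$, holds because $b_j>0$ forces $\min\{0,b_j\}=0$ for every $j$. Property~(3), monotonicity, holds because each $x\mapsto\min\{x,b_j\}$ is nondecreasing and $a_j>0$, so $\ell$ is a sum of nondecreasing functions. Property~(4), piecewise linearity, and property~(5), concavity, both follow from the observation that $\min\{x,b_j\}$ is a piecewise linear concave function (being the pointwise minimum of the two linear functions $x$ and $b_j$), and a positive linear combination of piecewise linear concave functions is piecewise linear and concave. Property~(2), eventual constancy, holds because $b_J\le r$ from~\eqref{ab3}, so for $x\ge r\ge b_J\ge b_j$ every term equals $a_jb_j$, giving the constant value $\ell(x)=\sum_{j=1}^J a_jb_j$.

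The structural claim about slopes is the main content, and it is obtained by direct computation on each subinterval determined by the breakpoints. Set $b_0=0$ for convenience. Condition~\eqref{ab2} ensures $0=b_0<b_1<\dots<b_J$, so the breakpoints partition $[0,\infty)$ into intervals $[b_0,b_1],[b_1,b_2],\dots,[b_{J-1},b_J],[b_J,\infty)$. On the interval $[b_{t-1},b_t]$ for $t\in\{1,\dots,J\}$, for each $j<t$ we have $x\ge b_j$ so $\min\{x,b_j\}=b_j$, and for each $j\ge t$ we have $x\le b_j$ so $\min\{x,b_j\}=x$. Hence on this interval
\begin{equation*}
\ell(x)=\sum_{j=1}^{t-1}a_jb_j+\Big(\sum_{j=t}^{J}a_j\Big)x,
\end{equation*}
which is linear with slope $\sum_{j=t}^J a_j>0$ by~\eqref{ab1}. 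On $[b_J,\infty)$ every term contributes $a_jb_j$ and the slope is $0$. Because the slopes $\sum_{j=t}^J a_j$ are strictly decreasing in $t$ (each $a_j>0$), consecutive pieces never merge into a single linear piece; thus there are exactly $J$ distinct linear pieces of positive slope and exactly one linear piece of slope zero, as claimed.

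I do not anticipate a real obstacle here: once the breakpoint-by-breakpoint slope formula above is written down, every required property is immediate. The only point requiring mild care is ensuring that the pieces are genuinely distinct so that the count is $J+1$ rather than something smaller; this uses strict positivity of the $a_j$'s from~\eqref{ab1}, together with the strict ordering of the $b_j$'s from~\eqref{ab2}, both of which are built into the definition of a type-$r$ Sym-CGF.
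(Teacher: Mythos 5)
Your proposal is correct and follows essentially the same route as the paper's proof: direct verification of the defining properties of $\mathcal{F}_r$, followed by the interval-by-interval computation showing the slope on $[b_{t-1},b_t)$ is $\sum_{j=t}^J a_j$, with strict positivity of the $a_j$ giving strictly decreasing slopes and hence exactly $J$ distinct positive-sloped pieces plus one flat piece. The only cosmetic difference is that you obtain concavity as a positive combination of concave summands, while the paper reads it off from the decreasing slopes; both are immediate once the slope formula is in hand.
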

\begin{proof}
	Define $b_0 = 0$ for notational convenience. The first three conditions in Definition~\ref{def:plfun} can be seen by inspection, recalling that $0< a_j$ and $0< b_j \leq r$ for all $j \in [J]$. Observe that $\ell$ is linear over the interval $[b_{i-1}, b_{i})$ for $i \in [J]$, since for $x \in [b_{i-1}, b_{i})$,
	%	$[b_0, b_1)$, since for $x \in [0, b_1)$, 
	%	\begin{align*}
	%	\hat{\vf}(x) &= \sum_{j = 1}^J a_j \cdot \min \{ x, b_j \} = x \cdot \sum_{j = 1}^J a_j\,.
	%	\end{align*}
	%	In general, for the interval $x \in [b_{i-1}, b_{i})$ for $i \in [J]$,
	\begin{align*}
	\ell(x) &= \sum_{j = 1}^J a_j \cdot \min \{ x, b_j \} = \sum_{j = 1}^{i-1} a_j  b_j + x \cdot \sum_{j = i}^J a_j \,.
	\end{align*}
	In other words, the $i$th linear piece of $\ell$, defined over $x \in [b_{i-1}, b_{i})$ is given by
	%\begin{equation}
	$\ell^{(i)}(x) = I_i + S_i x,$
	%	\end{equation}
	where the intercept and slope terms are given by
	%	\begin{align*}
	$	I_i = \sum_{j=1}^{i-1} a_j  b_j$ and  $S_i =\sum_{j = i}^J a_j$.
	%	\end{align*}
	For the first $J$ intervals of the form $[b_{i-1},b_i)$, the slopes are always positive but strictly decreasing. Thus, there are exactly $J$ positive sloped linear pieces. The final linear piece is a flat line, since $\ell(x) = \sum_{j = 1}^J a_jb_j$ for all $x \geq b_J$.
	%	\begin{equation*} 
	%	$\hat{\vf}(x) = \sum_{j = 1}^J a_jb_j \text{ for all $x \geq b_J$}$.
	%	\end{equation*}
	The concavity of $\ell$ follows directly from the fact that it is a continuous and piecewise linear function with decreasing slopes.
	% Finally, since $b_1 \geq 1$, $\hat{\vf}$ will be linear from $[0,1]$, and thus the slope of this linear piece will exactly equal $\hat{\vf}(1)$.
\end{proof}
\begin{lemma}
	\label{lem:pl2ccb}
	Let $\ell'$ be a function in $\mathcal{F}_r$ with $J+1$ linear pieces. Let $b_i$ denote the $i$th breakpoint of $\ell'$, and $m_i$ denote the slope of the $i$th linear piece of $\ell$. Define vectors $\va, \vb \in \mathbb{R}^J$ where $\vb(i) = b_i$ and $\va(i) = a_i = m_i - m_{i+1}$ for $i \in [J]$. Then $\ell'$ is the type-$r$ CGF of order $J$ parameterized by vectors $(\va, \vb)$.
\end{lemma}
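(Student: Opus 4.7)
The plan is to verify that the triple $(\va,\vb)$ defined in the lemma satisfies the constraints~\eqref{ab1}--\eqref{ab3} required of a type-$r$ Sym-CGF of order $J$, then use Lemma~\ref{lem:ccb2pl} to conclude that the resulting CGF $\hat{\ell}$ lies in $\mathcal{F}_r$ and has $J+1$ linear pieces, and finally match $\hat{\ell}$ to $\ell'$ by comparing breakpoints, slopes, and the value at $x=0$. Since a piecewise linear function in $\mathcal{F}_r$ is uniquely determined by these data, this suffices.

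First I would verify the constraints. Since $\ell'$ is concave and piecewise linear with distinct consecutive linear pieces, the slopes $m_1,\dots,m_{J+1}$ are strictly decreasing, so $a_i=m_i-m_{i+1}>0$ for $i\in[J]$. The breakpoints of a piecewise linear function on $[0,\infty)$ are by convention strictly increasing and positive (since $\ell'(0)=0$ and $\ell'$ is initially nonconstant), giving $0<b_1<b_2<\dots<b_J$. Because $\ell'$ is constant on $[r,\infty)$ (Definition~\ref{def:plfun}, item 2), the last positive-sloped piece must end at or before $r$, so $b_J\le r$. Thus $(\va,\vb)$ defines a valid type-$r$ Sym-CGF of order $J$.

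Next I would match $\hat{\ell}$ to $\ell'$. By Lemma~\ref{lem:ccb2pl}, $\hat{\ell}$ has breakpoints exactly at $b_1,\dots,b_J$ (the same as those of $\ell'$ by construction) and has exactly $J+1$ linear pieces. For slopes, the proof of Lemma~\ref{lem:ccb2pl} gives that the $i$-th linear piece of $\hat{\ell}$ has slope
\[
S_i \;=\; \sum_{j=i}^{J} a_j \;=\; \sum_{j=i}^{J} (m_j - m_{j+1}) \;=\; m_i - m_{J+1}.
\]
Here $m_{J+1}=0$ since the $(J+1)$-th linear piece of $\ell'\in\mathcal{F}_r$ is the flat tail guaranteed by Definition~\ref{def:plfun}, item 2. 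Hence $S_i = m_i$, so the slope of the $i$-th piece of $\hat{\ell}$ equals the slope of the $i$-th piece of $\ell'$. Finally, $\hat{\ell}(0)=\sum_{j=1}^J a_j\min\{0,b_j\}=0=\ell'(0)$, so the two functions agree at the initial point. A piecewise linear function on $[0,\infty)$ is uniquely determined by its breakpoints, slopes, and value at a single point, so $\hat{\ell}\equiv\ell'$.

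The only subtlety I expect to be genuinely worth attention is the role of the flat final piece: it is the reason why the order of the CGF is exactly $J$ (and not $J+1$), and why the identity $m_{J+1}=0$ is critical to make the telescoping slope sum collapse to $m_i$. Everything else is routine bookkeeping and follows directly from the computation already carried out in the proof of Lemma~\ref{lem:ccb2pl}.
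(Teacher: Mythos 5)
Your proposal is correct and follows essentially the same route as the paper's proof: both verify the constraints on $(\va,\vb)$, invoke Lemma~\ref{lem:ccb2pl} to place the constructed CGF $\hat{\ell}$ in $\mathcal{F}_r$ with $J+1$ linear pieces and shared breakpoints, and then identify $\hat{\ell}$ with $\ell'$ via a telescoping computation. The only difference is bookkeeping: the paper matches the \emph{values} $\hat{\ell}(b_i)=\ell'(b_i)$ at the breakpoints incrementally, while you match the \emph{slopes} via $\sum_{j=i}^{J}a_j=m_i-m_{J+1}=m_i$ together with the anchor value $\hat{\ell}(0)=\ell'(0)=0$ — an equivalent and slightly more compact verification of the same uniqueness criterion.
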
 
\begin{proof}
	Since $\ell'$ is in $\mathcal{F}_r$, it has $J$ positive-sloped linear pieces and one flat linear piece, and therefore it has exactly $J$ breakpoints: $0 < b_1 < b_2 < \hdots < b_J$.  Let $\vb = (b_j)$ be the vector storing these breakpoints. For convenience we define $b_0 = 0$, though $b_0$ is not stored in $\vb$. By definition, $\ell'$ is constant for all $x \geq r$, which implies that $b_J \leq r$. 
	
	Let $\ell_i = \ell'(b_i)$. For $i \in [J]$, the positive slope of the $i$th linear piece of $\ell'$, which occurs in the range $[b_{i-1}, b_i]$, is given by
	\begin{equation}
	m_i = \frac{\ell_i - \ell_{i-1}}{b_{i} - b_{i-1}}.
	\end{equation}
	The $i$th linear piece of $\ell'$ is given by
	\begin{equation}
	\ell^{(i)}(x) = m_i (x- b_{i-1}) + \ell_{i-1} \hspace{.5cm} \text{ for $x \in [b_{i-1},b_i]$}.
	\end{equation}
	The last linear piece of $\ell'$ is a flat line over the interval $x \in [b_J, \infty)$, i.e., $m_{J+1} = 0$.
	Since $\ell'$ has positive and strictly decreasing slopes, we can see that $a_i = m_i - m_{i+1} > 0$ for all $i \in [J]$.
	
	Let $\hat{\ell}$ be the type-$r$ CGF of order-$J$ constructed from vectors $(\va, \vb)$:
	\begin{equation}
	\hat{\ell}(x) = \sum_{j = 1}^J a_j \cdot \min \{x, b_j\}.
	\end{equation}
	We must check that $\hat{\ell} = \ell'$. By Lemma~\ref{lem:ccb2pl}, we know that $\hat{\ell}$ is in $\mathcal{F}_r$ and has exactly $J+1$ linear pieces. The functions will be the same, therefore, if they share the same values at breakpoints. Evaluating $\hat{\ell}$ at an arbitrary breakpoint $b_i$ gives:
	\begin{equation}
	\hat{\ell}(b_i) = \left( \sum_{j = 1}^{i-1} a_j \cdot b_j \right) + b_i \cdot  \left( \sum_{j = i}^J a_j \right) = \left( \sum_{j = 1}^{i-1} a_j \cdot b_j \right) + b_i \cdot m_i .
	\end{equation}
	We first confirm that the functions coincide at the first breakpoint:
	\begin{align*}
	\hat{\ell}(b_1) 
	%&= b_1 \cdot \sum_{j = 1}^J a_j \\
	%	&= b_1 \cdot \left( (m_1 - m_2)+ (m_2 - m_3) + \hdots + (m_{J-1} - m_J) + m_J \right) \\
	&= b_1 \cdot m_1 = b_1 \cdot \frac{\ell_1 - \ell_0}{b_1 - b_0} = b_1 \frac{\ell_1}{b_1} = \ell_1.
	\end{align*}
	For any fixed $i \in \{2, 3, \hdots , J\}$,
	\begin{align*}
	\hat{\ell}(b_i) - \hat{\ell}(b_{i-1}) &= \left( \sum_{j = 1}^{i-1} a_j   b_j \right) + b_i   m_i  - 
	\left( \sum_{j = 1}^{i-2} a_j   b_j \right) - b_{i-1}   m_{i-1}\\
	%&= \left( \sum_{j = 1}^{i-1} a_j   b_j \right)  - 
	%\left( \sum_{j = 1}^{i-2} a_j   b_j \right) + b_i   m_i - b_{i-1}   m_{i-1}\\
	&= a_{i-1}   b_{i-1} + b_i   m_i - b_{i-1}   m_{i-1}\\
	&= (m_{i-1} - m_{i})   b_{i-1} +  b_i   m_i - b_{i-1}   m_{i-1}\\
	&= m_i (b_i - b_{i-1}) = \ell_{i} - \ell_{i-1}.
	\end{align*}
	Since $\ell'(b_1) = \hat{\ell}(b_1)$ and $\ell'(b_i) - \ell'(b_{i-1}) = \hat{\ell}(b_i) - \hat{\ell}(b_{i-1})$ for $i \in \{2, 3, \hdots , t\}$, we have $\ell'(b_i) = \hat{\ell}(b_i)$ for $i \in [J]$. Therefore, $\ell'$ and $\hat{\ell}$ are the same piecewise linear function.
\end{proof}

\subsection{Other adjustments for symmetric function graph reductions}
Given the equivalence results in Lemma~\ref{lem:ccb2pl} and~\ref{lem:pl2ccb}, we can see again that in order to model a symmetric concave cardinality function using symmetric CB-gadgets, we need to solve a piecewise linear approximation problem. Specifically, we need to approximate a concave and increasing function $h$ at integer points $\{0,1,2, \hdots ,r\}$ with a minimum number of positive sloped linear pieces, and one flat linear piece. To do so, we run Algorithms~\ref{alg:GPLC} and~\ref{alg:FNP} to get a collection of lines $\mathcal{L}$, and make the following adjustments to ensure the last linear pieces is flat:
\begin{itemize}
	\item If the last line in $\mathcal{L}$ has negative slope, remove it.
	\item If the last two lines provide a $(1+\varepsilon)$-approximation at the point $\{r-1, h(r-1)\}$, and the last line has a positive slope, remove the last line.
\end{itemize}
Whether or not we apply one of the above two steps, we finish by adding the line $L(x) = h(r)$ to $\mathcal{L}$. This adjustment ensures that we still provide a $(1+\varepsilon)$-approximation at all integers $\{0,1, \hdots r\}$, we include one line of slope zero, and we do not keep more \emph{positive sloped} lines than necessary.

We also apply slight adjustments to our graph construction procedure in the case of a symmetric concave cardinality function $f_e$. We apply the same basic approach as in Algorithm~\ref{alg:gadgets}, except we use the symmetric CB-gadget construction and use Lemma~\ref{lem:pl2ccb} to determine parameters for these CB-gadgets from the piecewise linear approximation. Finally, observe that the asymptotic bounds we prove for approximating certain concave functions with piecewise linear approximations in Section~\ref{sec:bounds} also apply to the symmetric case, which only differs in how the last linear piece in the approximation is found. Similarly, the asymptotic number of auxiliary nodes and edges we need to model different kinds of concave cardinality functions does not change for the symmetric case, as the symmetric graph construction and the general asymmetric construction differ only by a factor of two in terms of the number of auxiliary nodes and edges needed. 

\section{Experiment Details}
We implemented our sparse reduction techniques in the Julia programming language, and used an implementation of the push-relabel maximum flow algorithm
%\footnote{Available under an MIT license at \url{https://github.com/nveldt/HypergraphFlowClustering/blob/master/src/maxflow.jl}} 
to solve the minimum $s$-$t$ cuts for \textsc{SparseCard}. To ensure reducibility, we provide additional details for our experimental results. 

\subsection{Parameter Settings for Image Segmentation Experiments}
The continuous optimization methods for DSFM that we compare against are implemented in C++ with a MATLAB front end. The Incidence Relation AP (IAP) method is an improved version of the AP method~\cite{nishihara2014convergence}. Li and Milenkovic~\cite{li2018revisiting} showed that the runtime of the method can be significantly faster if one accounts for so-called \emph{incidence relations}, which describe sets of nodes that define the support of a component function. In our experiments we also ran the standard AP algorithm as implemented by Li and Milenkovic, but this always performed noticeably worse that IAP in practice, so we only report results for IAP. Neither of these methods require setting any hyperparameters.

Li and Milenkovic~\cite{li2018revisiting} also showed that accounting for incidence relations leads to improved \emph{parallel} runtimes for ACDM and RCDM, but this does not improve serial runtimes. To simulate improved parallel runtimes, the implementations ACDM and RCDM of these authors include a parallelization parameter $\alpha = K/R$, where $K$ is the number of projections performed in an inner loop of these methods, and $R$ is the number of component functions. In theory, the $K$ projections could be performed in parallel, leading to faster overall runtimes. The comparative parallel performance between methods can be simulated by seeing how quickly the methods converge in terms of the number of total projections performed. Note however that the implementations themselves are serial, and only simulate what could happen in a parallel setting.

In our experiments our goal is to obtain the fastest possible serial runtimes. Li and Milenkovic~\cite{li2018revisiting} demonstrated that the minimum number of total projections needed to achieve convergence to within a small tolerance is typically achieved when $\alpha$ is quite small. When projections are performed in parallel, choosing a larger $\alpha$ may still be advantageous. However, since our goal is to obtain the fast serial runtimes, we chose a small value $\alpha = 0.01$, based on the results of Li and Milenkovic. We also tried larger and smaller values of $\alpha$ in post-hoc experiments on all four instances of DSFM, though this led to little variation in performance. 
%We did find that that values significantly below or above $\alpha = 0.01$ could produce worse results.
% for values of $\alpha$ near $0.01$ (e.g. $\alpha = 0.05$ or $\alpha = 0.005$). We did note that setting $\alpha$ too small (i.e., $K = 1$ meaning $\alpha = 1/R$) or too large ($\alpha = 0.1$) tended to produce worse results.

In addition to $\alpha$, ACDM relies on an empirical parameter $c$ controlling the number of iterations in an outer loop. We used the recommended default parameter $c = 10$. In general it is unclear how to set this parameter a priori to obtain better than default behavior on a given DSFM instance.
% Choosing different parameter values led to different results on different datasets; in practice the default parameter tended to performed fairly well, and it is not clear how to determine a priori whether a different parameter will perform better on a given dataset. 
In order to highlight the strength of \textsc{SparseCard} relative to ACDM, we additionally tried post-hoc tuning of $c$ on each dataset to see how much this could affect results. We ran ACDM on each of the four instances of DSFM (2 image datasets $\times$ 2 superpixel segmentations each) for all $c \in \{10,25,50,100,200\}$, three different times, for $50R$ projections (i.e., on average we visit and perform a projection step at each component function 50 times). This took roughly 30-45 seconds for each run. We then computed the average duality gap for each $c$ and each instance over the three trials, and re-ran the algorithm for even longer using the best value of $c$ for each instance. The result is shown as ACDM-best in Figure~\ref{fig:images} in the main text. \textsc{SparseCard} still maintains a clear advantage over this method on the instances that involve 200 superpixels (i.e., very large region potentials). Our method also obtains better approximations for the \emph{smallplant} dataset with 500 superpixels for the first 40 seconds, after which point ACDM-best converges to the optimal solution. Nevertheless, we would not have been able to see this improved behavior without post-hoc tuning of the hyperparameter $c$ for ACDM. Meanwhile, \textsc{SparseCard} does not rely on any parameter except $\varepsilon$, and it is very easy to understand how setting this parameter affects the algorithm as it directly controls the sparsity and a priori approximation guarantee for our method.

We remark finally that that Li and Milenkovic~\cite{li2018revisiting} also considered and implemented an alternate version of RCDM (RCDM-greedy) with a greedy sampling strategy for visiting component functions in the method's inner loop. Despite being advantageous for parallel implementations, we found that in practice that this method had worse serial runtimes, so we did not report results for it.

\subsection{Hypergraph Localized Clustering Experiments}

\textbf{Background on HyperLocal.} For our local hypergraph clustering experiments, we inserted \textsc{SparseCard} as a subroutine into the method \textsc{HyperLocal}, which finds a cluster $S$ in a hypergraph $\mathcal{H} = (V,E)$ that is localized around an input set $Z \subset V$. It does so by 
%repeatedly solving hypergraph minimum $s$-$t$ cut problems (which can also be viewed as DSFM problems) in order to quickly 
minimizing the following ratio cut objective:
\begin{equation}
\label{Localcond}
\phi(S) = \frac{\cut_\mathcal{H}(S)}{\textbf{vol}(Z \cap S) - \beta\textbf{vol}(\bar{Z} \cap S) }, \text{ subject to $\textbf{vol}(\bar{Z} \cap S) \geq 0$}.
\end{equation}
Here, $\bar{Z} = V \backslash Z$ denotes the complement set of $Z$. For a node set $T \subseteq V$, $\textbf{vol}(T)$ denotes volume of $T$, i.e., the sum of node degrees. The term $\textbf{vol}(Z \cap S)$ in the denominator rewards a high overlap between the output cluster $S$ and the input set $Z$. The second term $-\beta \textbf{vol}(\bar{Z} \cap S)$ is a penalty for including too many nodes outside the input set $Z$. This is tuned by a \emph{locality parameter} $\beta > 0$. For smaller values of $\beta$, the algorithm will explore a larger region in the hypergraph in search for good clusters. The function $\cut_\mathcal{H}$ is a generalized hypergraph cut function that can be viewed as a decomposable submodular function
\begin{equation}
\cut_\mathcal{H}(S) = \sum_{e \in E} f_e(e \cap S),
\end{equation}
where $f_e$ determines the penalty for how the hyperedge $e$ is split among two clusters. 

\textsc{HyperLocal} minimizes~\eqref{Localcond} by solving a sequence of hypergraph $s$-$t$ cut problems, which can also be viewed
as DSFM problems. These $s$-$t$ cut problems are solved using the previous exact graph reduction techniques for concave cardinality functions $f_e$ that the authors designed in earlier work~\cite{veldt2020hypercuts}. However, the existing implementation of \textsc{HyperLocal} only uses the $\delta$-linear component function, which takes the form
\begin{equation}
f_e(A) = \min \{ |A|, |e \backslash A|, \delta \} \text{ for $A \subseteq e$},
\end{equation}
for a parameter $\delta \geq 1$. One of the major benefits of this hyperedge cut penalty is that it can be modeled exactly and sparsely using a single CB-gadget with parameters $a = 1$ and $b = \delta$. 

\textbf{HyperLocal + SparseCard.} Alternative hyperedge cut penalties have the potential to produce improved results in some applications. However, exact reduction techniques for alternate penalties can become infeasible if they require too many CB-gadgets to model, as this results in a very dense reduced graph. This will no longer be a problem when we use \textsc{SparseCard} as a subroutine for \textsc{HyperLocal}. Our updated version of \textsc{HyperLocal} takes in a parameter $\varepsilon \geq 0$ and any set of concave cardinality penalties $\{f_e\}_{e \in E}$. We then use \textsc{SparseCard} (i.e., a sparse approximate reduction followed by solving a graph $s$-$t$ cut problem), to solve the sequence of hypergraph $s$-$t$ cut problems needed to minimize a ratio objective of the form~\eqref{Localcond}. We specifically consider the following three alternative cut penalties in our experiments:
\begin{align*}
\text{ Clique penalty: } & f_e(A) = (|e|-1)^{-1}|A| |e \backslash A| \\
\text{ Square root penalty: } & f_e(A) = \sqrt{\min\{|A|, |e \backslash A|\}}\\
\text{ Sublinear power ($x^{0.9}$) penalty: } & f_e(A) = (\min\{|A|, |e \backslash A|\})^{0.9}.
\end{align*}
All of these penalties require $O(|e|)$ CB-gadgets to model exactly using previous reduction techniques. These penalties satisfy the normalizing condition that $f_e(A) = 1$ when $|A| = 1$, though if desired they could be scaled by a constant. Scaling the clique penalty $|A||e \backslash A|$ by $(|e|-1)^{-1}$ has several other additional desirable properties that have led to its use in numerous other hypergraph clustering frameworks~\cite{veldt2020paramcc,kumar2020modularity,chodrow2021generative}.

\textbf{Experimental setup and parameter settings.}
We follow the same experimental setup as our previous work~\cite{veldt2020hyperlocal} for detecting localized clusters in the Stackoverflow hypergraph. We focus on the same set of 45 local clusters, all of which are question topics involving between 2,000 and 10,000 nodes.
For each cluster, we generate a random seed set by selecting 5\% of the nodes in the target cluster uniformly at random, and then add neighboring nodes to the seed set to grow it into a larger input set $Z$ to use for~\textsc{HyperLocal} (see~\cite{veldt2020hyperlocal} for details). We set $\delta = 5000$ for the $\delta$-linear hyperedge cut function and set the locality parameters to be $\beta = 1.0$ for all experiments. With this setup, using~\textsc{HyperLocal} with the $\delta$-linear penalty will then reproduce our original experimental setup~\cite{veldt2020hyperlocal}. Our goal is to show how using \textsc{SparseCard} leads to fast and often improved results for alternative penalties that could not previously been used.

\textbf{Summary of experimental findings.}
In the main text we showed average runtimes and F1 scores for cluster detection across the 45 clusters using four hyperedge cut penalties. Using $\delta$-linear penalty corresponds to a previous approach. Using the clique, square root, and sublinear power penalties demonstrate the utility of \textsc{SparseCard}. We highlight three main takeaways from these results in Table~\ref{tab:stack} in the main text.
\begin{enumerate}
	\item \emph{\textsc{SparseCard} leads to improved results that would not be possible with previous techniques}. In particular, the clique and sublinear power penalty often obtain better F1 scores than the $\delta$-linear penalty, but we would not be able to use these without our sparse reduction techniques. Given that the hypergraph has thousands of hyperedges with over a thousand nodes, using $O(|e|^2)$ edges to model these splitting penalties for each $e \in E$ is infeasible. Furthermore, we previously demonstrated that discarding all hyperedges above 50 nodes and performing an exact clique expansion leads to poor detection results (and even so the reduced graph is quite dense)~\cite{veldt2020hyperlocal}.
	\item \emph{Approximate reductions lead to significantly improved runtimes, while still approximating the original hyperedge cut function extremely well.} We saw almost no difference in F1 scores when using $\varepsilon = 1.0$, $\varepsilon = 0.1$, and $\varepsilon = 0.01$, although larger $\varepsilon$ values led to much faster runtimes. This matches our observation in image segmentation experiments that for a parameter $\varepsilon$, \textsc{SparseCard} will tend to provide much better than just a $(1+\varepsilon)$ approximation in practice.
	\item \emph{Our approximate reductions are efficient and useful even for the hardest functions to model.} Theorem~\ref{thm:tight} in the main text showed that the square root function exhibits worst case behavior in terms of the number of CB-gadgets needed to approximately model it. Nevertheless, our results on the Stackoverflow hypergraph indicate that we can obtain approximations and runtimes for this penalty that are nearly as fast as other penalties. For the Stackoverflow dataset, this penalty does not perform particularly well in terms of F1 scores, but this shows us \textsc{SparseCard} can provide an efficient and useful way to model any concave cardinality penalty that one may encounter in different applications.
\end{enumerate}

\textbf{Extended experimental results.}
In order to further confirm the three main findings highlighted above, we provide extended results on the Stackoverflow hypergraph. Results in the main text are shown for two values of $\varepsilon$, a single seed set for each cluster, and were obtained by running experiments on a laptop with 8GB of RAM. In order to run a larger number of experiments, we additionally ran experiments on a machine with 4 x 18-core, 3.10 GHz Intel Xeon gold processors with 1.5 TB RAM. We generated 10 different randomly selected seed sets for each cluster, and ran each method for each of the ten seed sets on each cluster. We used three different approximation parameters for the three alternative penalties, $\varepsilon \in \{1.0, 0.1, 0.01\}$. This amounts to solving 450 localized clustering experiments using ten different methods.

In Figure~\ref{fig:runtimes} we show the change in runtime and the change in F1 score that results from using different values of $\varepsilon$ for each alternative hyperedge cut function. Using a larger values $\varepsilon = 1.0$ leads to significantly faster runtimes, but virtually indistinguishable F1 scores. In Figure~\ref{fig:meanplots} we show the mean F1 score for each cluster (across the ten random seed sets) obtained by the $\delta$-linear penalty and the three alternative penalties when $\varepsilon = 1.0$. The clique penalty obtained the highest average F1 score on 26 clusters, the $\delta$-linear obtained the highest average score on 10 clusters, and the sublinear penalty obtained the best average score on the remaining 9 clusters. In Tables~\ref{tab:stackmore1} through~\ref{tab:stackmore4} we give more detailed results for each individual cluster.

\begin{figure}[t]
	\centering
	\includegraphics[width = .9\linewidth]{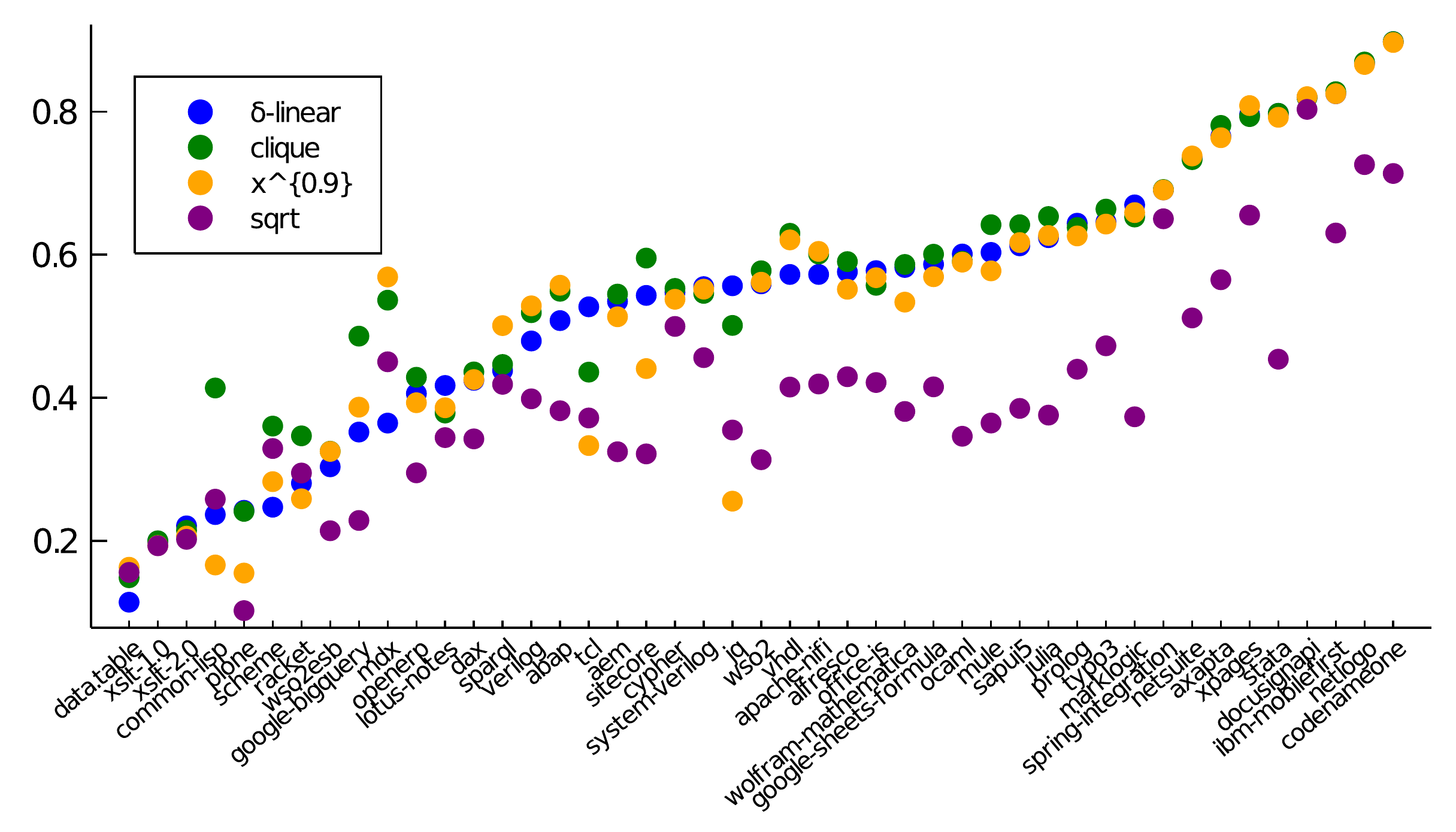}
	\caption{Average F1 score obtained for each localized clustering using 4 different hyperedge cut penalties. For clique, $x^{0.9}$, and the square root penalties, we used an approximate reduction with $\varepsilon = 1.0$. The clique penalty had the highest average F1 score on 26 clusters, the $\delta$-linear had the highest on 10 clusters, and $x^{0.9}$ had the highest average score on the remaining 10 clusters.}
	\label{fig:meanplots}
\end{figure}

\begin{figure}[t]
	\subfloat[Clique Runtime]{\includegraphics[width = .495\linewidth]{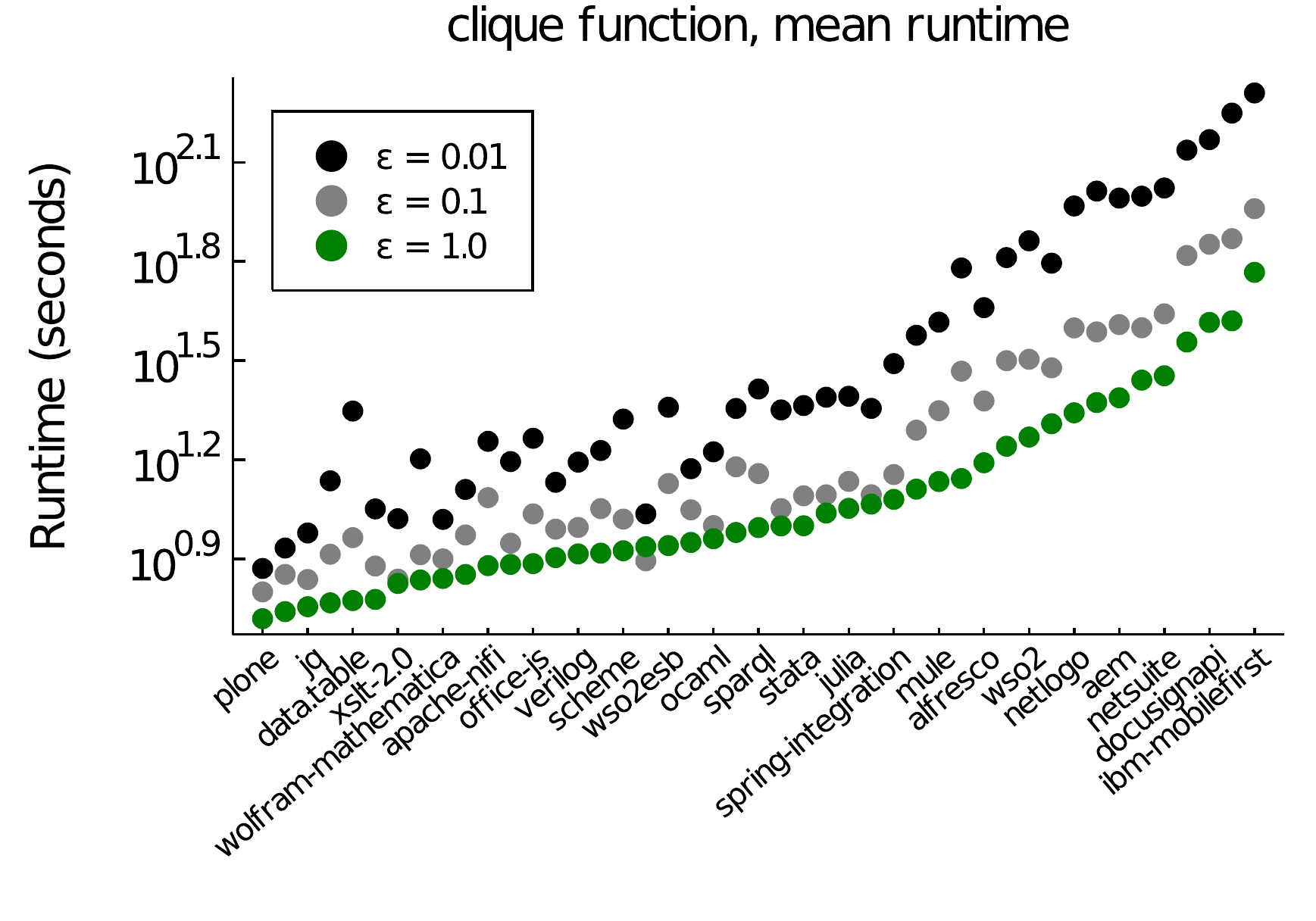}}\hfill
	\subfloat[Clique F1 scores]{\includegraphics[width =  .495\linewidth]{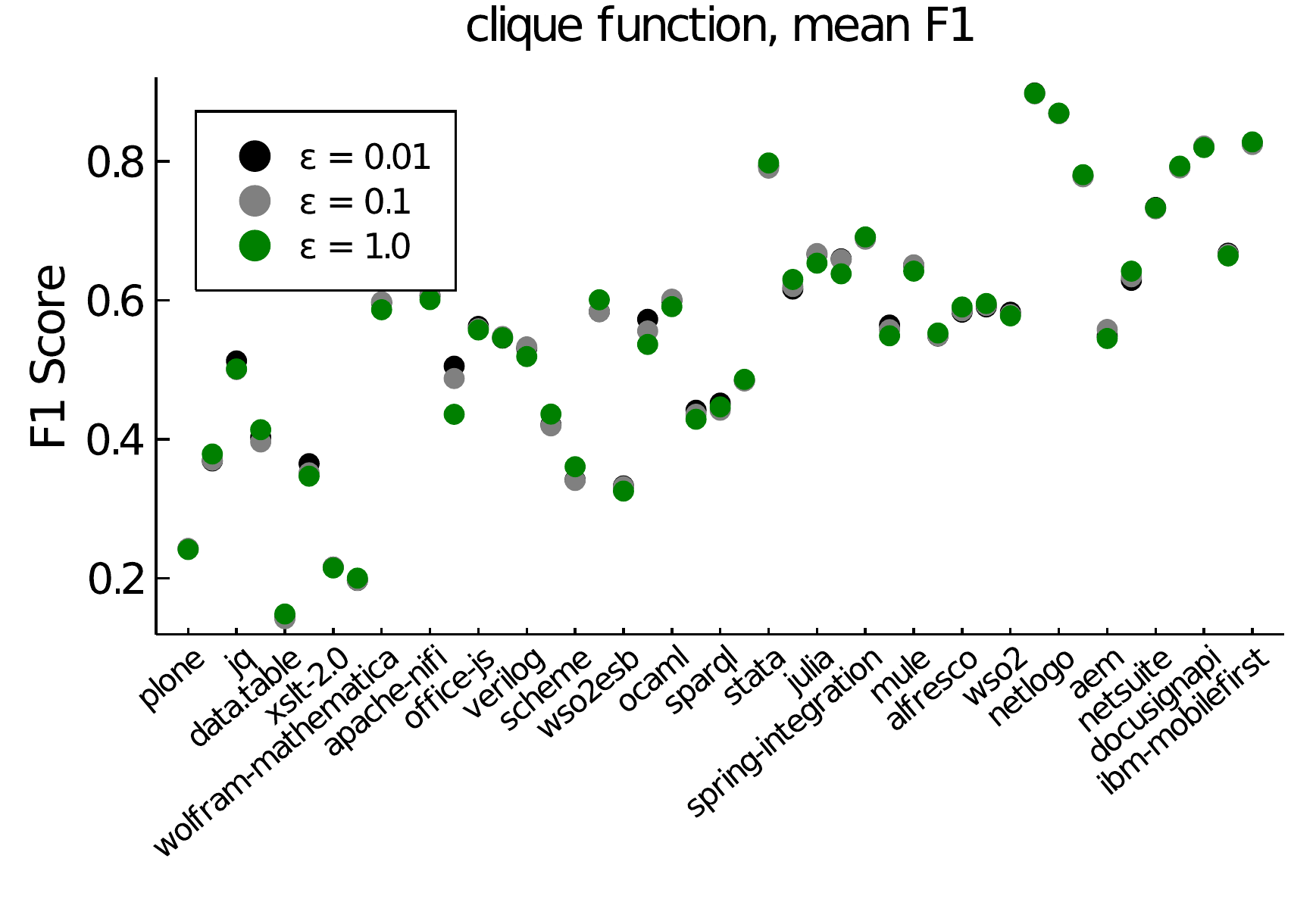}}\\
	\subfloat[$x^{0.9}$ Runtime]{\includegraphics[width = .495\linewidth]{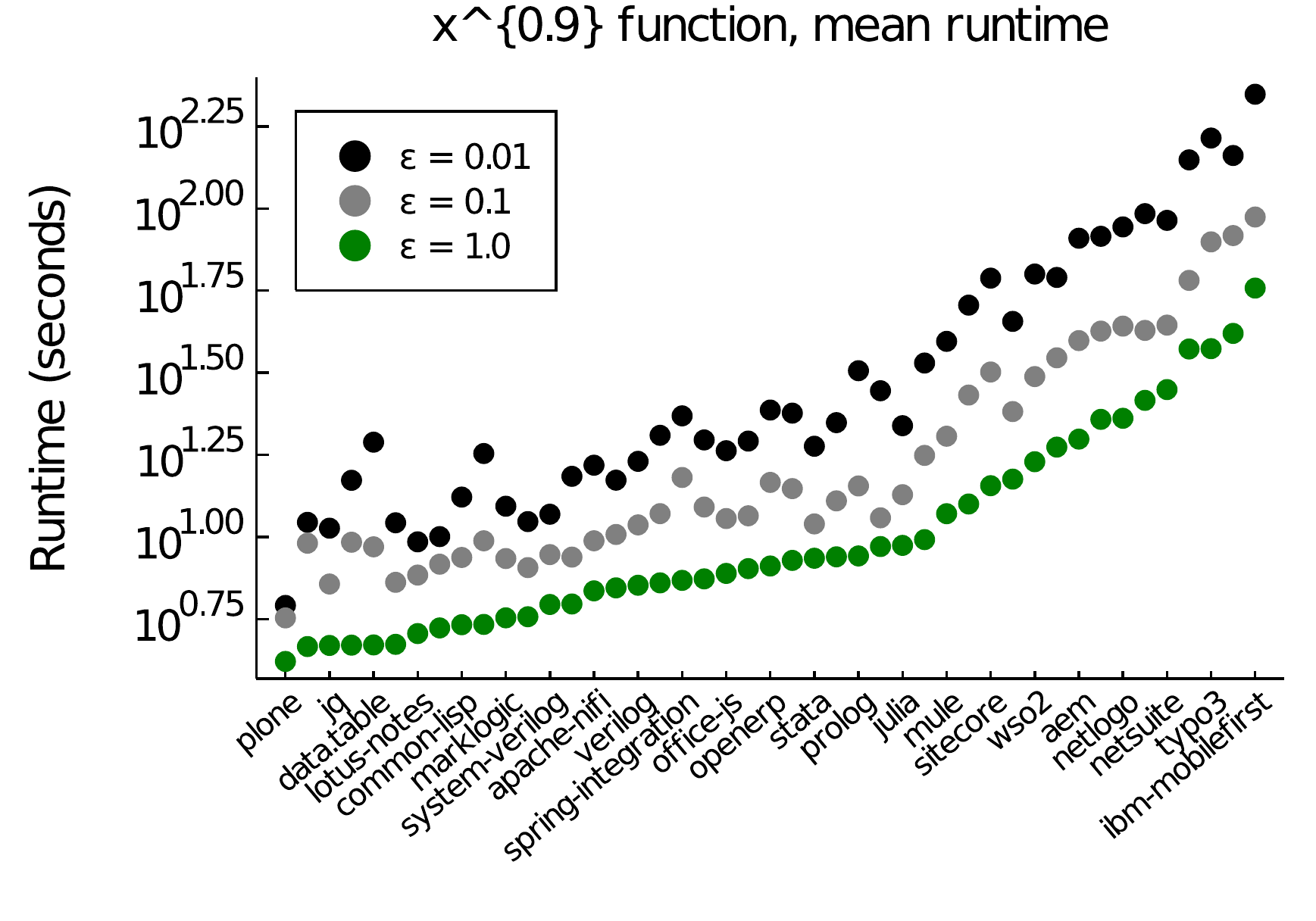}}\hfill
	\subfloat[$x^{0.9}$ F1 scores]{\includegraphics[width =  .495\linewidth]{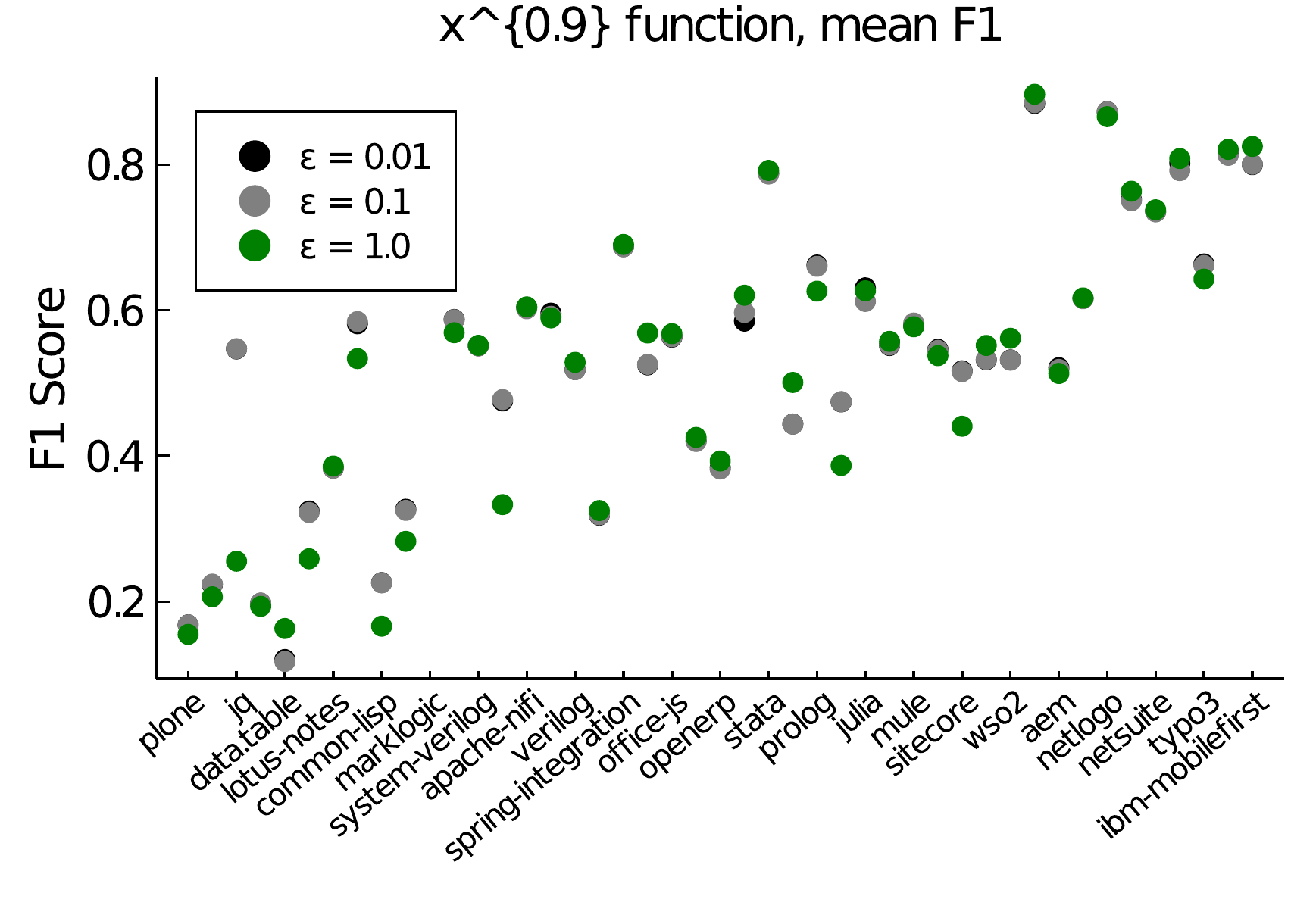}}\\
	\subfloat[Sqrt Runtime]{\includegraphics[width = .495\linewidth]{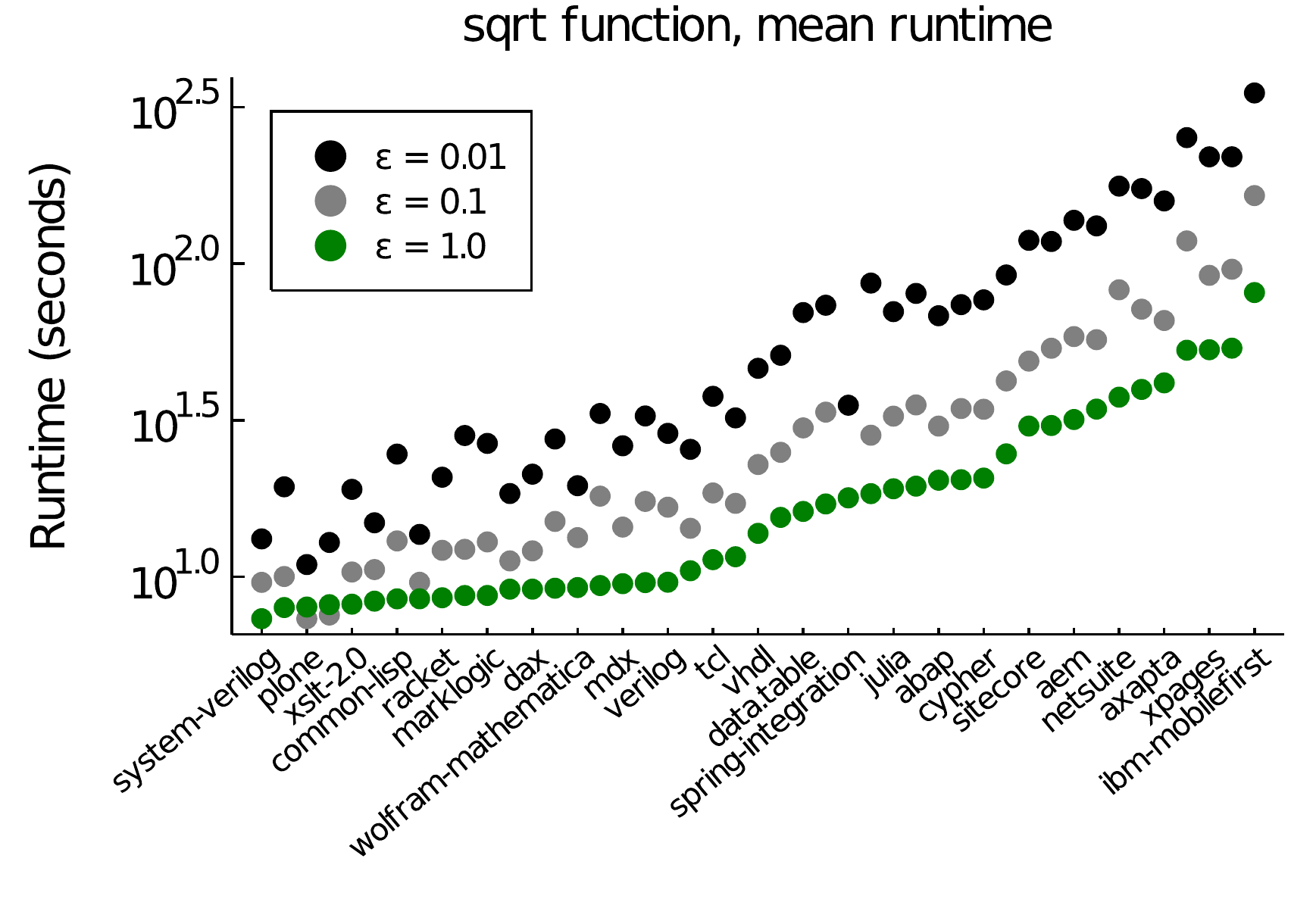}}\hfill
	\subfloat[Sqrt F1 scores]{\includegraphics[width =  .495\linewidth]{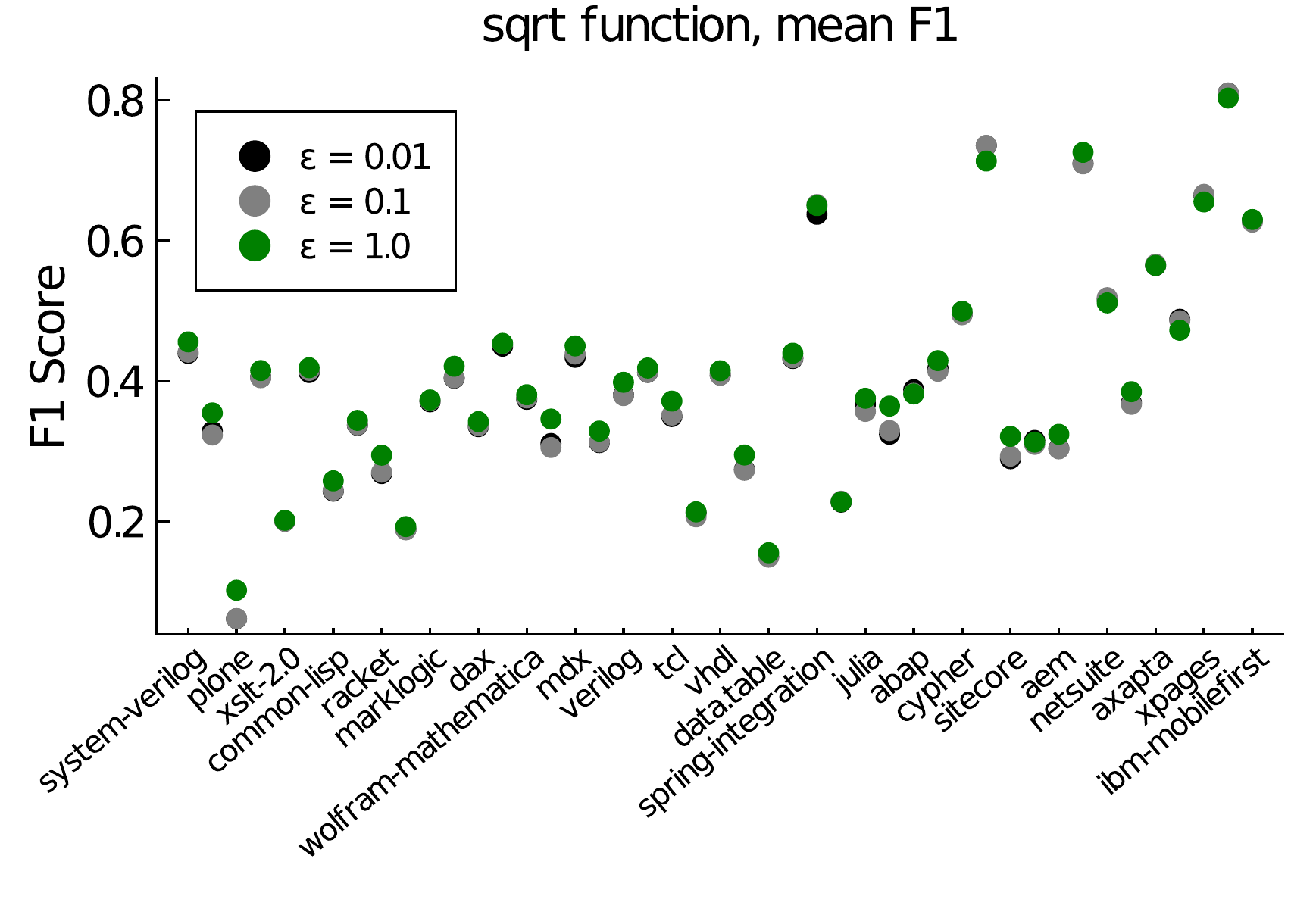}}\\
	\caption{Differences in runtimes and F1 scores when using different $\varepsilon$ values when approximating  three hyperedge cut penalties. Using a larger $\varepsilon$ provides a significant runtime savings with virtually no affect on F1 scores. Clusters have been re-arranged in the horizontal axis for each hyperedge cut penalty for easier visualization.}
	\label{fig:runtimes}
\end{figure}

\begin{table}[t]
	\caption{Runtime and F1 scores for detecting local clusters from size 2018 to size 2536 in a Stackoverflow question hypergraph using HyperLocal~\cite{veldt2020hypercuts} + \textsc{SparseCard}. For each cluster, we generated 10 random seed sets by randomly sampling 5\% of the cluster, and ran HyperLocal + \textsc{SparseCard} with five different hyperedge cut penalty functions on each seed set. We report standard deviation across the ten seed sets for each cluster. Without \textsc{SparseCard}, it would not be possible to run the experiment using the \emph{clique}, \textit{$x^{0.9}$}, or \textit{sqrt} cut penalties, as existing exact graph reduction strategies produce a graph that is far too dense. The \emph{\# Best} column indicates the number of times, out of the ten seed sets, that using the given penalty function leads to the highest F1 score for each cluster. We highlight the best results in bold. In some cases, answers are the same to within the reported 3 significant figures, but the bolded number highlights which method had slightly better unrounded F1 scores.}
	\label{tab:stackmore1}
	\centering
	\begin{tabular}{l l l l l l}
		\toprule
		Cluster & Size & Penalty & F1 & Runtime  &\# Best \\
		\midrule system-verilog & 2018 & $\delta$-linear &\textbf{0.555}  \f{$\pm0.02$} & \textbf{5.9} \f{$\pm3.8$} & \textbf{7} \\
		& & clique &{0.546}  \f{$\pm0.02$} & 8.0 \f{$\pm2.0$} & 0 \\
		& & sqrt &{0.456}  \f{$\pm0.05$} & 7.4 \f{$\pm2.2$} & 0 \\
		& & $x^{0.9}$ &{0.552}  \f{$\pm0.01$} & 6.2 \f{$\pm0.8$} & 3 \\
		\midrule abap & 2056 & $\delta$-linear &{0.508}  \f{$\pm0.12$} & 11.8 \f{$\pm9.0$} & 3 \\
		& & clique &{0.549}  \f{$\pm0.06$} & 12.9 \f{$\pm7.0$} & 0 \\
		& & sqrt &{0.382}  \f{$\pm0.07$} & 20.4 \f{$\pm12.5$} & 0 \\
		& & $x^{0.9}$ &\textbf{0.557}  \f{$\pm0.06$} & \textbf{9.8} \f{$\pm5.4$} & \textbf{7} \\
		\midrule axapta & 2074 & $\delta$-linear &{0.766}  \f{$\pm0.05$} & \textbf{22.4} \f{$\pm13.3$} & 0 \\
		& & clique &\textbf{0.781}  \f{$\pm0.04$} & 23.6 \f{$\pm15.2$} & \textbf{9} \\
		& & sqrt &{0.565}  \f{$\pm0.09$} & 41.6 \f{$\pm31.2$} & 0 \\
		& & $x^{0.9}$ &{0.764}  \f{$\pm0.05$} & 26.1 \f{$\pm16.8$} & 1 \\
		\midrule apache-nifi & 2092 & $\delta$-linear &{0.572}  \f{$\pm0.06$} & \textbf{6.7} \f{$\pm2.1$} & 1 \\
		& & clique &{0.601}  \f{$\pm0.07$} & 7.6 \f{$\pm1.6$} & 3 \\
		& & sqrt &{0.419}  \f{$\pm0.13$} & 8.4 \f{$\pm1.6$} & 0 \\
		& & $x^{0.9}$ &\textbf{0.605}  \f{$\pm0.07$} & 6.9 \f{$\pm1.2$} & \textbf{6} \\
		\midrule google-sheets-formula & 2142 & $\delta$-linear &{0.587}  \f{$\pm0.19$} & \textbf{4.9} \f{$\pm1.0$} & 1 \\
		& & clique &\textbf{0.601}  \f{$\pm0.19$} & 8.7 \f{$\pm2.0$} & \textbf{7} \\
		& & sqrt &{0.415}  \f{$\pm0.13$} & 8.2 \f{$\pm2.6$} & 0 \\
		& & $x^{0.9}$ &{0.569}  \f{$\pm0.17$} & 5.7 \f{$\pm1.0$} & 2 \\
		\midrule office-js & 2402 & $\delta$-linear &\textbf{0.578}  \f{$\pm0.04$} & \textbf{7.2} \f{$\pm2.0$} & \textbf{9} \\
		& & clique &{0.557}  \f{$\pm0.05$} & 7.7 \f{$\pm1.2$} & 0 \\
		& & sqrt &{0.421}  \f{$\pm0.05$} & 9.1 \f{$\pm2.2$} & 0 \\
		& & $x^{0.9}$ &{0.568}  \f{$\pm0.04$} & 7.8 \f{$\pm1.1$} & 1 \\
		\midrule netlogo & 2520 & $\delta$-linear &{0.868}  \f{$\pm0.01$} & \textbf{20.8} \f{$\pm12.1$} & 2 \\
		& & clique &\textbf{0.869}  \f{$\pm0.01$} & 22.0 \f{$\pm15.2$} & \textbf{6} \\
		& & sqrt &{0.726}  \f{$\pm0.15$} & 34.3 \f{$\pm20.5$} & 0 \\
		& & $x^{0.9}$ &{0.866}  \f{$\pm0.02$} & 23.0 \f{$\pm14.3$} & 2 \\
		\midrule dax & 2528 & $\delta$-linear &{0.424}  \f{$\pm0.03$} & \textbf{7.3} \f{$\pm2.3$} & 0 \\
		& & clique &\textbf{0.436}  \f{$\pm0.04$} & 8.3 \f{$\pm1.0$} & \textbf{10} \\
		& & sqrt &{0.342}  \f{$\pm0.05$} & 9.1 \f{$\pm2.4$} & 0 \\
		& & $x^{0.9}$ &{0.425}  \f{$\pm0.04$} & 8.0 \f{$\pm1.1$} & 0 \\
		\midrule plone & 2536 & $\delta$-linear &\textbf{0.243}  \f{$\pm0.14$} & \textbf{2.8} \f{$\pm0.3$} & \textbf{7} \\
		& & clique &{0.241}  \f{$\pm0.14$} & 5.2 \f{$\pm0.9$} & 3 \\
		& & sqrt &{0.102}  \f{$\pm0.04$} & 8.0 \f{$\pm2.0$} & 0 \\
		& & $x^{0.9}$ &{0.155}  \f{$\pm0.1$} & 4.2 \f{$\pm0.5$} & 0 \\
		\bottomrule \end{tabular}
\end{table} 
\begin{table}[t]
	\caption{Results for Stackoverflow clusters from size 2574 to size 3506.}
	\label{tab:stackmore2}
	\centering
	\begin{tabular}{l l l l l l}
		\toprule
		Cluster & Size & Penalty & F1 & Runtime  &\# Best \\
		\midrule netsuite & 2574 & $\delta$-linear &{0.735}  \f{$\pm0.06$} & 28.8 \f{$\pm18.7$} & 4 \\
		& & clique &{0.733}  \f{$\pm0.07$} & 28.5 \f{$\pm17.1$} & 1 \\
		& & sqrt &{0.512}  \f{$\pm0.12$} & 37.5 \f{$\pm24.1$} & 0 \\
		& & $x^{0.9}$ &\textbf{0.738}  \f{$\pm0.07$} & \textbf{28.1} \f{$\pm18.1$} & \textbf{5} \\
		\midrule jq & 2596 & $\delta$-linear &\textbf{0.557}  \f{$\pm0.14$} & \textbf{3.8} \f{$\pm1.0$} & \textbf{10} \\
		& & clique &{0.501}  \f{$\pm0.14$} & 5.7 \f{$\pm0.9$} & 0 \\
		& & sqrt &{0.355}  \f{$\pm0.09$} & 8.0 \f{$\pm1.7$} & 0 \\
		& & $x^{0.9}$ &{0.255}  \f{$\pm0.13$} & 4.7 \f{$\pm0.7$} & 0 \\
		\midrule marklogic & 2612 & $\delta$-linear &\textbf{0.67}  \f{$\pm0.14$} & \textbf{4.7} \f{$\pm1.2$} & \textbf{7} \\
		& & clique &{0.653}  \f{$\pm0.15$} & 7.1 \f{$\pm1.6$} & 2 \\
		& & sqrt &{0.373}  \f{$\pm0.12$} & 8.7 \f{$\pm1.8$} & 0 \\
		& & $x^{0.9}$ &{0.659}  \f{$\pm0.14$} & 5.7 \f{$\pm0.4$} & 1 \\
		\midrule alfresco & 2694 & $\delta$-linear &{0.576}  \f{$\pm0.13$} & \textbf{13.0} \f{$\pm7.6$} & 1 \\
		& & clique &\textbf{0.59}  \f{$\pm0.11$} & 15.5 \f{$\pm7.9$} & \textbf{9} \\
		& & sqrt &{0.429}  \f{$\pm0.09$} & 20.4 \f{$\pm9.3$} & 0 \\
		& & $x^{0.9}$ &{0.552}  \f{$\pm0.18$} & 15.0 \f{$\pm8.5$} & 0 \\
		\midrule lotus-notes & 2877 & $\delta$-linear &\textbf{0.417}  \f{$\pm0.06$} & \textbf{4.4} \f{$\pm1.3$} & \textbf{9} \\
		& & clique &{0.379}  \f{$\pm0.06$} & 5.5 \f{$\pm0.9$} & 0 \\
		& & sqrt &{0.344}  \f{$\pm0.06$} & 8.5 \f{$\pm1.5$} & 0 \\
		& & $x^{0.9}$ &{0.386}  \f{$\pm0.06$} & 5.1 \f{$\pm0.6$} & 1 \\
		\midrule stata & 2907 & $\delta$-linear &{0.798}  \f{$\pm0.05$} & \textbf{6.6} \f{$\pm1.4$} & 1 \\
		& & clique &\textbf{0.798}  \f{$\pm0.05$} & 10.0 \f{$\pm2.0$} & \textbf{5} \\
		& & sqrt &{0.454}  \f{$\pm0.05$} & 9.2 \f{$\pm2.4$} & 0 \\
		& & $x^{0.9}$ &{0.792}  \f{$\pm0.05$} & 8.6 \f{$\pm1.6$} & 4 \\
		\midrule wso2esb & 2912 & $\delta$-linear &{0.303}  \f{$\pm0.05$} & 7.8 \f{$\pm6.6$} & 2 \\
		& & clique &\textbf{0.325}  \f{$\pm0.09$} & 8.7 \f{$\pm5.0$} & \textbf{4} \\
		& & sqrt &{0.214}  \f{$\pm0.08$} & 11.6 \f{$\pm5.9$} & 0 \\
		& & $x^{0.9}$ &{0.325}  \f{$\pm0.1$} & \textbf{7.3} \f{$\pm2.7$} & 4 \\
		\midrule mdx & 3007 & $\delta$-linear &{0.365}  \f{$\pm0.12$} & 8.4 \f{$\pm4.2$} & 0 \\
		& & clique &{0.536}  \f{$\pm0.04$} & 8.9 \f{$\pm3.3$} & 2 \\
		& & sqrt &{0.45}  \f{$\pm0.04$} & 9.5 \f{$\pm3.4$} & 0 \\
		& & $x^{0.9}$ &\textbf{0.569}  \f{$\pm0.03$} & \textbf{7.5} \f{$\pm1.8$} & \textbf{8} \\
		\midrule docusignapi & 3348 & $\delta$-linear &{0.82}  \f{$\pm0.01$} & \textbf{37.8} \f{$\pm12.9$} & 1 \\
		& & clique &{0.82}  \f{$\pm0.01$} & 41.3 \f{$\pm16.5$} & 0 \\
		& & sqrt &{0.803}  \f{$\pm0.06$} & 53.7 \f{$\pm21.8$} & \textbf{6} \\
		& & $x^{0.9}$ &\textbf{0.821}  \f{$\pm0.0$} & 41.7 \f{$\pm12.8$} & 3 \\
		\midrule xslt-2.0 & 3426 & $\delta$-linear &\textbf{0.221}  \f{$\pm0.08$} & 4.8 \f{$\pm1.4$} & \textbf{7} \\
		& & clique &{0.215}  \f{$\pm0.08$} & 6.7 \f{$\pm1.3$} & 0 \\
		& & sqrt &{0.202}  \f{$\pm0.06$} & 8.2 \f{$\pm1.5$} & 0 \\
		& & $x^{0.9}$ &{0.207}  \f{$\pm0.06$} & \textbf{4.6} \f{$\pm0.6$} & 3 \\
		\midrule wolfram-mathematica & 3478 & $\delta$-linear &{0.582}  \f{$\pm0.04$} & \textbf{4.5} \f{$\pm1.0$} & 2 \\
		& & clique &\textbf{0.586}  \f{$\pm0.05$} & 6.9 \f{$\pm0.7$} & 3 \\
		& & sqrt &{0.381}  \f{$\pm0.04$} & 9.2 \f{$\pm2.1$} & 0 \\
		& & $x^{0.9}$ &{0.534}  \f{$\pm0.13$} & 5.3 \f{$\pm0.8$} & \textbf{5} \\
		\midrule aem & 3506 & $\delta$-linear &{0.535}  \f{$\pm0.07$} & 25.4 \f{$\pm27.1$} & 1 \\
		& & clique &\textbf{0.545}  \f{$\pm0.1$} & 24.4 \f{$\pm21.8$} & \textbf{7} \\
		& & sqrt &{0.324}  \f{$\pm0.14$} & 31.8 \f{$\pm23.1$} & 0 \\
		& & $x^{0.9}$ &{0.513}  \f{$\pm0.12$} & \textbf{19.9} \f{$\pm17.5$} & 2 \\
		\bottomrule \end{tabular}
\end{table} 
\begin{table}[t]
	\caption{Results for Stackoverflow clusters from size 3620 to size 5476.}
	\label{tab:stackmore3}
	\centering
	\begin{tabular}{l l l l l l}
		\toprule
		Cluster & Size & Penalty & F1 & Runtime  &\# Best \\
		\midrule sparql & 3620 & $\delta$-linear &{0.438}  \f{$\pm0.03$} & \textbf{8.6} \f{$\pm4.2$} & 0 \\
		& & clique &{0.446}  \f{$\pm0.09$} & 9.9 \f{$\pm1.8$} & 3 \\
		& & sqrt &{0.419}  \f{$\pm0.04$} & 10.5 \f{$\pm3.6$} & 1 \\
		& & $x^{0.9}$ &\textbf{0.501}  \f{$\pm0.06$} & 8.7 \f{$\pm2.3$} & \textbf{6} \\
		\midrule codenameone & 3677 & $\delta$-linear &{0.898}  \f{$\pm0.02$} & \textbf{15.6} \f{$\pm9.7$} & \textbf{4} \\
		& & clique &\textbf{0.898}  \f{$\pm0.02$} & 20.4 \f{$\pm12.2$} & 4 \\
		& & sqrt &{0.713}  \f{$\pm0.11$} & 24.7 \f{$\pm19.9$} & 0 \\
		& & $x^{0.9}$ &{0.897}  \f{$\pm0.02$} & 18.8 \f{$\pm10.9$} & 2 \\
		\midrule vhdl & 4135 & $\delta$-linear &{0.572}  \f{$\pm0.05$} & \textbf{8.2} \f{$\pm6.4$} & 1 \\
		& & clique &\textbf{0.63}  \f{$\pm0.03$} & 10.9 \f{$\pm6.8$} & \textbf{5} \\
		& & sqrt &{0.415}  \f{$\pm0.03$} & 13.8 \f{$\pm5.9$} & 0 \\
		& & $x^{0.9}$ &{0.621}  \f{$\pm0.04$} & 8.5 \f{$\pm5.1$} & 4 \\
		\midrule verilog & 4153 & $\delta$-linear &{0.479}  \f{$\pm0.02$} & \textbf{6.4} \f{$\pm1.7$} & 1 \\
		& & clique &{0.519}  \f{$\pm0.04$} & 8.2 \f{$\pm1.2$} & 1 \\
		& & sqrt &{0.398}  \f{$\pm0.07$} & 9.6 \f{$\pm2.2$} & 0 \\
		& & $x^{0.9}$ &\textbf{0.528}  \f{$\pm0.05$} & 7.1 \f{$\pm1.0$} & \textbf{8} \\
		\midrule racket & 4188 & $\delta$-linear &{0.28}  \f{$\pm0.11$} & \textbf{4.0} \f{$\pm0.7$} & 1 \\
		& & clique &\textbf{0.347}  \f{$\pm0.14$} & 6.0 \f{$\pm1.0$} & \textbf{7} \\
		& & sqrt &{0.295}  \f{$\pm0.13$} & 8.6 \f{$\pm1.9$} & 1 \\
		& & $x^{0.9}$ &{0.259}  \f{$\pm0.14$} & 4.7 \f{$\pm0.6$} & 1 \\
		\midrule xslt-1.0 & 4480 & $\delta$-linear &{0.2}  \f{$\pm0.05$} & 5.5 \f{$\pm1.4$} & \textbf{5} \\
		& & clique &\textbf{0.2}  \f{$\pm0.05$} & 6.9 \f{$\pm1.2$} & 2 \\
		& & sqrt &{0.193}  \f{$\pm0.04$} & 8.7 \f{$\pm2.5$} & 1 \\
		& & $x^{0.9}$ &{0.193}  \f{$\pm0.04$} & \textbf{4.7} \f{$\pm0.8$} & 2 \\
		\midrule common-lisp & 4632 & $\delta$-linear &{0.237}  \f{$\pm0.11$} & \textbf{4.3} \f{$\pm0.8$} & 0 \\
		& & clique &\textbf{0.414}  \f{$\pm0.09$} & 5.9 \f{$\pm0.7$} & \textbf{10} \\
		& & sqrt &{0.258}  \f{$\pm0.07$} & 8.5 \f{$\pm2.4$} & 0 \\
		& & $x^{0.9}$ &{0.166}  \f{$\pm0.11$} & 5.4 \f{$\pm0.6$} & 0 \\
		\midrule sapui5 & 4746 & $\delta$-linear &{0.612}  \f{$\pm0.09$} & 23.8 \f{$\pm23.3$} & 2 \\
		& & clique &\textbf{0.642}  \f{$\pm0.06$} & 27.6 \f{$\pm25.0$} & \textbf{7} \\
		& & sqrt &{0.385}  \f{$\pm0.12$} & 39.7 \f{$\pm29.0$} & 0 \\
		& & $x^{0.9}$ &{0.617}  \f{$\pm0.08$} & \textbf{22.8} \f{$\pm19.9$} & 1 \\
		\midrule xpages & 4818 & $\delta$-linear &{0.796}  \f{$\pm0.05$} & \textbf{35.5} \f{$\pm19.0$} & 2 \\
		& & clique &{0.793}  \f{$\pm0.06$} & 36.0 \f{$\pm19.4$} & 2 \\
		& & sqrt &{0.655}  \f{$\pm0.13$} & 53.1 \f{$\pm29.9$} & 1 \\
		& & $x^{0.9}$ &\textbf{0.808}  \f{$\pm0.06$} & 37.4 \f{$\pm21.2$} & \textbf{5} \\
		\midrule openerp & 4884 & $\delta$-linear &{0.406}  \f{$\pm0.1$} & 8.4 \f{$\pm5.3$} & 2 \\
		& & clique &\textbf{0.429}  \f{$\pm0.14$} & 9.6 \f{$\pm4.1$} & \textbf{6} \\
		& & sqrt &{0.295}  \f{$\pm0.09$} & 15.5 \f{$\pm5.9$} & 0 \\
		& & $x^{0.9}$ &{0.393}  \f{$\pm0.16$} & \textbf{8.2} \f{$\pm3.0$} & 2 \\
		\midrule julia & 5295 & $\delta$-linear &{0.624}  \f{$\pm0.08$} & \textbf{8.9} \f{$\pm2.7$} & 3 \\
		& & clique &\textbf{0.653}  \f{$\pm0.05$} & 11.3 \f{$\pm3.0$} & 3 \\
		& & sqrt &{0.376}  \f{$\pm0.05$} & 19.1 \f{$\pm4.2$} & 0 \\
		& & $x^{0.9}$ &{0.627}  \f{$\pm0.07$} & 9.4 \f{$\pm1.4$} & \textbf{4} \\
		\midrule sitecore & 5476 & $\delta$-linear &{0.543}  \f{$\pm0.18$} & 19.9 \f{$\pm19.1$} & 1 \\
		& & clique &\textbf{0.595}  \f{$\pm0.13$} & 17.4 \f{$\pm13.2$} & \textbf{9} \\
		& & sqrt &{0.322}  \f{$\pm0.13$} & 30.3 \f{$\pm21.3$} & 0 \\
		& & $x^{0.9}$ &{0.441}  \f{$\pm0.24$} & \textbf{14.3} \f{$\pm12.7$} & 0 \\
		\bottomrule \end{tabular}
\end{table} 
\begin{table}[t]
	\caption{Results for Stackoverflow clusters from size 5536 to size 9859.}
	\label{tab:stackmore4}
	\centering
	\begin{tabular}{l l l l l l}
		\toprule
		Cluster & Size & Penalty & F1 & Runtime  &\# Best \\
		\midrule ibm-mobilefirst & 5536 & $\delta$-linear &{0.825}  \f{$\pm0.02$} & \textbf{52.1} \f{$\pm37.6$} & 2 \\
		& & clique &\textbf{0.828}  \f{$\pm0.02$} & 58.5 \f{$\pm42.0$} & \textbf{4} \\
		& & sqrt &{0.63}  \f{$\pm0.09$} & 80.9 \f{$\pm48.0$} & 0 \\
		& & $x^{0.9}$ &{0.825}  \f{$\pm0.03$} & 57.3 \f{$\pm38.4$} & 4 \\
		\midrule ocaml & 5590 & $\delta$-linear &\textbf{0.601}  \f{$\pm0.02$} & \textbf{6.6} \f{$\pm1.4$} & \textbf{4} \\
		& & clique &{0.591}  \f{$\pm0.04$} & 9.1 \f{$\pm1.9$} & 3 \\
		& & sqrt &{0.346}  \f{$\pm0.04$} & 9.4 \f{$\pm2.7$} & 0 \\
		& & $x^{0.9}$ &{0.59}  \f{$\pm0.04$} & 7.0 \f{$\pm0.8$} & 3 \\
		\midrule spring-integration & 5635 & $\delta$-linear &{0.691}  \f{$\pm0.0$} & 7.7 \f{$\pm2.4$} & 0 \\
		& & clique &\textbf{0.691}  \f{$\pm0.0$} & 12.0 \f{$\pm3.3$} & \textbf{8} \\
		& & sqrt &{0.65}  \f{$\pm0.06$} & 17.9 \f{$\pm5.4$} & 1 \\
		& & $x^{0.9}$ &{0.69}  \f{$\pm0.0$} & \textbf{7.4} \f{$\pm1.3$} & 1 \\
		\midrule tcl & 5752 & $\delta$-linear &\textbf{0.527}  \f{$\pm0.12$} & \textbf{5.2} \f{$\pm0.9$} & \textbf{8} \\
		& & clique &{0.436}  \f{$\pm0.15$} & 7.6 \f{$\pm1.5$} & 2 \\
		& & sqrt &{0.372}  \f{$\pm0.06$} & 11.4 \f{$\pm3.1$} & 0 \\
		& & $x^{0.9}$ &{0.333}  \f{$\pm0.2$} & 6.3 \f{$\pm1.1$} & 0 \\
		\midrule mule & 5940 & $\delta$-linear &{0.603}  \f{$\pm0.15$} & 11.8 \f{$\pm11.3$} & 1 \\
		& & clique &\textbf{0.642}  \f{$\pm0.12$} & 13.6 \f{$\pm9.1$} & \textbf{9} \\
		& & sqrt &{0.365}  \f{$\pm0.05$} & 19.5 \f{$\pm9.7$} & 0 \\
		& & $x^{0.9}$ &{0.577}  \f{$\pm0.16$} & \textbf{11.8} \f{$\pm8.7$} & 0 \\
		\midrule scheme & 6411 & $\delta$-linear &{0.247}  \f{$\pm0.1$} & 6.0 \f{$\pm1.4$} & 2 \\
		& & clique &\textbf{0.36}  \f{$\pm0.07$} & 8.4 \f{$\pm2.3$} & \textbf{5} \\
		& & sqrt &{0.329}  \f{$\pm0.04$} & 9.6 \f{$\pm1.8$} & 2 \\
		& & $x^{0.9}$ &{0.283}  \f{$\pm0.11$} & \textbf{5.4} \f{$\pm1.4$} & 1 \\
		\midrule typo3 & 6414 & $\delta$-linear &{0.646}  \f{$\pm0.09$} & 49.3 \f{$\pm37.6$} & 1 \\
		& & clique &\textbf{0.664}  \f{$\pm0.07$} & 41.7 \f{$\pm29.1$} & \textbf{7} \\
		& & sqrt &{0.473}  \f{$\pm0.1$} & 52.9 \f{$\pm28.2$} & 0 \\
		& & $x^{0.9}$ &{0.643}  \f{$\pm0.08$} & \textbf{37.5} \f{$\pm24.3$} & 2 \\
		\midrule cypher & 6735 & $\delta$-linear &{0.547}  \f{$\pm0.02$} & 14.3 \f{$\pm2.8$} & 2 \\
		& & clique &\textbf{0.553}  \f{$\pm0.01$} & 13.9 \f{$\pm1.9$} & \textbf{7} \\
		& & sqrt &{0.5}  \f{$\pm0.03$} & 20.7 \f{$\pm5.7$} & 0 \\
		& & $x^{0.9}$ &{0.538}  \f{$\pm0.04$} & \textbf{12.6} \f{$\pm1.7$} & 1 \\
		\midrule wso2 & 7760 & $\delta$-linear &{0.559}  \f{$\pm0.07$} & 17.7 \f{$\pm12.0$} & 2 \\
		& & clique &\textbf{0.577}  \f{$\pm0.05$} & 18.6 \f{$\pm13.7$} & \textbf{6} \\
		& & sqrt &{0.313}  \f{$\pm0.07$} & 30.4 \f{$\pm12.9$} & 0 \\
		& & $x^{0.9}$ &{0.561}  \f{$\pm0.07$} & \textbf{17.0} \f{$\pm13.0$} & 2 \\
		\midrule data.table & 8108 & $\delta$-linear &{0.114}  \f{$\pm0.01$} & 5.0 \f{$\pm0.9$} & 0 \\
		& & clique &{0.148}  \f{$\pm0.04$} & 5.9 \f{$\pm1.3$} & \textbf{7} \\
		& & sqrt &{0.156}  \f{$\pm0.02$} & 16.2 \f{$\pm4.2$} & 0 \\
		& & $x^{0.9}$ &\textbf{0.163}  \f{$\pm0.02$} & \textbf{4.7} \f{$\pm0.2$} & 3 \\
		\midrule prolog & 9086 & $\delta$-linear &\textbf{0.644}  \f{$\pm0.02$} & 11.7 \f{$\pm3.7$} & \textbf{7} \\
		& & clique &{0.638}  \f{$\pm0.03$} & 11.6 \f{$\pm1.5$} & 1 \\
		& & sqrt &{0.44}  \f{$\pm0.03$} & 17.1 \f{$\pm3.8$} & 0 \\
		& & $x^{0.9}$ &{0.626}  \f{$\pm0.04$} & \textbf{8.8} \f{$\pm1.1$} & 2 \\
		\midrule google-bigquery & 9859 & $\delta$-linear &{0.352}  \f{$\pm0.19$} & \textbf{7.7} \f{$\pm2.8$} & 1 \\
		& & clique &\textbf{0.486}  \f{$\pm0.11$} & 10.0 \f{$\pm2.9$} & \textbf{8} \\
		& & sqrt &{0.228}  \f{$\pm0.07$} & 18.4 \f{$\pm3.6$} & 0 \\
		& & $x^{0.9}$ &{0.387}  \f{$\pm0.18$} & 9.4 \f{$\pm1.9$} & 1 \\
		\bottomrule \end{tabular}
\end{table}

\end{document}